\documentclass{article} 
\usepackage{iclr2026_conference,times}

\usepackage{amsmath,amsfonts,bm}

\def\eqref#1{equation~\ref{#1}}

\def\1{\bm{1}}

\DeclareMathAlphabet{\mathsfit}{\encodingdefault}{\sfdefault}{m}{sl}
\SetMathAlphabet{\mathsfit}{bold}{\encodingdefault}{\sfdefault}{bx}{n}

\usepackage{multirow}
\usepackage{enumitem}
\usepackage{longtable}
\usepackage{pifont}
\usepackage{marvosym} 
\usepackage[colorlinks,
linkcolor=red,
anchorcolor=red,
citecolor=brown
]{hyperref}      
\usepackage{url}
\usepackage{comment}
\usepackage{marvosym} 
\usepackage{booktabs}
\usepackage{amssymb}
\usepackage{makecell} 
\usepackage{wrapfig} 
\usepackage{bbm}
\usepackage{subcaption} 
\usepackage{colortbl} 
\usepackage{amsthm} 
\usepackage{minitoc}
\usepackage{amsthm}
\usepackage[ruled,vlined]{algorithm2e} 
\usepackage{graphicx}  
\newtheorem{definition}{Definition}

\newtheorem{theorem}{Theorem} 
\usepackage{algpseudocode} 
\usepackage{booktabs}
\usepackage{arydshln} 
\renewcommand{\cite}{\citep}  
\usepackage{caption} 
\usepackage[utf8]{inputenc}
\newcommand{\meanstd}[2]{%
	$#1_{
		\scriptstyle\pm #2%
	}$%
}
\definecolor{tan}{rgb}{0.937, 0.902, 0.843}
\usepackage{upgreek}
\DeclareUnicodeCharacter{0301}{\'{}} 
\iclrfinalcopy
\title{Learning Adaptive Distribution Alignment with Neural Characteristic Function for Graph Domain Adaptation
}

\author{
	\parbox{\textwidth}{
		Wei Chen\textsuperscript{1}, \hspace*{20pt}
		Xingyu Guo\textsuperscript{1}, \hspace*{20pt}
		Shuang Li\textsuperscript{1,\dag}, \hspace*{20pt}
		Zhao Zhang\textsuperscript{2}, \hspace*{20pt}
		Yan Zhong\textsuperscript{3},  \hspace*{20pt} \\ [0.6em]
		Fuzhen Zhuang\textsuperscript{1,4,\dag}, \hspace*{30pt}
		Deqing Wang\textsuperscript{2}
	}
	\\[1em]
	\textsuperscript{1}School of Artificial Intelligence, Beihang University, Beijing, China \\
	\textsuperscript{2}School of Computer Science and Engineering, Beihang University, Beijing, China \\
	\textsuperscript{3}School of Mathematical Sciences, Peking University, Beijing, China\\
	\textsuperscript{4}Zhongguancun Laboratory, Beijng, China \\
	\texttt{\{chenwei23,shuangliai,zhuangfuzhen\}@buaa.edu.cn}
}

\begin{document}
\doparttoc 
\faketableofcontents 

\maketitle
\insert\footins{\noindent\footnotesize\textsuperscript{$\dagger$}Corresponding authors: Shuang Li and Fuzhen Zhuang.}
\begin{abstract}
Graph Domain Adaptation (GDA) transfers knowledge from labeled source graphs to unlabeled target graphs but is challenged by complex, multi-faceted distributional shifts. Existing methods attempt to reduce distributional shifts by aligning manually selected graph elements (e.g., node attributes or structural statistics), which typically require manually designed graph filters to extract relevant features before alignment. However, such approaches are inflexible: they rely on scenario-specific heuristics, and struggle when dominant discrepancies vary across transfer scenarios.
To address these limitations,
we propose \textbf{ADAlign}, an Adaptive Distribution Alignment framework for GDA. Unlike heuristic methods, ADAlign requires no manual specification of alignment criteria. It automatically identifies the most relevant discrepancies in each transfer and aligns them jointly, capturing the interplay between attributes, structures, and their dependencies. This makes ADAlign flexible, scenario-aware, and robust to diverse and dynamically evolving shifts.
To enable this adaptivity, we introduce the Neural Spectral Discrepancy (NSD), a theoretically principled parametric distance that provides a unified view of cross-graph shifts. NSD leverages neural characteristic function in the spectral domain to encode feature-structure dependencies of all orders, while a learnable frequency sampler adaptively emphasizes the most informative spectral components for each task via minimax paradigm.
Extensive experiments on 10 datasets and 16 transfer tasks show that ADAlign not only outperforms state-of-the-art baselines but also achieves efficiency gains with lower memory usage and faster training.
Code is available at: 
\href{https://github.com/gxingyu/ADAlign}{\textcolor{blue}{\texttt{https://github.com/gxingyu/ADAlign}}}.
\end{abstract}

\section{Introduction} 
Graph-structured data \cite{shao2024distributed,zhang2024temporal,chen2024fairgap,yuan2025hyperbolic,chen2025fairdgcl}, characterized by intricate dependencies among nodes and edges, often undergoes substantial distribution shifts across domains \cite{li2022ood,guo2024comprehensive}. Such shifts break the inductive assumptions underlying conventional graph neural networks (GNNs) \cite{liu2021indigo}, leading to severe performance degradation when key attributes or topological relations vary. To mitigate this problem, graph domain adaptation (GDA) \cite{wu2023non,liupairwise} has emerged as a promising paradigm, aiming to learn domain-invariant representations that transfer knowledge from source to target domains and  improve generalization across heterogeneous graphs.

In GDA, a popular line of work is cross-graph feature alignment \cite{chen2019graph,you2023graph,liupairwise,liu2024rethinking,shao2024distributed,chen2025smoothness}, which directly minimizes distributional distances, such as KL divergence, Maximum Mean Discrepancy (MMD), or Wasserstein distance \cite{panaretos2019statistical}, to reduce gaps between source and target graphs, offering both interpretability and empirical effectiveness. Early approaches typically employed a GNN to encode node attributes and structural features into unified embeddings, which were then aligned across domains using these distance metrics.
However, alignment at this coarse global level often overlooks fine-grained distributional shifts, resulting in limited adaptation and increased risk of negative transfer.

Recently, several studies \cite{fanghomophily, fang2025benefits, yangdisentangled} have decomposed graph distribution shifts into distinct components (e.g., node attributes, degree distributions, or graph homophily) and designed specialized adaptation strategies for each, offering a more fine-grained understanding of the factors hindering cross-domain generalization. 
\begin{wrapfigure}{r}{0.45\textwidth}
	\vspace{-10pt}
	\centering
	\setlength{\fboxrule}{0.pt}
	\setlength{\fboxsep}{0.pt}
	\fbox{
		\includegraphics[width=0.45\textwidth]{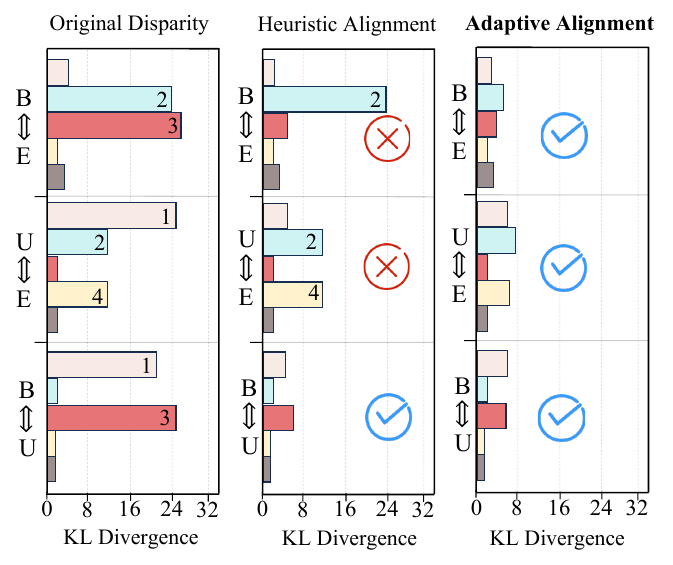}
	}
	\caption{Distributional disparities across different scenarios. Each dimension (e.g., 1, 2) corresponds to a PCA-reduced feature, and the value represents the KL divergence between pairs of the top 5 features, computed for each transfer task, e.g., KL(B1$||$U1).}
	\label{intro}
\end{wrapfigure}
Nevertheless, these approaches often rely on heuristic strategies that first \textbf{manually design} graph filters to extract relevant features (such as node attributes or structural statistics) before performing alignment for separable distributional shifts.
 This process overlooks a fundamental property of real-world graphs: the dominant sources of discrepancy vary across transfer scenarios, with their interplay being inherently complex and unpredictable. 
For example, as shown in Figure~\ref{intro}, we visualize the distributional discrepancies across the three Airport (USA, Brazil, Europe) \cite{ribeiro2017struc2vec} transfer scenarios, focusing on the top 5 features. We observe that the dominant sources of discrepancy vary across these scenarios. For instance, in the B-E scenario, features 2 and 3 exhibit the largest discrepancies, whereas in the U-E scenario, the features with the most notable shifts simultaneously change to 1, 2, and 4. Existing methods, however, often rely on fixed strategies that align only a limited set of these features,  which makes them inadequate for capturing the full range of distributional shifts and  the interplay between multiple factors. This underscores the need for adaptive approaches that can jointly capture and prioritize shifting dimensions, facilitating better alignment across transfer scenarios.

 To address these challenges, we introduce \textbf{ADAlign}, an Adaptive Distribution Alignment framework for GDA. Unlike heuristic methods that rely on predefined alignment criteria, ADAlign automatically detects the most relevant sources of discrepancy in each transfer scenario and aligns them jointly, capturing the complex interplay between graph attributes, structures, and their dependencies. This makes ADAlign flexible and capable of handling a wide range of dynamic shifts across domains.
 To enable this adaptivity, we propose the Neural Spectral Discrepancy (NSD), a novel parametric distance specifically designed for graphs. NSD uses neural characteristic function \cite{ushakov1999selected,ghahramani2024fundamentals} in the spectral domain to capture feature-structure dependencies at multiple levels. A learnable frequency sampler adaptively prioritizes the most informative spectral components for each task within a minimax optimization framework. This approach offers a unified and tractable framework to quantify composite shifts, enabling alignment that is both theoretically grounded and adaptively sensitive to the nature of the transfer scenario.

Our contributions are threefold:

\textbf{(i)} We propose an adaptive framework that automatically identifies and aligns the most relevant sources of discrepancy in each transfer scenario, enabling flexible and task-aware adaptation to dynamic graph shifts.

\textbf{(ii)} We introduce NSD, a novel parametric distance for graphs that leverages neural characteristic function in the spectral domain to capture multi-level feature-structure dependencies, providing a unified approach to quantifying distributional shifts.

\textbf{(iii)} Extensive experiments on 10 datasets and 16 transfer tasks show that ADAlign outperforms state-of-the-art baselines, significantly reducing memory consumption and training time.

\section{Related Work}
\textbf{Graph Domain Adaptation.} 
GDA \cite{zhuang2020comprehensive,li2025out} aims to transfer knowledge from a labeled source graph to an unlabeled or sparsely labeled target graph, mitigating the challenges introduced by distribution shifts in graph-structured data. Early approaches \cite{ma2019gcan,wu2020unsupervised,zhang2021m2guda,dai2022graph,pang2023sa,liu2024rethinking} typically employed graph neural networks (GNNs) to encode node attributes and structural information into low-dimensional embeddings, and then aligned source and target representations using statistical discrepancy measures such as KL divergence \cite{menendez1997jensen,johnson2001symmetrizing}, MMD \cite{arbel2019maximum}, or Wasserstein distance \cite{panaretos2019statistical}. Although progressive, these methods struggle to capture fine-grained, property-specific differences, often leading to suboptimal alignment and degraded performance when target graphs exhibit complex distributional shifts.
  Subsequent studies \cite{you2023graph,liupairwise,fanghomophily,fang2025benefits} have dissected graph distribution shifts into factors such as attribute misalignment, degree divergence, and variations in homophily, inspiring targeted adaptation strategies tailored to specific types of shifts. Despite these advances, most existing methods still rely on heuristic metrics, which remain insufficient for modeling the entangled, multi-level nature of graph discrepancies. In this work, we propose an adaptive distribution-alignment framework that holistically accounts for composite shifts.

\textbf{Characteristic Function.} Characteristic Function (CF) \cite{polya1949remarks,yu2004empirical,ghahramani2024fundamentals} establish a one-to-one correspondence between probability distributions and their Fourier-domain representations \cite{ozaktas1995fractional}, providing compact yet complete characterizations that have motivated applications across machine learning. They have been widely used in areas such as generative modeling \cite{li2017mmd,li2020reciprocal,li2022reciprocal,lou2023pcf}, where they measure gaps between real and synthetic data, and knowledge distillation \cite{wang2025dataset}, where their compactness enables efficient transfer of knowledge from teacher to student models.
Despite these advances, the use of CF in domain adaptation remains limited, especially for graph-structured data. Given the complexity of graph distributions, where attribute, structural, and topological shifts are often entangled, CF offer a promising foundation for developing flexible adaptation frameworks.


\section{Preliminaries}
\textbf{Notation}.
We consider an undirected graph $\mathcal{G} = \{\mathcal{V}, \mathcal{E}, \mathcal{A}, \mathcal{X}, \mathcal{Y}\}$, 
where $\mathcal{V}$ and $\mathcal{E}$ denote the sets of nodes and edges, respectively. 
The graph structure is encoded by the adjacency matrix $\mathcal{A} \in \mathbb{R}^{N \times N}$, 
where $N = |\mathcal{V}|$ is the number of nodes. 
Each entry $\mathcal{A}_{ij} = 1$ indicates the presence of an edge between nodes $i$ and $j$, 
while $\mathcal{A}_{ij} = 0$ indicates no edge. 
Each node $v \in \mathcal{V}$ is associated with a feature vector, 
organized in the feature matrix $\mathcal{X} \in \mathbb{R}^{N \times d}$, 
where $d$ is the feature dimension. 
The node labels are represented by $\mathcal{Y} \in \mathbb{R}^{N \times C}$, 
with each node belonging to one of $C$ classes. 
Following prior works \cite{liupairwise, huang2024can, fang2025benefits}, we consider the unsupervised setting with two graphs: a source graph \( \mathcal{G}^\mathsf{S} = \{\mathcal{V}^\mathsf{S}, \mathcal{E}^\mathsf{S}, \mathcal{A}^\mathsf{S}, \mathcal{X}^\mathsf{S}, \mathcal{Y}^\mathsf{S}\} \) and a target graph \( \mathcal{G}^\mathsf{T} = \{\mathcal{V}^\mathsf{T}, \mathcal{E}^\mathsf{T}, \mathcal{A}^\mathsf{T}, \mathcal{X}^\mathsf{T}\} \), and focus on the node classification task \cite{xiao2022graph}, where the goal is to train a GNN classifier \( h \) that accurately predicts node labels \( \mathcal{Y}^\mathsf{T} \) for the nodes in graph \( \mathcal{G}^\mathsf{T} \).

\textbf{Characteristic Function.}
The CF \cite{waller1995obtaining} is a fundamental tool in probability theory that represent a random variable in the frequency domain and provide a convenient way to analyze and compare distributions \cite{xie2022joint}. For a random vector $\mathbf{X}$, the CF is defined as  
\begin{equation}
	\Psi_{\mathbf{X}}(\mathbf{t})
	= \mathbb{E}\left[e^{i \mathbf{t}^\top \mathbf{X}}\right]
	= \int_{\bm{x}} e^{i \mathbf{t}^\top \bm{x}} dF_{\mathbf{X}}(\bm{x}),
\end{equation}
where $\mathbf{t} \in \mathbb{R}^m$ is the frequency vector, $\bm{x} \in \mathbb{R}^m$ is a realization of $\mathbf{X}$, $F_{\mathbf{X}}(\bm{x})$ is the cumulative distribution function \cite{drew2000computing},
$\top$ denotes transpose operator and $i=\sqrt{-1}$ is the imaginary unit.  
To capture oscillatory strength and directional shifts, the CF can be expressed in polar form:  
\begin{equation}
	\Psi_{\mathbf{X}}(\mathbf{t})
	= \left|\Psi_{\mathbf{X}}(\mathbf{t})\right|
	\, e^{i \uptheta_{\mathbf{X}}(\mathbf{t})},
	\label{deco}
\end{equation}
where $\lvert \Psi_{\mathbf{X}}(\mathbf{t}) \rvert$ denotes amplitude and $\uptheta_{\mathbf{X}}(\mathbf{t})$ the phase. This amplitude-phase representation provides intuitive insights into structural and geometric properties of distributions \cite{chen2021amplitude}.  
\section{Methodology}
In this section, we introduce the detailed workings of ADAlign. As illustrated in Figure~\ref{model}, we first present the characteristic function transformation in \S \ref{4.1}, followed by NSD in  \S \ref{4.2}. In  \S \ref{4.3}, we describe the adaptive frequency sampler for NSD, and finally, in  \S \ref{4.4} we introduce the minimax  framework that integrates these components for effective graph domain adaptation.
\begin{figure}[t]
	\centering
	\setlength{\fboxrule}{0.pt}
	\setlength{\fboxsep}{0.pt}
	\fbox{
		\includegraphics[width=0.99\linewidth]{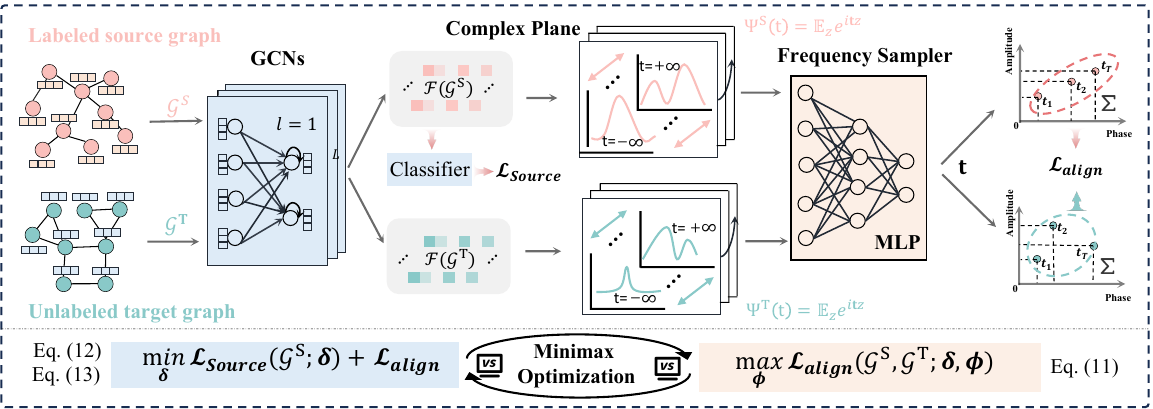}
	}
	\caption{Overall architecture of our proposed ADAlign Framework.
		The model consists of a source and target branch, each processed by a GNN.
		A frequency sampler adaptively selects spectral components via a learnable parameter $\bm{\phi}$, while the classifier is optimized with source labels. 
		The alignment objective $\mathcal{L}_{\text{align}}$ is adversarially maximized with respect to $\bm{\phi}$ and minimized with respect to the model parameters $\bm{\delta}$, enabling dynamic and efficient distribution matching in the spectral domain.
	}
	\label{model}
\end{figure}
\subsection{Characteristic Function Transformation}
\label{4.1}
ADAlign begins with learning node-level embeddings of the source and target graphs. 
Formally, we denote the source graph as $\mathcal{G}^\mathsf{S} = \{\mathcal{V}^\mathsf{S}, \mathcal{E}^\mathsf{S}, \mathcal{A}^\mathsf{S}, \mathcal{X}^\mathsf{S}, \mathcal{Y}^\mathsf{S}\}$
and a target graph $\mathcal{G}^\mathsf{T} = \{\mathcal{V}^\mathsf{T}, \mathcal{E}^\mathsf{T}, \mathcal{A}^\mathsf{T}, \mathcal{X}^\mathsf{T}\}$. 
We adopt the GNN encoder $\mathcal{F}$ \cite{liu2024rethinking} to map each graph into node-level embeddings:
\begin{equation}
	Z^\mathsf{S} = \mathcal{F}(\mathcal{G}^\mathsf{S}), \quad Z^\mathsf{T} = \mathcal{F}(\mathcal{G}^\mathsf{T}),
\end{equation}
where $Z^\mathsf{S} \in \mathbb{R}^{N_\mathsf{S} \times d}$ and $Z^\mathsf{T} \in \mathbb{R}^{N_\mathsf{T} \times d}$ are the $d$-dimensional node embeddings of nodes in the source and target graphs, respectively. $N_\mathsf{S}$ and $N_\mathsf{T}$ denote corresponding node numbers.  These embeddings serve as compact representations that integrate both attribute and structural information.

Existing approaches \cite{zhuang2020comprehensive,pang2023sa,yangdisentangled,fanghomophily,chen2025smoothness} address domain shifts by minimizing statistical distances, e.g., MMD aligns low-order moments while KL divergence matches full densities. Yet, on graph data these criteria often fail: moment matching ignores higher-order dependencies, and density-based measures are unstable in high dimensions. To overcome this, we leverage characteristic function \cite{waller1995obtaining}, which provide complete and tractable representations in the Fourier domain. We extend them to graphs by defining the characteristic functions of the source and target embeddings $Z^\mathsf{S}$ and $Z^\mathsf{T}$:
\begin{equation}
	\Psi^\mathsf{S}(\mathbf{t}) = \mathbb{E}_{z \sim \text{P}(Z^\mathsf{S})} \!\left[ e^{i \mathbf{t}^\top z} \right], 
	\quad
	\Psi^\mathsf{T}(\mathbf{t}) = \mathbb{E}_{z \sim \text{P}(Z^\mathsf{T})} \!\left[ e^{i \mathbf{t}^\top z} \right],
\end{equation}
where \( \text{P}(Z^\mathsf{S}) \) and \( \text{P}(Z^\mathsf{T}) \) denote the empirical distributions of node embeddings in the source and target graphs, respectively. Here, \( \mathbf{t} \) is a frequency vector in the Fourier domain, \( i=\sqrt{-1} \) is the imaginary unit, and $\top$ denotes transposition. The key advantage of CF is that they uniquely characterize probability distributions, offering a complete representation that goes beyond low-order moments or density estimates. The following theorems \cite{wang2025dataset} formalize these properties:
\begin{theorem}[Convergence] \cite{levy1959esquisse,brezis1983relation}
	\label{th1}
	Let $\{Z_k\}_{k=1}^\infty$ be a sequence of random vectors in $\mathbb{R}^d$ with characteristic functions $\Psi_{Z_k}(\mathbf{t})$. 
	If for every $\mathbf{t}\in\mathbb{R}^d$ the pointwise limit 
	$\lim_{k\to\infty} \Psi_{Z_k}(\mathbf{t}) = \Psi(\mathbf{t})$
	exists and $\Psi(\mathbf{t})$ is continuous at the origin, then $\Psi$ is the characteristic function of some random vector $Z$, and we have 
	$Z_k \Rightarrow Z \quad \text{as } k \to \infty.$
\end{theorem}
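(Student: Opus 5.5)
The plan is the standard route to Lévy's continuity theorem: first prove that the sequence is tight, then use Prokhorov's theorem and the uniqueness of characteristic functions to identify the limit.

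\textbf{Step 1 (tightness from continuity at the origin).} I will use the classical truncation inequality. For a random variable $Y$ on $\mathbb{R}$ with characteristic function $\psi$ and any $u>0$,
\begin{equation}
	\mathbb{P}\left(|Y|>2/u\right)\le \frac{1}{u}\int_{-u}^{u}\left(1-\psi(s)\right)ds .
\end{equation}
This follows from Fubini's theorem and the elementary bound $1-\frac{\sin(ux)}{ux}\ge \frac12$ for $|ux|\ge 2$.

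I will apply it to each one-dimensional projection $Y_k=\mathbf{e}_j^\top Z_k$, $j=1,\dots,d$, whose characteristic function is $s\mapsto\Psi_{Z_k}(s\mathbf{e}_j)$. Since $|\Psi_{Z_k}|\le 1$, dominated convergence gives
\begin{equation}
	\frac1u\int_{-u}^u\left(1-\Psi_{Z_k}(s\mathbf{e}_j)\right)ds \longrightarrow \frac1u\int_{-u}^u\left(1-\Psi(s\mathbf{e}_j)\right)ds .
\end{equation}
Because $\Psi(0)=\lim_k\Psi_{Z_k}(0)=1$ and $\Psi$ is continuous at $0$, the right-hand side is smaller than $\varepsilon/d$ for small enough $u$. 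Hence for all $k$ beyond some $k_0$, each coordinate exceeds $2/u$ with probability at most $\varepsilon/d$. Finitely many initial terms are tight individually. A union bound over the $d$ coordinates then gives $\sup_k\mathbb{P}(\|Z_k\|_\infty>M)\le\varepsilon$ for a suitable $M$, so the family $\{Z_k\}$ is tight.

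\textbf{Step 2 (subsequential limits).} By Prokhorov's theorem, every subsequence of $\{Z_k\}$ has a further subsequence converging in distribution to some random vector $Z'$. Since $x\mapsto e^{i\mathbf{t}^\top x}$ is bounded and continuous, the characteristic functions converge along that sub-subsequence to $\Psi_{Z'}$. Pointwise they also converge to $\Psi$, so $\Psi_{Z'}=\Psi$. In particular, $\Psi$ is the characteristic function of a random vector $Z$.

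\textbf{Step 3 (full convergence).} By the uniqueness theorem for characteristic functions (Fourier inversion), every subsequential limit has the same law as $Z$. If $Z_k\not\Rightarrow Z$, there would be a bounded continuous $f$ and a subsequence along which $|\mathbb{E}f(Z_k)-\mathbb{E}f(Z)|\ge\eta>0$. By Step 2 this subsequence has a further subsequence converging to $Z$, which is a contradiction. Hence $Z_k\Rightarrow Z$.

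\textbf{Main obstacle.} The real content lies in Step 1. Continuity of $\Psi$ only at the origin must be converted into uniform control of the tails, and this is exactly where the hypothesis is used. Without it, mass can escape to infinity, as with $Z_k\sim\mathcal{N}(0,k I)$, where the pointwise limit of the characteristic functions is discontinuous at $0$. Prokhorov's theorem and the uniqueness theorem I will take as standard tools.
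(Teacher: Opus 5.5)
Your proof is correct and complete. The identity $\frac1u\int_{-u}^u(1-e^{isx})\,ds=2\bigl(1-\frac{\sin(ux)}{ux}\bigr)\ge 0$, together with your bound on $\{|ux|\ge 2\}$, gives the truncation inequality. Applying it to the coordinate projections, with dominated convergence and continuity of $\Psi$ at $0$, yields tightness. Prokhorov then supplies subsequential limits whose characteristic function must equal $\Psi$, and uniqueness plus the sub-subsequence argument gives $Z_k\Rightarrow Z$.

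The paper gives no argument for this statement. It imports it as L\'evy's classical continuity theorem by citation and uses it only as motivation for sampling more frequencies $M$. The second reference there, the Brezis--Lieb lemma on splitting $L^p$ norms under a.e.\ convergence, does not concern characteristic functions. So there is no competing route to compare against. What your write-up adds is a self-contained justification, with Prokhorov's theorem and the uniqueness theorem (the paper's Theorem 2) as the only external ingredients. It also pinpoints where continuity at the origin is genuinely needed, namely the uniform tail bound in your Step 1.
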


\begin{theorem}[Uniqueness] \cite{feuerverger1977empirical,marcus1981weak}
	Let $\mathbb{P}$ and $\mathbb{Q}$ be two random vectors with characteristic function $\Psi_\mathbb{P}(\mathbf{t})$ and $\Psi_\mathbb{Q}(\mathbf{t})$. 
	If 
	$\Psi_\mathbb{P}(\mathbf{t}) = \Psi_\mathbb{Q}(\mathbf{t}), \quad \forall \mathbf{t}\in\mathbb{R}^d,$ 
	then the two random vectors $\mathbb{P}$ and $\mathbb{Q}$ have the same probability distribution.
\end{theorem}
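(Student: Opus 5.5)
To prove the Uniqueness theorem, I would show that the characteristic function determines the integral of every bounded continuous test function against the law. Two Borel probability measures $\mathbb{P},\mathbb{Q}$ on $\mathbb{R}^d$ that integrate all such functions to the same value coincide. The cleanest route is Gaussian smoothing plus Fourier inversion, which avoids the general Lévy inversion formula for boxes. Let $\gamma_\sigma$ denote the centered Gaussian density on $\mathbb{R}^d$ with covariance $\sigma^2 I$. Its characteristic function is $e^{-\sigma^2\|\mathbf{t}\|^2/2}$, which is integrable.

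\textbf{Step 1 (smoothed densities are determined by $\Psi$).} For $\sigma>0$, the convolution $\mathbb{P}*\gamma_\sigma$ has a bounded continuous density $p_\sigma$. Its characteristic function is $\Psi_\mathbb{P}(\mathbf{t})e^{-\sigma^2\|\mathbf{t}\|^2/2}$, which is integrable because $|\Psi_\mathbb{P}|\le 1$. I would verify the inversion formula
\begin{equation*}
p_\sigma(\bm{x}) = (2\pi)^{-d}\int_{\mathbb{R}^d} e^{-i\mathbf{t}^\top \bm{x}}\,\Psi_\mathbb{P}(\mathbf{t})\,e^{-\sigma^2\|\mathbf{t}\|^2/2}\,d\mathbf{t}
\end{equation*}
directly. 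Substitute $\Psi_\mathbb{P}(\mathbf{t})=\int e^{i\mathbf{t}^\top \bm{y}}\,d\mathbb{P}(\bm{y})$ and apply Fubini, which is justified since the integrand is absolutely integrable thanks to the Gaussian factor. The inner $\mathbf{t}$-integral is then the standard Gaussian Fourier transform, which evaluates to $(2\pi)^d\gamma_\sigma(\bm{x}-\bm{y})$. The same formula holds for $\mathbb{Q}$. Hence the hypothesis $\Psi_\mathbb{P}=\Psi_\mathbb{Q}$ gives $p_\sigma=q_\sigma$ pointwise for every $\sigma>0$.

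\textbf{Step 2 (remove the smoothing).} Take any bounded continuous $f$. Write $\int f\,d(\mathbb{P}*\gamma_\sigma)=\mathbb{E}[f(X+\sigma G)]$, where $X\sim\mathbb{P}$ and $G$ is an independent standard Gaussian. As $\sigma\to 0$, $f(X+\sigma G)\to f(X)$ almost surely by continuity of $f$, and dominated convergence (with bound $\|f\|_\infty$) gives $\mathbb{E}[f(X+\sigma G)]\to\int f\,d\mathbb{P}$. Since the smoothed integrals for $\mathbb{P}$ and $\mathbb{Q}$ agree for every $\sigma$ by Step 1, we get $\int f\,d\mathbb{P}=\int f\,d\mathbb{Q}$ for all bounded continuous $f$.

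\textbf{Step 3 (conclude equality of laws).} For a closed set $C$, approximate its indicator by $f_n(\bm{x})=\max(0,1-n\,\mathrm{dist}(\bm{x},C))$. These functions decrease pointwise to $\mathbf{1}_C$, so dominated convergence yields $\mathbb{P}(C)=\mathbb{Q}(C)$. Closed sets form a $\pi$-system generating the Borel $\sigma$-algebra, so Dynkin's $\pi$-$\lambda$ theorem gives $\mathbb{P}=\mathbb{Q}$.

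The main obstacle is Step 1. One must justify the Fubini interchange and compute the Gaussian Fourier integral correctly in $d$ dimensions. This reduces to the one-dimensional identity $\int e^{-i t u}e^{-\sigma^2 t^2/2}\,dt=\sqrt{2\pi}\,\sigma^{-1}e^{-u^2/(2\sigma^2)}$, applied coordinatewise, which I would take as a standard fact. An alternative route that avoids computation is to use Theorem~\ref{th1}. However, that would show convergence rather than identification of the law, and it would still require knowing that weak limits are unique, so the direct smoothing argument is preferable.
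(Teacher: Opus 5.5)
Your proof is correct and complete. The Fubini step is justified by the Gaussian factor. The constant checks out, since $\int_{\mathbb{R}^d} e^{-i\mathbf{t}^\top\bm{u}}e^{-\sigma^2\|\mathbf{t}\|^2/2}\,d\mathbf{t}=(2\pi)^{d/2}\sigma^{-d}e^{-\|\bm{u}\|^2/(2\sigma^2)}=(2\pi)^d\gamma_\sigma(\bm{u})$. Steps 2--3 are the standard passage from smoothed laws to all Borel sets.

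The paper gives no proof of this statement. It quotes it as a classical fact, citing work on the empirical characteristic function. So your argument supplies what the paper leaves to the literature rather than competing with a proof in it.

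Relative to the usual textbook route through L\'evy's inversion formula on boxes, Gaussian smoothing avoids truncated, conditionally convergent integrals. It also avoids the bookkeeping over boxes whose faces carry no mass, at the price of one explicit Gaussian Fourier computation. A further benefit: Step 1 only sees $\Psi$ through an integral against $d\mathbf{t}$. So it shows directly, without appealing to continuity of characteristic functions, that agreement of $\Psi_\mathbb{P}$ and $\Psi_\mathbb{Q}$ for Lebesgue-almost every $\mathbf{t}$ already suffices. That is exactly the form needed to conclude that a vanishing NSD under an everywhere-positive sampling density (such as a Gaussian) forces the source and target embedding laws to coincide. The paper's use of uniqueness to motivate NSD relies on this step implicitly.

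One small correction to your closing aside: the real obstacle to using Theorem~\ref{th1} is circularity. Its standard proof identifies subsequential weak limits by means of the uniqueness theorem itself. If Theorem~\ref{th1} were granted independently, the interleaved sequence $Z_{2k}\sim\mathbb{P}$, $Z_{2k+1}\sim\mathbb{Q}$ would yield uniqueness at once. That would use only uniqueness of weak limits, which your Step 3 provides.
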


\subsection{Neural Spectral Discrepancy}
\label{4.2}
From the above analysis, probability distributions are uniquely determined by their CFs. This observation suggests that distributional discrepancies in GDA can be naturally assessed in the spectral domain by comparing the CF of the source embeddings $Z^\mathsf{S}$ and target embeddings $Z^\mathsf{T}$. Motivated by prior works on CF-based discrepancies~\cite{li2017mmd,li2020reciprocal,ansari2020characteristic}, we define the \emph{Neural Spectral Discrepancy} (NSD) as a principled metric for cross-graph alignment:
\begin{equation}
	\text{NSD}(Z^\mathsf{S}, Z^\mathsf{T})
	= \int_{\mathbf{t} \in \mathcal{T}}
	\sqrt{ \big(\Psi_{Z^\mathsf{S}}(\mathbf{t}) - \Psi_{Z^T}(\mathbf{t})\big)\big(\Psi_{Z^\mathsf{S}}^*(\mathbf{t}) - \Psi_{Z^\mathsf{T}}^*(\mathbf{t})\big)}
	dF_{\mathcal{T}}(\mathbf{t}),
\end{equation}
where $\Psi_{Z^\mathsf{S}}(\mathbf{t})$ and $\Psi_{Z^\mathsf{T}}(\mathbf{t})$ are the CFs of the source and target embeddings, $\Psi^*$ denotes complex conjugation, and $F_{\mathcal{T}}(\mathbf{t})$ is sampling distribution over the frequency domain $\mathcal{T}$. The introduction of complex conjugate ensures the integrand is always real and non-negative \cite{wang2025dataset}, since 
\begin{equation}
\bm{\ell}(\mathbf{t}) = \big(\Psi_{Z^\mathsf{S}}(\mathbf{t}) - \Psi_{Z^T}(\mathbf{t})\big)\big(\Psi_{Z^\mathsf{S}}^*(\mathbf{t}) - \Psi_{Z^\mathsf{T}}^*(\mathbf{t})\big) 
= \left|\Psi_{Z^\mathsf{S}}(\mathbf{t}) - \Psi_{Z^\mathsf{T}}(\mathbf{t})\right|^2.
\end{equation}
To make this discrepancy more interpretable and amenable to optimization \cite{ansari2020characteristic}, we further exploit the polar decomposition of the complex-valued CF in Eq.~(\ref{deco}). This allows $\bm{\ell}(\mathbf{t})$ to be expressed explicitly in terms of amplitude and phase differences:
\begin{align}
	\begin{aligned}
	\bm{\ell}(\mathbf{t}) 
	&= \lvert \Psi_{Z^\mathsf{S}}(\mathbf{t}) \rvert^2 + \lvert \Psi_{Z^\mathsf{T}}(\mathbf{t}) \rvert^2 
	- \Psi_{Z^\mathsf{S}}(\mathbf{t}) \Psi_{Z^\mathsf{T}}^*(\mathbf{t}) 
	- \Psi_{Z^\mathsf{T}}(\mathbf{t}) \Psi_{Z^\mathsf{S}}^*(\mathbf{t}) \\[6pt]
	\!\!\!=& \lvert \Psi_{Z^\mathsf{S}}(\mathbf{t}) \rvert^2 + \lvert \Psi_{Z^\mathsf{T}}(\mathbf{t}) \rvert^2 
	- \lvert \Psi_{Z^\mathsf{S}}(\mathbf{t}) \rvert \, \lvert \Psi_{Z^\mathsf{T}}(\mathbf{t}) \rvert 
	\big( 2 \cos(\uptheta_\mathsf{S}(\mathbf{t}) - \uptheta_\mathsf{T}(\mathbf{t})) \big) \\[1pt]
	=&  \lvert \Psi_{Z^\mathsf{S}}(\mathbf{t}) \rvert^2 + \lvert \Psi_{Z^\mathsf{T}}(\mathbf{t}) \rvert^2 
	- 2 \lvert \Psi_{Z^\mathsf{S}}(\mathbf{t}) \rvert \, \lvert \Psi_{Z^\mathsf{T}}(\mathbf{t}) \rvert 
	+ 2 \lvert \Psi_{Z^\mathsf{S}}(\mathbf{t}) \rvert \, \lvert \Psi_{Z^\mathsf{T}}(\mathbf{t}) \rvert 
	\big( 1 - \cos(\uptheta_\mathsf{S}(\mathbf{t}) - \uptheta_\mathsf{T}(\mathbf{t})) \big) \\[1pt]
	=&  \underbrace{\big( \lvert \Psi_{Z^\mathsf{S}}(\mathbf{t}) \rvert - \lvert \Psi_{Z^\mathsf{T}}(\mathbf{t}) \rvert \big)^2}_{\text{amplitude difference}}
	+ 2 \lvert \Psi_{Z^\mathsf{S}}(\mathbf{t}) \rvert \, \lvert \Psi_{Z^\mathsf{T}}(\mathbf{t}) \rvert 
		\underbrace{ \big( 1 - \cos(\uptheta_\mathsf{S}(\mathbf{t}) - \uptheta_\mathsf{T}(\mathbf{t})) \big)}_{\text{phase difference}}, \\[-10pt]
			\end{aligned}
\end{align}
where $\uptheta_\mathsf{S}(\mathbf{t}) $ and 
$\uptheta_\mathsf{T}(\mathbf{t})$ denote the phases of the CF, respectively.  
This decomposition highlights that NSD naturally separates graph-domain discrepancies into two complementary components: an amplitude term, which captures differences in the concentration of embeddings across frequency components (reflecting global structural variations), and a phase term, which characterizes structural misalignments such as shifts or rotations in relational patterns between graphs. Such a separation provides both interpretability and a more effective basis for optimization.
To explicitly balance these two aspects, we consider a convex combination of the two terms via parameter $\kappa$:
\begin{align}
	\begin{aligned}
	\bm{\ell}_{\kappa}(\mathbf{t}) 
=  \,\kappa \big( \lvert \Psi_{Z^\mathsf{S}}(\mathbf{t}) \rvert - \lvert \Psi_{Z^\mathsf{T}}(\mathbf{t}) \rvert \big)^2
	+ (1-\kappa) \, 2 \lvert \Psi_{Z^\mathsf{S}}(\mathbf{t}) \rvert \, \lvert \Psi_{Z^\mathsf{T}}(\mathbf{t}) \rvert 
	\big( 1 - \cos(\uptheta_\mathsf{S}(\mathbf{t}) - \uptheta_\mathsf{T}(\mathbf{t})) \big)  \,,
	\end{aligned}
	\label{ap}
\end{align}
where $\kappa \in [0,1]$ controls the trade-off between amplitude discrepancy and phase discrepancy.  
Building on this discrepancy, we define the alignment loss as:
\begin{align}
	\begin{aligned}
\mathcal{L}_{\text{align}}
= \int_{\mathbf{t} \in \mathcal{T}} \sqrt{ \bm{\ell}_{\kappa}(\mathbf{t})} \,  dF_{\mathcal{T}}(\mathbf{t}),
	\end{aligned}
\end{align}
where $F_{\mathcal{T}}(\mathbf{t})$ denotes the sampling distribution over frequency components.  
This objective integrates weighted discrepancies across spectral domain, thereby capturing global distributional differences between the source and target graphs while adaptively balancing amplitude and phase shifts.

\textbf{Remark.} The amplitude-phase decomposition is motivated by the distinct roles these components play in spectral domain adaptation. Specifically, the amplitude component governs the spectral energy distribution, encapsulating global structural invariants and low-frequency homophilic patterns. In contrast, the phase component encodes the relative positioning and correlation of frequency modes, capturing intricate relational dependencies and heterophilic irregularities. By decoupling these factors, ADAlign facilitates a more granular alignment strategy, allowing for adaptive weighting that yields a more nuanced distribution matching than traditional alignment approaches.

 \subsection{Adaptive Frequency Sampler for NSD}
 \label{4.3}
 A central challenge in spectral alignment is the choice of frequency points $\mathbf{t}$ at which the characteristic functions are evaluated. 
 Effective alignment of amplitude and phase discrepancies requires sampling informative frequencies $\mathbf{t}$. However, fixed or heuristic grids are either too sparse (missing subtle but critical shifts) or too dense (incurring redundant computation and noise). We therefore learn the frequency sampling distribution by parameterizing $F_{\mathcal{T}}$ with a density $p_{\mathcal{T}}(\mathbf{t};\bm{\phi})$ over the spectral domain $\mathcal{T}$.
 Concretely, we model $p_{\mathcal{T}}(\mathbf{t};\bm{\phi})$ as a normal scale mixture \cite{ansari2020characteristic,li2023neural,wang2025dataset}, which affords both flexibility and analytical convenience:
\begin{align}
	\begin{aligned}
 p_{\mathcal{T}}(\mathbf{t};\bm{\phi})
 = \int_{\Sigma} p_N(\mathbf{t}\mid \mathbf{0},\Sigma)\, p_{\Sigma}(\Sigma;\bm{\phi})\, d\Sigma,
 	\label{fre}
	\end{aligned}
\end{align}
 where $p_N(\mathbf{t}\mid \mathbf{0},\Sigma)$ is a Gaussian with covariance $\Sigma$, and $p_{\Sigma}(\Sigma;\bm{\phi})$ is a neural mixing distribution. Normal scale mixtures subsume common families (e.g., Gaussian, Cauchy, Student-$t$), enabling adaptive emphasis on low frequencies (global variations) or high frequencies (fine-grained shifts). We use the reparameterization trick to ensure that sampling $\mathbf{t}$ remains differentiable.
 The sampler is trained to prioritize frequencies that expose larger spectral discrepancies. Denoting the pointwise NSD contribution by $\sqrt{\bm{\ell}_{\kappa}(\mathbf{t})}$ ( Eq.~(\ref{ap})), we maximize its expectation under $p_{\mathcal{T}}$:
\begin{align}
	\begin{aligned}
 \max_{\bm{\phi}}\;\; \mathcal{L}_{\text{align}}(\bm{\phi})
 = \int_{\mathbf{t}\in\mathcal{T}} \sqrt{\bm{\ell}_{\kappa}(\mathbf{t})}\; p_{\mathcal{T}}(\mathbf{t};\bm{\phi})\, d\mathbf{t}
 \;\;\approx\;\; \frac{1}{M}\sum_{m=1}^M \sqrt{\bm{\ell}_{\kappa}(\mathbf{t}_m)},\quad \mathbf{t}_m\sim p_{\mathcal{T}}(\cdot;\bm{\phi}),
 	\label{maxi}
	\end{aligned}
\end{align}
 where $M$ is the number of sampled frequencies. As $M$ increases, the Monte Carlo approximation converges to the true expectation, and by Theorem~\ref{th1} the empirical CF becomes more accurate, ultimately leading to higher-quality distributional alignment.
 This objective shifts probability mass toward frequency regions that most contribute to source-target discrepancy, yielding an adaptive spectral sampler for NSD and a compact, high-signal set of frequencies for downstream alignment.

 \subsection{minimax Optimization Framework}
 \label{4.4}
 To achieve effective domain adaptation, ADAlign jointly optimizes two complementary objectives: (i) preserving task-specific discriminability on the source domain, and (ii) reducing cross-domain distributional gaps through spectral alignment.  
 Let $\bm{\delta}$ denote the learnable parameters of the GNN. The source objective is the standard cross-entropy loss on labeled source nodes:
  \begin{equation} 
 \min_{\bm{\delta}} \; \mathcal{L}_{\text{source}}
 = \min_{\bm{\delta}} \; \text{CE}\big( \text{GNN}_{\bm{\delta}}(\mathcal{X}^\mathsf{S}, \mathcal{A}^\mathsf{S}), \mathcal{Y}^\mathsf{S} \big),
 \label{11}
  \end{equation} 
 where $(\mathcal{X}^\mathsf{S}, \mathcal{A}^\mathsf{S}, \mathcal{Y}^\mathsf{S})$ are the source node features, adjacency matrix, and labels, and $\text{CE}(\cdot)$ denotes cross-entropy.  
 In parallel, we align the source and target graph representations in the spectral domain. Let $\bm{\phi}$ be the parameters of frequency sampler (Eq.~(\ref{fre})), then the alignment objective is:
   \begin{equation} 
 \min_{\bm{\delta}} \; \mathcal{L}_{\text{align}}
 = \int_{\mathbf{t}\in\mathcal{T}} \sqrt{\bm{\ell}_{\kappa}(\mathbf{t})}\; p_{\mathcal{T}}(\mathbf{t}; \bm{\phi}) \, d\mathbf{t}.
  \label{12}
   \end{equation} 
 Combining Eq.~(\ref{maxi}), Eq.~(\ref{11}) and Eq.~(\ref{12}), we obtain a minimax formulation:
    \begin{equation} 
 \min_{\bm{\delta}} \; \max_{\bm{\phi}} \; 
 \Big[ \mathcal{L}_{\text{source}}(\mathcal{G}^\mathsf{S}; \bm{\delta})
 + \lambda \, \mathcal{L}_{\text{align}}(\mathcal{G}^\mathsf{S}, \mathcal{G}^\mathsf{T}; \bm{\delta}, \bm{\phi}) \Big],
 \label{lambda}
    \end{equation} 
 where $\lambda$ balances the two objectives. In this framework, the GNN parameters $\bm{\delta}$ are optimized to jointly minimize the classification and alignment losses, while the sampler parameters $\bm{\phi}$ are updated adversarially to emphasize spectral regions with large domain discrepancies. 
 
 Due to space limitations, the training procedure and complexity analysis are provided in Appendix~\ref{Training Details}.
 
 \subsection{Theoretical Analysis}
 
 In this section, we provide a PAC-Bayesian analysis for proposed ADAlign framework \cite{ma2021subgroup,fanghomophily}. To evaluate the generalization performance of a deterministic GDA classifier on structured graph data, we begin by introducing the notion of margin loss on a graph.  
 
 \textbf{Margin Loss on a Graph.}  
 Consider a graph \( \mathcal{G} = (\mathcal{V}, \mathcal{E}, \mathcal{A}, \mathcal{X}, \mathcal{Y}) \), where \(\mathcal{V}\) and \(\mathcal{E}\) denote the nodes and edges, \( \mathcal{A} \) is the adjacency matrix, \( \mathcal{X} \) is the feature matrix, and \( \mathcal{Y} \) the node labels. For a classifier \( h \) and margin \( \gamma \geq 0 \), the empirical margin loss is defined as  
$
 	\tilde{\mathcal{L}}_\gamma(h) := \frac{1}{|\mathcal{V}|} \sum_{i \in \mathcal{V}} 
 	\mathbf{1}\!\left[ h_i(\mathcal{X}, \mathcal{G})[\mathcal{Y}_i] \leq \gamma + \max_{c \neq \mathcal{Y}_i} h_i(\mathcal{X}, \mathcal{G})[c] \right],
$
 where $\mathbf{1}[\cdot]$ is the indicator function and $c$ ranges over possible labels. The expected margin loss is then  
$
 	\mathcal{L}_\gamma(h) := \mathbb{E}_{i \in \mathcal{V}, Y_i \sim \text{P}(\mathcal{Y} \mid Z_i)} \tilde{\mathcal{L}}_\gamma(h),
$
 with $Y_i \sim \text{P}(\mathcal{Y} \mid Z_i)$ denoting the conditional label distribution given node embedding $Z_i$.

 \begin{theorem}[GDA Bound for Deterministic Classifiers]
 Let $\mathcal{H}$ be a hypothesis class. For any $h \in \mathcal{H}$, $\xi > 0$, and $\gamma \geq 0$, let $\mathbb{P}$ be a prior distribution on $\mathcal{H}$ that is independent of the source sample $\mathcal{V}^{\mathsf{S}}$. Then, with probability at least $1 - \rho$ over $\mathcal{Y}^{\mathsf{S}}$, for any posterior $\mathbb{Q}$ on $\mathcal{H}$ it holds that  
 \begin{equation}
 	\mathcal{L}_0^{\mathsf{T}}(h) \leq \tilde{\mathcal{L}}_\gamma^{\mathsf{S}}(h) + \frac{1}{\xi}
 	\Bigg[
 	2 \big( \textsf{D}_{\mathrm{KL}}(\mathbb{P} \parallel \mathbb{Q}) + 1 \big) + \ln \tfrac{1}{\rho} + \tfrac{\xi^2}{4N_S} + \textsf{D}_{\gamma/2}^{\mathsf{S},\mathsf{T}}(\mathbb{P}; \xi)
 	\Bigg].
 \end{equation}
  \end{theorem}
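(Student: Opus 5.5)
The plan follows the PAC-Bayesian route of the subgroup-generalization analysis of \cite{ma2021subgroup}, as adapted to domain adaptation in \cite{fanghomophily}. We need three ingredients: a stochastic-classifier bound that transfers from source to target, a perturbation argument that turns it into a bound for the deterministic $h$, and a definition of the discrepancy term. For the last, the natural choice is
\[
\textsf{D}_{\gamma}^{\mathsf{S},\mathsf{T}}(\mathbb{P};\xi) := \ln \mathbb{E}_{h'\sim\mathbb{P}}\, e^{\xi(\mathcal{L}^{\mathsf{T}}_{\gamma}(h') - \mathcal{L}^{\mathsf{S}}_{\gamma}(h'))}.
\]
This is the exponential-moment form of the expected source--target loss gap; it is controlled by the distance between $\text{P}(Z^\mathsf{S})$ and $\text{P}(Z^\mathsf{T})$, which NSD minimizes.

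\textbf{Step 1: Donsker--Varadhan change of measure.} Apply the Donsker--Varadhan variational formula with $f(h') = \xi(\mathcal{L}^{\mathsf{T}}_{\gamma/2}(h') - \tilde{\mathcal{L}}^{\mathsf{S}}_{\gamma/2}(h'))$. For every posterior $\mathbb{Q}$ this gives
\[
\xi\,\mathbb{E}_{\mathbb{Q}}[f/\xi] \le \textsf{D}_{\mathrm{KL}}(\mathbb{Q}\|\mathbb{P}) + \ln \mathbb{E}_{\mathbb{P}} e^{f}.
\]
Split the exponent into a target-minus-expected-source part and an expected-source-minus-empirical-source part. By Cauchy--Schwarz, or by an exact factorization if the two parts are treated separately, the log-moment becomes the sum of $\textsf{D}_{\gamma/2}^{\mathsf{S},\mathsf{T}}(\mathbb{P};\xi)$ and the log-MGF of the empirical deviation on the source.

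\textbf{Step 2: Concentration on the source.} Bound the log-MGF of the empirical deviation. Since $\mathbb{P}$ is independent of $\mathcal{V}^{\mathsf{S}}$, the labels $Y_i$ are conditionally independent given the embeddings. The empirical margin loss is therefore an average of $N_S$ independent indicators, and Hoeffding's lemma bounds the MGF by $e^{\xi^2/(8N_S)}$ (the Cauchy--Schwarz split may double this, which yields the $\xi^2/(4N_S)$ in the statement). Markov's inequality over $\mathcal{Y}^{\mathsf{S}}$ then gives the $\ln\frac{1}{\rho}$ term with probability at least $1-\rho$. Dividing by $\xi$ produces a bound on the $\mathbb{Q}$-averaged target loss.

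\textbf{Step 3: Derandomization (main obstacle).} Passing from the stochastic $\mathbb{Q}$ to the deterministic $h$ is where I expect the difficulty. Following \cite{ma2021subgroup}, take $\mathbb{Q}$ to be the distribution of $h+\Delta$ with perturbations small enough that, with probability at least $1/2$, every output moves by less than $\gamma/4$ in max-norm. On this event:
\begin{itemize}
\item $\mathcal{L}_0^{\mathsf{T}}(h) \le \mathcal{L}^{\mathsf{T}}_{\gamma/2}(h')$;
\item $\tilde{\mathcal{L}}^{\mathsf{S}}_{\gamma/2}(h') \le \tilde{\mathcal{L}}^{\mathsf{S}}_{\gamma}(h)$.
\end{itemize}
Restricting $\mathbb{Q}$ to this event, the standard renormalization lemma says the KL divergence grows to at most $2(\textsf{D}_{\mathrm{KL}}+1)$. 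This is exactly the factor appearing in the statement; the statement writes the KL as $\textsf{D}_{\mathrm{KL}}(\mathbb{P}\|\mathbb{Q})$, and I take this to be $\textsf{D}_{\mathrm{KL}}(\mathbb{Q}\|\mathbb{P})$.

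The delicate points are these. The margin shifts must be matched so that the discrepancy term is evaluated at $\gamma/2$. The bound must hold uniformly over $\mathbb{Q}$, so it has to be stated before conditioning. The discrepancy term is measured under the prior $\mathbb{P}$ rather than under $\mathbb{Q}$, which avoids any sample dependence. Combining Steps 1--3 and rearranging yields the stated inequality.
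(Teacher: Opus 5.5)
The paper gives no proof of this theorem. It imports it, via \cite{fanghomophily}, from the subgroup-generalization bound of \cite{ma2021subgroup}. Your outline is exactly that argument: Donsker--Varadhan, then Markov plus Hoeffding over $\mathcal{Y}^{\mathsf{S}}$, then derandomization by restricting $\mathbb{Q}$ to a good perturbation set with $\textsf{D}_{\mathrm{KL}}(\tilde{\mathbb{Q}}\|\mathbb{P})\le 2(\textsf{D}_{\mathrm{KL}}(\mathbb{Q}\|\mathbb{P})+1)$. Reading the KL term as $\textsf{D}_{\mathrm{KL}}(\mathbb{Q}\|\mathbb{P})$ is correct.

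The one substantive issue is that the restriction in Step 3 is not something you may choose inside the proof. It is a hypothesis the statement omits, and without it the statement is false. Take a class $\mathcal{H}$ containing both of the following. Let labels be deterministic given $Z$, and let $\mathbb{Q}=\mathbb{P}=\delta_{h_0}$ with $h_0\equiv 0$. Then $h_0$ has margin loss $1$ on both domains at every margin, so the KL and the discrepancy term both vanish. Let $h$ be correct with margin exceeding $\gamma$ on every source node and wrong on every target node. With $\xi=2\sqrt{N_S}$ the right-hand side is $(3+\ln\frac{1}{\rho})/(2\sqrt{N_S})$. This is below $1=\mathcal{L}_0^{\mathsf{T}}(h)$ for moderately large $N_S$. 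What your argument correctly proves is the bound for every $\mathbb{Q}$ with $\Pr_{h'\sim\mathbb{Q}}\bigl(\max_{i\in\mathcal{V}^{\mathsf{S}}\cup\mathcal{V}^{\mathsf{T}}}\|h'_i-h_i\|_\infty<\gamma/4\bigr)>1/2$. State that hypothesis explicitly rather than ``taking'' $\mathbb{Q}$; this is how it is done in \cite{ma2021subgroup}, with radius $\gamma/8$.

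Two smaller points. First, the radius $\gamma/8$ versus your $\gamma/4$ reflects the definition of $\textsf{D}$, which the paper never writes down. The appendix's gap $\mathcal{L}^{\mathsf{T}}_{\gamma/2}-\mathcal{L}^{\mathsf{S}}_{\gamma}$ points to the convention $\textsf{D}^{\mathsf{S},\mathsf{T}}_{\gamma}(\mathbb{P};\xi)=\ln\mathbb{E}_{h'\sim\mathbb{P}}e^{\xi(\mathcal{L}^{\mathsf{T}}_{\gamma/2}(h')-\mathcal{L}^{\mathsf{S}}_{\gamma}(h'))}$. Under that convention you would use $f=\xi(\mathcal{L}^{\mathsf{T}}_{\gamma/4}-\tilde{\mathcal{L}}^{\mathsf{S}}_{\gamma/2})$ with radius $\gamma/8$. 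Your equal-margin definition is internally consistent, and your margin bookkeeping checks out. It simply trades a pointwise larger discrepancy for a looser perturbation radius.

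Second, drop the Cauchy--Schwarz option in Step 2. It yields $\tfrac12\textsf{D}^{\mathsf{S},\mathsf{T}}_{\gamma/2}(\mathbb{P};2\xi)\ge\textsf{D}^{\mathsf{S},\mathsf{T}}_{\gamma/2}(\mathbb{P};\xi)$, not the stated term. Exact factorization is what works. The reason is not that independence of $\mathbb{P}$ makes the labels independent; label independence is part of the data model. Rather, the gap $\mathcal{L}^{\mathsf{T}}(h')-\mathcal{L}^{\mathsf{S}}(h')$ is already a label expectation, hence non-random in $\mathcal{Y}^{\mathsf{S}}$. Independence of $\mathbb{P}$ then lets you exchange $\mathbb{E}_{\mathcal{Y}^{\mathsf{S}}}$ and $\mathbb{E}_{\mathbb{P}}$. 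Hoeffding gives $\xi^2/(8N_S)$, already below the stated $\xi^2/(4N_S)$.
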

  \vspace{-10pt}
   In this bound, \( \textsf{D}_{\mathrm{KL}}(\mathbb{P} \parallel \mathbb{Q}) \) denotes the Kullback-Leibler divergence between distributions \( \mathbb{P} \) and \( \mathbb{Q} \), which serves as a model complexity measure. The terms \( \ln \frac{1}{\rho} \) and \( \frac{\xi^2}{4N_S} \) are standard in PAC-Bayesian bounds, while the $\textsf{D}_{\gamma/2}^{\mathsf{S},\mathsf{T}}(\mathbb{P}; \xi)$ captures the difference between source nodes \( \mathcal{V}^\mathsf{S} \) and target nodes \( \mathcal{V}^\mathsf{T} \).
  
\textbf{Proposition 1} $\left(\text{Bound for } \textsf{D}_{\gamma/2}^{\mathsf{S},\mathsf{T}}(\mathbb{P}; \xi)\right)$.
 For any $\gamma \geq 0$, if the prior $\mathbb{P}$ over $\mathcal{H}$ is fixed, the domain discrepancy $\textsf{D}_{\gamma/2}^{\mathsf{S},\mathsf{T}}(\mathbb{P}; \xi)$ can be bounded by a weighted gap in the spectral domain:
 \begin{equation}
 	\textsf{D}_{\gamma/2}^{\mathsf{S},\mathsf{T}}(\mathbb{P}; \xi) \leq 
 	\mathcal{O}\!\left(
 	\sum_{m=1}^{M}
 	\omega_m \;\bigl|\Psi^{\mathsf{S}}(\mathbf{t}_m)-\Psi^{\mathsf{T}}(\mathbf{t}_m)\bigr|^{2}
 	\right),
 \end{equation}
 where $\{\mathbf{t}_m\}_{m=1}^{M}\subset\mathbb{R}^d$ are sampled frequencies, and $\{\omega_m\}_{m=1}^{M}$ are nonnegative weights approximating a spectral measure.  
  This result shows that domain discrepancy can be upper-bounded by the weighted frequency-domain gap between source and target CFs, with $\omega_m$ reflecting the importance of each spectral component. Crucially, learning these weights adaptively enables the model to emphasize discriminative frequencies that drive cross-domain differences. Our ADAlign framework is directly aligned with this principle: by dynamically prioritizing informative frequencies, it achieves tighter bounds and more effective graph domain adaptation. Full proofs are deferred to Appendix~\ref{Proof of Proposition 1}.
 
\begin{table}[t]
	\centering
	\renewcommand\arraystretch{1.1}
	\resizebox{\linewidth}{!}{%
		\begin{tabular}{l|cccccc|cc}
			\hline
			Methods
			& A $\to$ C & A $\to$ D & C $\to$ A & C $\to$ D & D $\to$ A & D $\to$ C & B1 $\to$ B2 & B2 $\to$ B1 \\ \hline \hline
			GAT \textcolor{brown}{(ICLR'18)}       & \meanstd{62.77}{2.24} & \meanstd{60.29}{1.33} & \meanstd{58.35}{1.51} & \meanstd{67.07}{2.05} & \meanstd{54.33}{2.17} & \meanstd{63.24}{2.21} & \meanstd{21.15}{2.72} & \meanstd{19.46}{1.41} \\
			GIN \textcolor{brown}{(ICLR'19)}        & \meanstd{69.95}{0.90} & \meanstd{64.69}{1.43} & \meanstd{62.63}{0.23} & \meanstd{68.54}{0.31} & \meanstd{58.18}{0.60} & \meanstd{69.91}{1.83} & \meanstd{20.51}{2.16} & \meanstd{18.60}{0.38} \\
			GCN \textcolor{brown}{(ICLR'17)}        & \meanstd{70.82}{1.26} & \meanstd{65.05}{2.15} & \meanstd{65.44}{1.14} & \meanstd{69.46}{0.83} & \meanstd{59.92}{0.72} & \meanstd{66.83}{0.94} & \meanstd{23.17}{1.10} & \meanstd{23.46}{0.48} \\ \hdashline
			DANE \textcolor{brown}{(IJCAI'19)}      & \meanstd{69.77}{2.14} & \meanstd{62.41}{2.15} & \meanstd{63.93}{1.32} & \meanstd{65.05}{2.07} & \meanstd{58.21}{1.17} & \meanstd{67.41}{2.29} & \meanstd{28.15}{1.77} & \meanstd{30.32}{0.69} \\
			UDAGCN \textcolor{brown}{(WWW'20)}     & \meanstd{80.68}{0.31} & \meanstd{74.66}{0.93} & \meanstd{73.46}{0.40} & \meanstd{76.97}{0.31} & \meanstd{69.36}{0.31} & \meanstd{77.81}{0.50} & \meanstd{33.87}{2.09} & \meanstd{31.86}{1.08} \\
			AdaGCN \textcolor{brown}{(TKDE'22)}     & \meanstd{68.07}{0.86} & \meanstd{66.72}{1.07} & \meanstd{63.25}{1.15} & \meanstd{70.97}{0.48} & \meanstd{60.68}{0.67} & \meanstd{65.83}{2.33} & \meanstd{30.75}{0.64} & \meanstd{27.17}{2.34} \\
			StruRW \textcolor{brown}{(ICML'23)}     & \meanstd{60.48}{1.41} & \meanstd{59.63}{3.04} & \meanstd{56.18}{0.86} & \meanstd{62.50}{1.40} & \meanstd{52.25}{1.15} & \meanstd{57.55}{0.84} & \meanstd{40.37}{0.44} & \meanstd{42.01}{0.44} \\
		SA-GDA \textcolor{brown}{(MM'23)} & \meanstd{76.02}{1.57} & \meanstd{70.36}{1.08} & \meanstd{70.84}{1.87} & \meanstd{75.48}{0.20} & \meanstd{63.41}{1.12} & \meanstd{72.81}{1.66} & \meanstd{49.36}{1.83} & \meanstd{45.57}{2.33} \\
			GRADE \textcolor{brown}{(AAAI'23)}     & \meanstd{75.02}{0.48} & \meanstd{68.17}{0.25} & \meanstd{68.96}{0.10} & \meanstd{73.48}{0.30} & \meanstd{61.72}{0.36} & \meanstd{71.69}{0.90} & \meanstd{48.44}{3.12} & \meanstd{\underline{46.78}}{1.31} \\
			PairAlign  \textcolor{brown}{(ICML'24)}  & \meanstd{60.25}{0.89} & \meanstd{59.58}{0.58} & \meanstd{56.02}{0.85} & \meanstd{63.49}{0.70} & \meanstd{51.83}{0.69} & \meanstd{58.60}{0.42} & \meanstd{40.01}{0.78} & \meanstd{42.64}{0.69} \\
			GraphAlign \textcolor{brown}{(KDD'24)}  & \meanstd{75.18}{0.62} & \meanstd{68.81}{0.78} & \meanstd{65.21}{0.55} & \meanstd{72.21}{0.45} & \meanstd{61.66}{1.23} & \meanstd{68.85}{0.56} & \meanstd{45.58}{1.15} & \meanstd{43.21}{0.62} \\
			A2GNN  \textcolor{brown}{(AAAI'24)}      & \meanstd{\underline{80.93}}{0.52} & 
			\meanstd{\underline{75.94}}{0.33} & 
			\meanstd{\underline{75.09}}{0.43} & 
			\meanstd{77.16}{0.23} & \meanstd{\underline{73.21}}{0.44} & 
			\meanstd{\underline{79.72}}{0.63} & \meanstd{47.10}{3.40} & \meanstd{44.01}{2.01} \\
			TDSS \textcolor{brown}{(AAAI'25)}      & \meanstd{80.41}{0.71} & \meanstd{74.04}{3.64} & \meanstd{72.88}{2.83} & \meanstd{\underline{77.23}}{2.71} & 
			\meanstd{72.38}{1.95} & \meanstd{79.04}{3.61} & \meanstd{\underline{49.53}}{1.08} & \meanstd{44.20}{0.29} \\ 
			GAA \textcolor{brown}{(ICLR'25)}        & \meanstd{80.03}{0.37} & \meanstd{73.32}{0.67} & \meanstd{73.15}{0.32} & \meanstd{76.04}{0.39} & \meanstd{68.32}{0.43} & \meanstd{78.27}{0.31} & \meanstd{48.80}{2.45} & \meanstd{43.74}{1.25} \\ 
			DGSDA \textcolor{brown}{(ICML'25)}      & \meanstd{78.59}{0.55} & \meanstd{73.71}{0.24} & \meanstd{73.96}{0.70} & \meanstd{76.56}{0.25} & \meanstd{72.69}{0.34} & \meanstd{77.41}{0.11} & \meanstd{48.61}{1.73} & \meanstd{44.12}{1.26} \\ 
			HGDA \textcolor{brown}{(ICML'25)}      & \meanstd{76.61}{1.03} & \meanstd{72.18}{3.91} & \meanstd{67.11}{0.93} & \meanstd{73.63}{0.79} & \meanstd{66.52}{0.36} & \meanstd{75.23}{1.03} & \meanstd{46.34}{1.93} & \meanstd{44.29}{1.60} \\ \hdashline
			\rowcolor{tan}
			\textbf{ADAlign (Ours)} 
			& \meanstd{\textbf{82.21}}{0.36} 
			& \meanstd{\textbf{78.06}}{0.61} 
			& \meanstd{\textbf{76.24}}{0.29} 
			& \meanstd{\textbf{79.51}}{0.20} 
			& \meanstd{\textbf{74.38}}{0.98} 
			& \meanstd{\textbf{81.84}}{0.10} 
			& \meanstd{\textbf{55.08}}{0.25} 
			& \meanstd{\textbf{52.42}}{0.46} \\ \hline 
			\hline
			Methods & U$\to$B & U$\to$E & B$\to$U & B$\to$E & E$\to$U & E$\to$B & DE$\to$EN & EN$\to$DE \\
			\hline
			GAT \textcolor{brown}{(ICLR'18)} & \meanstd{50.53}{3.25} & \meanstd{40.25}{3.82} & \meanstd{44.25}{1.30} & \meanstd{46.67}{1.99} & \meanstd{43.90}{1.69} & \meanstd{50.84}{3.22} & \meanstd{57.65}{0.14} & \meanstd{63.13}{0.31} \\
			GIN \textcolor{brown}{(ICLR'19)} & \meanstd{33.13}{2.96} & \meanstd{32.28}{4.55} & \meanstd{35.87}{1.96} & \meanstd{32.98}{2.54} & \meanstd{36.07}{2.44} & \meanstd{33.89}{4.03} & \meanstd{55.10}{0.56} & \meanstd{60.53}{0.04} \\
			GCN \textcolor{brown}{(ICLR'17)} & \meanstd{56.34}{3.75} & \meanstd{47.11}{0.52} & \meanstd{40.22}{1.21} & \meanstd{48.92}{1.88} & \meanstd{44.67}{0.99} & \meanstd{58.47}{2.39} & \meanstd{57.59}{0.26} & \meanstd{62.82}{0.33} \\
			\hdashline
			DANE \textcolor{brown}{(IJCAI'19)} & \meanstd{44.12}{1.56} & \meanstd{39.90}{3.25} & \meanstd{42.20}{4.43} & \meanstd{38.45}{3.29} & \meanstd{36.13}{7.25} & \meanstd{46.56}{0.48} & \meanstd{55.44}{1.12} & \meanstd{62.00}{0.90} \\
			UDAGCN \textcolor{brown}{(WWW'20)} & \meanstd{58.17}{2.07} & \meanstd{44.51}{0.41} & \meanstd{42.17}{1.14} & \meanstd{47.97}{1.72} & \meanstd{42.79}{1.47} & \meanstd{62.29}{3.79} & \meanstd{59.37}{0.63} & \meanstd{63.61}{0.17} \\
			AdaGCN \textcolor{brown}{(TKDE'22)} & \meanstd{65.65}{1.93} & \meanstd{50.63}{0.55} & \meanstd{46.87}{1.00} & \meanstd{51.44}{0.93} & \meanstd{48.62}{0.53} & \meanstd{\underline{73.74}}{2.24} & \meanstd{57.81}{0.33} & \meanstd{62.67}{0.39} \\
			StruRW \textcolor{brown}{(ICML'23)} & \meanstd{62.44}{1.96} & \meanstd{39.70}{1.71} & \meanstd{41.45}{1.30} & \meanstd{38.30}{1.52} & \meanstd{41.21}{1.50} & \meanstd{55.11}{0.75} & \meanstd{56.72}{1.09} & \meanstd{61.01}{0.09} \\
					SA-GDA\textcolor{brown}{(MM'23)} & \meanstd{67.60}{1.54} & \meanstd{51.09}{1.02} & \meanstd{48.20}{3.29} & \meanstd{\underline{56.99}}{0.32} & \meanstd{47.78}{2.42} & \meanstd{63.63}{1.82} & \meanstd{56.46}{1.39} & \meanstd{62.18}{0.34} \\
			GRADE \textcolor{brown}{(AAAI'23)} & \meanstd{57.71}{0.78} & \meanstd{48.97}{0.20} & \meanstd{43.31}{0.58} & \meanstd{55.94}{0.19} & \meanstd{46.64}{0.80} & \meanstd{63.05}{0.97} &\meanstd{59.16}{0.24} & \meanstd{64.11}{0.03} \\
			PairAlign \textcolor{brown}{(ICML'24)} & \meanstd{66.26}{1.77} & \meanstd{39.50}{0.95} & \meanstd{41.92}{2.30} & \meanstd{38.10}{2.60} & \meanstd{39.63}{0.94} & \meanstd{55.57}{0.75} & \meanstd{57.45}{0.12} & \meanstd{\underline{64.22}}{0.18} \\
			GraphAlign \textcolor{brown}{(KDD'24)} & 
			\meanstd{62.54}{0.80} & \meanstd{\underline{52.18}}{0.17} & \meanstd{50.33}{0.77} & \meanstd{55.23}{2.33} & \meanstd{\underline{54.35}}{0.33} & \meanstd{71.02}{0.48} & \meanstd{52.58}{0.07} & \meanstd{60.12}{0.01} \\
			A2GNN \textcolor{brown}{(AAAI'24)} & \meanstd{64.24}{1.37} & \meanstd{46.62}{1.61} & \meanstd{47.66}{5.51} & \meanstd{47.67}{1.35} & \meanstd{52.72}{1.16} & \meanstd{54.81}{1.63} & \meanstd{55.77}{0.47} & \meanstd{60.92}{0.56} \\
			TDSS \textcolor{brown}{(AAAI'25)} & \meanstd{67.43}{1.62} & \meanstd{52.05}{1.01} & \meanstd{47.05}{2.07} & \meanstd{51.80}{0.12} & \meanstd{46.08}{0.51} & \meanstd{55.73}{1.08} & \meanstd{55.91}{0.47} & \meanstd{61.09}{0.32} \\
			GAA \textcolor{brown}{(ICLR'25)} & \meanstd{64.89}{2.38} & \meanstd{51.70}{1.97} & \meanstd{52.50}{1.67} & \meanstd{48.22}{1.07} & \meanstd{48.70}{2.34} & \meanstd{56.07}{1.51} & \meanstd{57.54}{0.37} & \meanstd{61.98}{0.40} \\
			DGSDA \textcolor{brown}{(ICML'25)} & \meanstd{61.22}{1.63} & \meanstd{49.35}{2.78} & \meanstd{\underline{54.91}}{0.81} & \meanstd{42.56}{1.64} & \meanstd{49.76}{0.82} & \meanstd{60.15}{0.57} & \meanstd{\underline{60.84}}{0.15} & \meanstd{63.42}{0.30} \\
			HGDA \textcolor{brown}{(ICML'25)} & \meanstd{\underline{67.82}}{1.30} & \meanstd{48.67}{2.11} & \meanstd{51.94}{2.66} & \meanstd{48.07}{1.94} & \meanstd{46.62}{1.89} & \meanstd{59.77}{0.89} & \meanstd{57.08}{0.36} & \meanstd{61.19}{0.91} \\
			\hdashline
			\rowcolor{tan}
			\textbf{ADAlign (Ours)} & \meanstd{\textbf{72.52}}{0.41} & \meanstd{\textbf{53.23}}{0.68} & \meanstd{\textbf{55.71}}{0.59} & \meanstd{\textbf{58.49}}{1.19} & \meanstd{\textbf{55.41}}{0.29} & \meanstd{\textbf{75.82}}{0.47} & \meanstd{\textbf{61.92}}{0.12} & \meanstd{\textbf{65.69}}{0.26} \\ 
			\hline
		\end{tabular}%
	}
	\caption{Node classification performance (\% $\pm$ $\sigma$, Micro-F1 score) under 16 transfer scenarios. The highest scores are highlighted in \textbf{bold}, while the second-highest scores are \underline{underlined}.}
	\label{main results}
\end{table}

\section{Experiments} 
In this section, we empirically evaluate the ADAlign framework through experiments addressing five key research questions:
\textbf{RQ1}: How does ADAlign compare to state-of-the-art methods?
\textbf{RQ2}: What are the computational efficiency and resource overhead of ADAlign?
\textbf{RQ3}: Does the adaptive frequency sampler effectively capture meaningful spectral components?
\textbf{RQ4}: How do key parameters impact model performance?
\textbf{RQ5}: How can we intuitively understand ADAlign's strengths?

\subsection{Experimental Setup}
\textbf{(1) Datasets.} 
To demonstrate the superiority of proposed ADAlign framework, we evaluate it across four types of datasets, comprising a total of ten datasets \cite{fang2025benefits}: ArnetMiner (ACMv9, Citationv1, DBLPv7) \cite{dai2022graph}, Airport (USA, Brazil, Europe) \cite{ribeiro2017struc2vec}, Blog (Blog1, Blog2) \cite{li2015unsupervised}, and Twitch (England, Germany) \cite{liu2024rethinking}. 
\textbf{(2) Baselines.} To assess the effectiveness of our model, we conduct comparisons with two categories of baseline methods: (i) source-only graph neural networks such as GAT, GIN, and GCN, which are exclusively trained on the source domain before being applied to the target domain; and (ii) graph domain adaptation techniques, including DANE, UDAGCN, AdaGCN, StruRW, SA-GDA, GRADE, PairAlign, GraphAlign, A2GNN, TDSS, DGSDA, GAA and HGDA. 
\textbf{(3) Implementation Details.} We utilize Micro-F1 and Macro-F1 scores as evaluation metrics for our experiments. More dataset descriptions, baselines, and implementation details can be found in Appendix~\ref{experiment setup}.

\begin{figure}[!]
	\vspace{-10pt}
	\centering
	\begin{minipage}[t]{0.48\linewidth} 
		\vspace{-68pt}
		\centering
		\renewcommand\arraystretch{1.4}
		\setlength{\tabcolsep}{3pt}
		\resizebox{\linewidth}{!}{
			\begin{tabular}{l|cc|cc}
				\hline
				\multirow{2}{*}{\makecell[c]{Model/Datasets}} & \multicolumn{2}{c|}{A $\to$ C} & \multicolumn{2}{c}{A $\to$ D} \\
				\cline{2-5}
				& Speed (s) & Memory (GB) & Speed (s) & Memory (GB) \\
				\hline \hline
				A2GNN (AAAI'24) & 0.1812 & 10.64 & 0.1781 & 10.45 \\
				TDSS (AAAI'25)  & 0.1684 & 10.64 & 0.1661 & 10.51 \\
				GAA (ICLR'25)   & 0.2662 & 15.80 & 0.2520 & 15.60 \\
				HGDA (ICML'25)  & 0.3715 & 22.31 & 0.3817 & 21.07 \\
				\cellcolor{tan} \textbf{ADAlign (Ours)}   & \cellcolor{tan} 0.08 \textbf{(2.0$\times$ $\uparrow$)} & \cellcolor{tan}3.45  \textbf{(3.1$\times$ $\downarrow$)}  & \cellcolor{tan}0.07 \textbf{(2.3$\times$ $\uparrow$)} & \cellcolor{tan}2.60 \textbf{(4.0$\times$ $\downarrow$)} \\
				\hline
			\end{tabular}
		}
		\captionof{table}{Per-iteration runtime and memory consumption comparison on two transfer tasks. Values in parentheses show relative improvement: ``$\uparrow$” speedup, ``$\downarrow$” memory reduction.}
		\label{tab:comparison}
	\end{minipage}
	\hfill 
	\begin{minipage}[t]{0.48\linewidth} 
		\centering
		\begin{subfigure}[t]{0.48\linewidth}
			\centering
			\includegraphics[width=\linewidth]{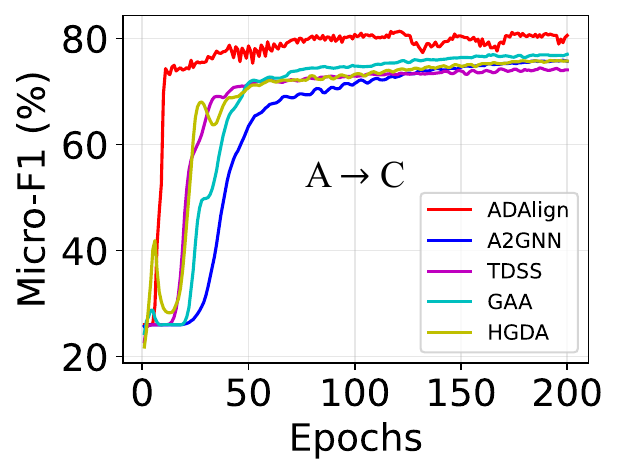}
		\end{subfigure}%
		\hfill
		\begin{subfigure}[t]{0.48\linewidth}
			\centering
			\includegraphics[width=\linewidth]{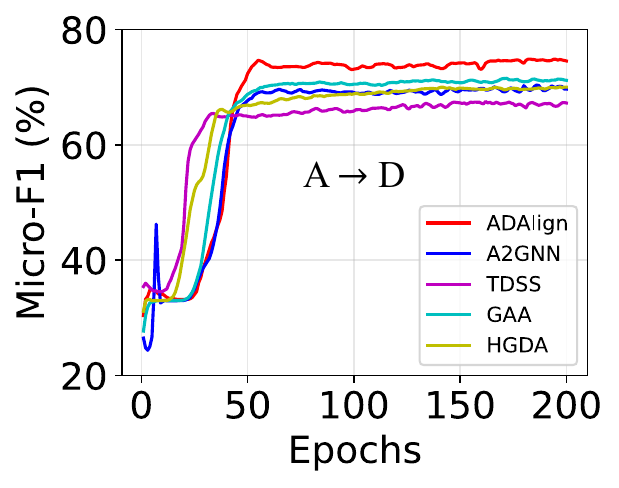}
		\end{subfigure}
		\vspace{-5pt}
		\captionof{figure}{The comparative training curves of Micro-F1 scores are illustrated for four representative baselines across two distinct transfer scenarios over 200 epochs of iterative learning.}
		\label{epoch}
	\end{minipage}
\end{figure}
\subsection{Main Experimental Results (\textbf{RQ1})} 
To assess the effectiveness of ADAlign framework, we conduct comprehensive evaluations on ten benchmark datasets covering four diverse graph domains, with results summarized in Table~\ref{main results}. ADAlign consistently achieves the best performance across all 16 transfer scenarios, demonstrating its ability to capture and align complex composite distribution shifts. These improvements are attributed to ADAlign’s ability to adaptively capture multi-level distribution shifts through NSD modeling, rather than relying on fixed heuristic factors such as GAA and HGDA. In particular, ADAlign yields an average improvement of over 6\% in challenging heterogeneous blog-domain transfers, where the extremely high-dimensional features and multiple label categories create more entangled shifts, highlighting its robustness under severe distribution shifts. These results confirm the effectiveness and generality of holistic spectral alignment for graph domain adaptation. Moreover, the consistent gains across academic, social, and airport graphs demonstrate that ADAlign is not tied to a specific domain, but rather provides a general-purpose solution. Additional experimental results and significance tests ($p<0.05$) are provided in Appendix~\ref{appendix_main} and Appendix \ref{t}.

\subsection{Analysis on Efficiency and Stability (\textbf{RQ2})} 
We compare runtime, memory, and training dynamics against competitive baselines. As shown in Table~\ref{tab:comparison}, ADAlign effectively reduces runtime and memory usage. On the A$\rightarrow$C task, for instance, it achieves 0.08 s/iter with 3.45 GB memory, representing a 2.0$\times$ speedup and a 3.1$\times$ reduction in memory usage compared to TDSS, the strongest competitor in efficiency. This advantage arises from using a more efficient metric that avoids costly kernel computations in traditional methods (e.g., MMD). By leveraging neural characteristic function, we avoid the computational burden of large-scale kernel matrices and the high variance introduced by sampling-based estimations, leading to improvements in both runtime and memory efficiency while preserving the flexibility needed for graph domain adaptation. Furthermore, Figure~\ref{epoch} illustrates that ADAlign trains efficiently and stably. In contrast to adversarial methods that often suffer from instability, ADAlign remains smooth and robust. This combination of low computational cost and stable training dynamics makes ADAlign practical for real-world deployment. The training loss curves are provided in Appendix~\ref{loss comparison}.

\subsection{Ablation Studies on Frequency Sampler (\textbf{RQ3})} 
\begin{wrapfigure}{h}{0.55\linewidth}
	\vspace{-10pt}
	\centering
	\begin{subfigure}[t]{0.48\linewidth}
		\centering
		\includegraphics[width=\linewidth]{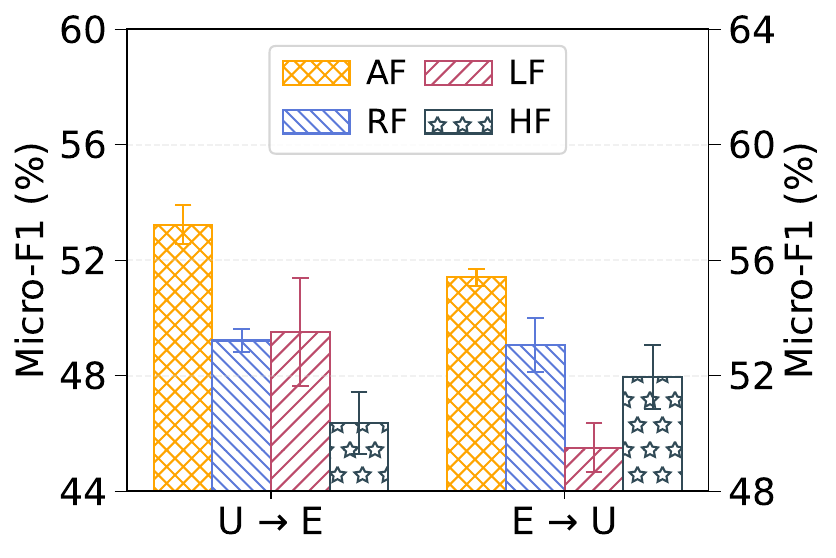}
	\end{subfigure}%
	\hfill
	\begin{subfigure}[t]{0.48\linewidth}
		\centering
		\includegraphics[width=\linewidth]{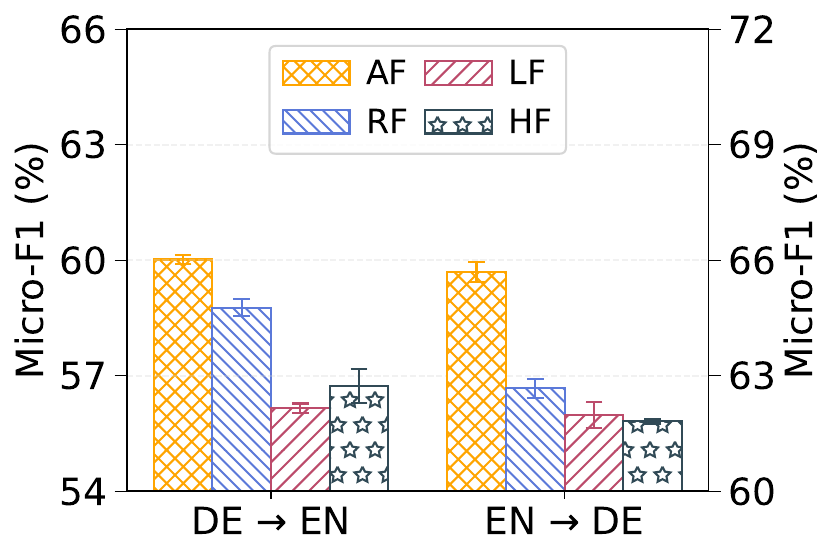}
	\end{subfigure}
	\caption{Classification Micro-F1 comparisons between ADA variants on four cross-domain tasks.}
	\label{Ablation results}
	\vspace{-10pt}
\end{wrapfigure}
We ablate the adaptive frequency (AF) sampler by comparing ADAlign with three variants: Random Frequencies (RF), Low Frequencies (LF), and High Frequencies (HF). For LF, we select the frequency range [1, 10], and for HF, the range [10, 20]. These ranges are determined based on the similarity between transfer scenarios, with the highest frequency dynamically scaled according to a similarity measure \( S \) between the source and target graphs. Specifically, the highest frequency \( f_{\text{max}} \) is given by:
$
f_{\text{max}} = f_{\text{base}} \times \left( 1- S/S_{\text{max}} \right),
$
where \( f_{\text{base}} \) is the base frequency, \( S \) is the similarity measure between the graphs, and \( S_{\text{max}} \) is the maximum similarity. This scaling ensures that the frequency range adapts to the graph's structure and the level of discrepancy between the domains.
As shown in Figure~\ref{Ablation results}, AF consistently achieves the best performance across four transfer tasks, confirming that dominant shifts lie in scenario-specific spectral components. The task-level patterns are intuitive: LF performs competitively on U$\rightarrow$E and E$\rightarrow$U, reflecting the role of smooth, large-scale structures, while HF underperforms on DE$\rightarrow$EN and EN$\rightarrow$DE, indicating that ignoring fine-grained cues is detrimental. RF yields unstable and suboptimal results, suggesting that unguided sampling fails to target critical discrepancies. Overall, these findings underscore the benefit of our minimax formulation: instead of relying on a fixed heuristic band, it adaptively identifies and emphasizes the most discriminative frequencies, enabling precise alignment. Additional ablation results are provided in Appendix~\ref{as}.


\subsection{Parameter Sensitivity Analysis (\textbf{RQ4})} 
\begin{figure*}[!]
	\centering
	\begin{subfigure}[t]{0.245\textwidth}
		\includegraphics[width=\textwidth]{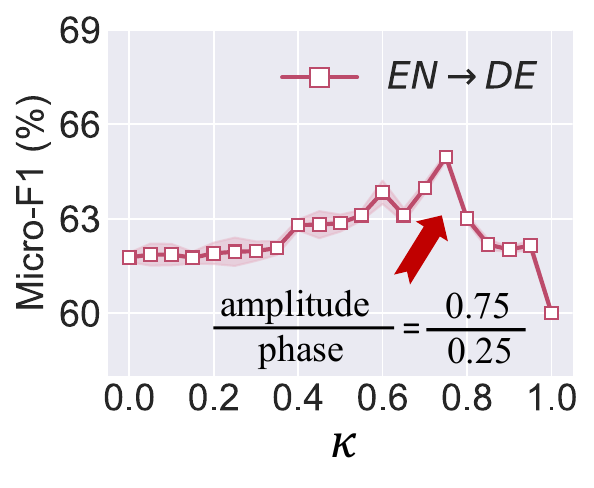}
	\end{subfigure}
	\begin{subfigure}[t]{0.245\textwidth}
		\includegraphics[width=\textwidth]{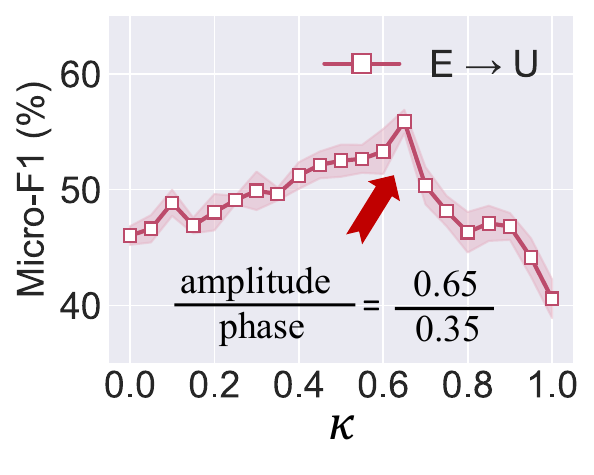}
	\end{subfigure}
	\begin{subfigure}[t]{0.245\textwidth}
		\includegraphics[width=\textwidth]{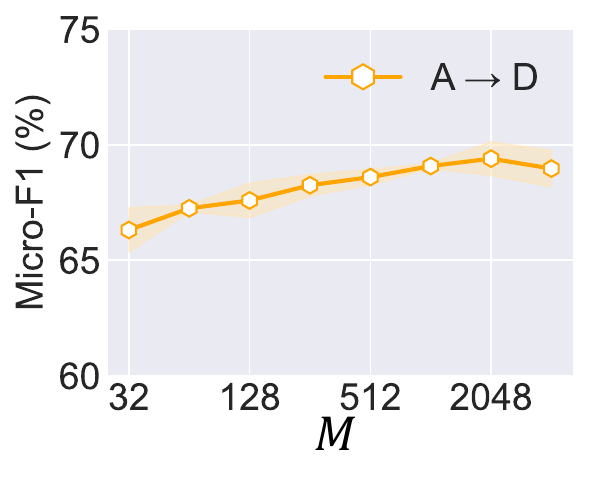}
	\end{subfigure}
	\begin{subfigure}[t]{0.245\textwidth}
		\includegraphics[width=\textwidth]{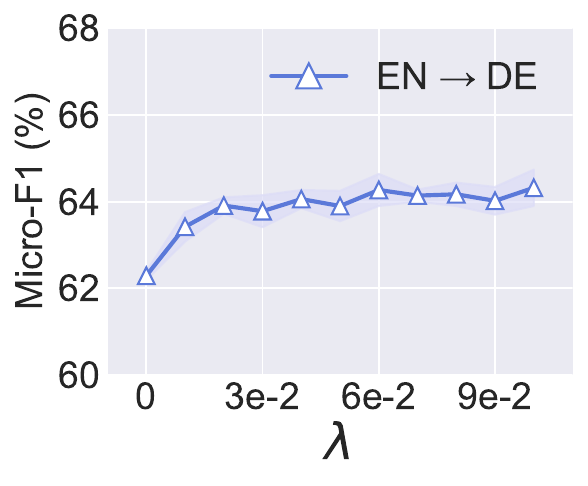}
	\end{subfigure}
	\begin{subfigure}[t]{0.245\textwidth}
		\includegraphics[width=\textwidth]{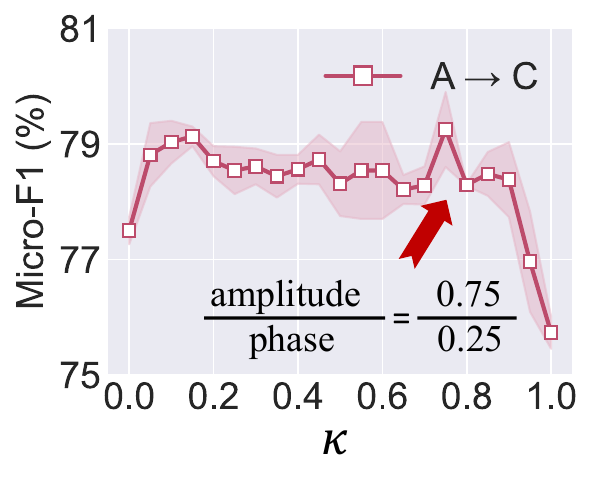}
	\end{subfigure}
	\begin{subfigure}[t]{0.245\textwidth}
		\includegraphics[width=\textwidth]{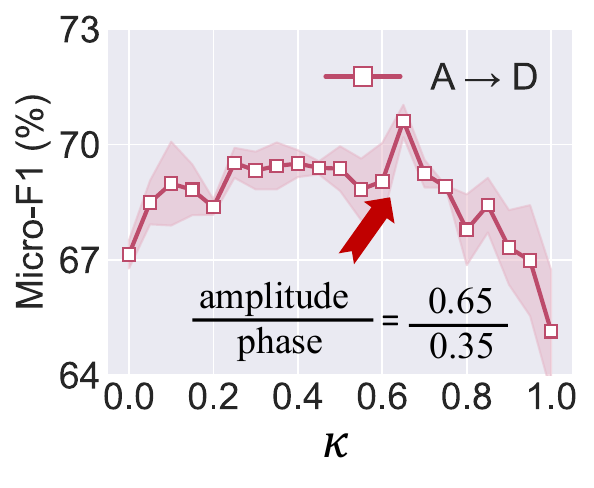}
	\end{subfigure}
	\begin{subfigure}[t]{0.245\textwidth}
		\includegraphics[width=\textwidth]{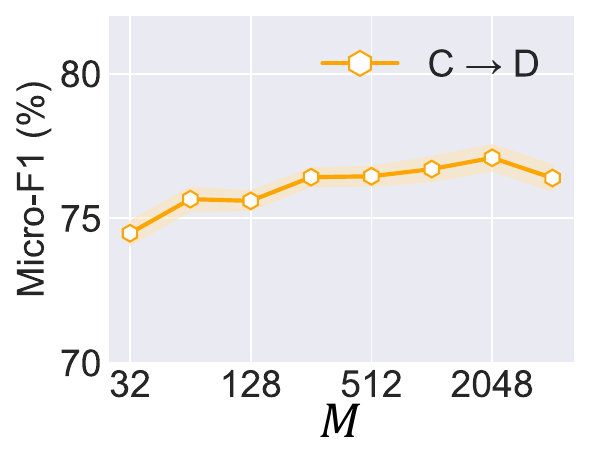}
	\end{subfigure}
	\begin{subfigure}[t]{0.245\textwidth}
		\includegraphics[width=\textwidth]{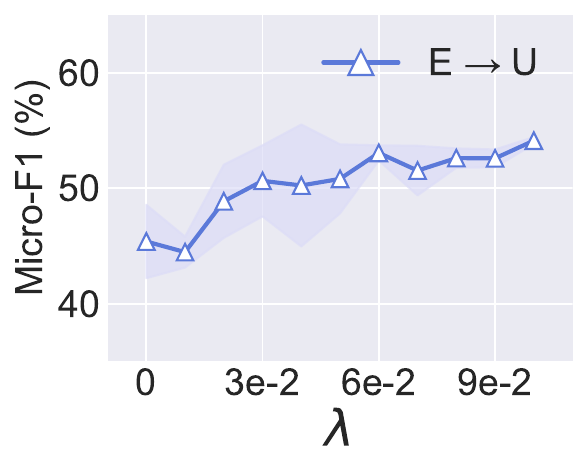}
	\end{subfigure}
	\vspace{-20pt}
	\caption{The performances of our ADAlign w.r.t varying parameters on different transfer tasks.}
		\vspace{-10pt}
	\label{para-Step}
\end{figure*}
In this section, we analyze the effect of three key hyperparameters in ADAlign: the amplitude–phase trade-off $\kappa$ in Eq.~(\ref{ap}), the sampled-frequency number $M$, and the alignment loss weight $\lambda$ in Eq.~(\ref{lambda}). The corresponding results are shown in Figure~\ref{para-Step}. Additional results are provided in Appendix~\ref{vis1}.  

\textbf{Study on amplitude weight $\kappa$.} The parameter $\kappa$ controls the relative emphasis on amplitude versus phase in the spectral domain. We find that performance is consistently strong when $\kappa$ lies in the range $0.65$ and $0.75$, highlighting the importance of balancing global structural consistency (amplitude) with fine-grained relational cues (phase). Worth noting is that Pushing $\kappa$ toward either extreme ($\kappa=1.0$ or $\kappa=0.0$) leads to clear degradation, confirming the necessity of joint alignment.  

\textbf{Study on the number of frequencies $M$.}
The parameter $M$ determines the number of sampled frequency components used to compute the discrepancy. Increasing $M$ improves performance at first, since richer frequency coverage captures more detailed aspects of distributional shift. However, gains plateau beyond $M=2048$, suggesting that ADAlign can effectively capture essential spectral characteristics without requiring excessive samples, thereby retaining computational efficiency.  


\textbf{Study on loss weight $\lambda$.}
The hyperparameter $\lambda$ balances the domain alignment loss against the source classification loss. As $\lambda$ increases, performance improves steadily before saturating, indicating that moderate alignment weighting suffices to achieve strong transfer. Very small $\lambda$ under-emphasizes alignment, while overly large $\lambda$ slightly impairs source discrimination. Overall, ADAlign remains robust, with mid-range values of $\lambda$ consistently providing an effective trade-off.

\subsection{Visualization of Node Embeddings  (\textbf{RQ5})}
\begin{figure*}[t]
	\centering
	\begin{subfigure}[t]{0.197\textwidth}
		\includegraphics[width=\textwidth]{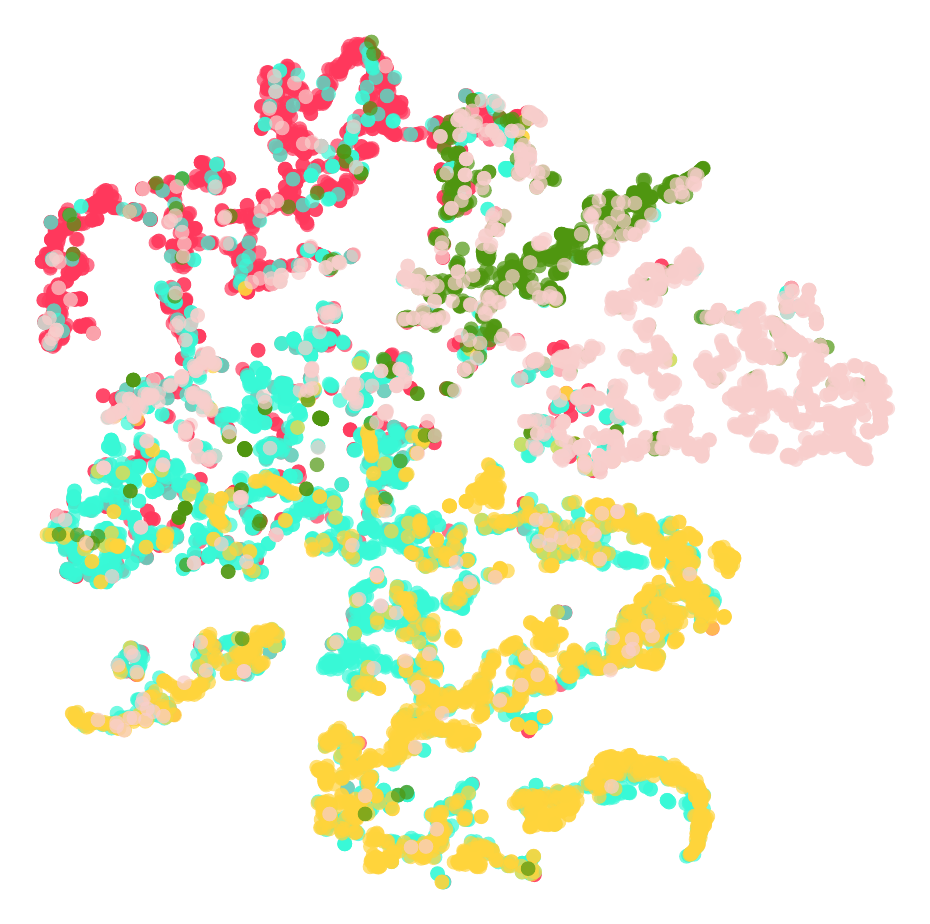}
		\caption{A2GNN}
	\end{subfigure}
	\hspace{0.03\textwidth}
	\begin{subfigure}[t]{0.2\textwidth}
		\includegraphics[width=\textwidth]{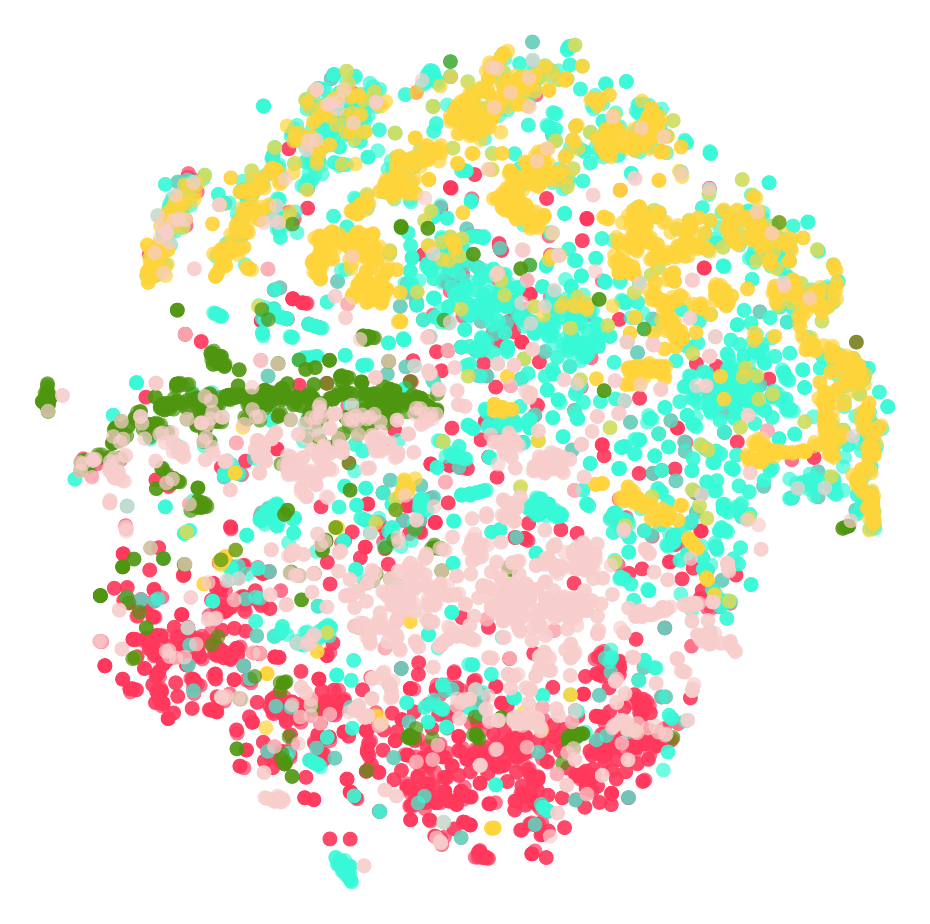}
		\caption{GAA}
	\end{subfigure}
	\hspace{0.03\textwidth}
	\begin{subfigure}[t]{0.2\textwidth}
		\includegraphics[width=\textwidth]{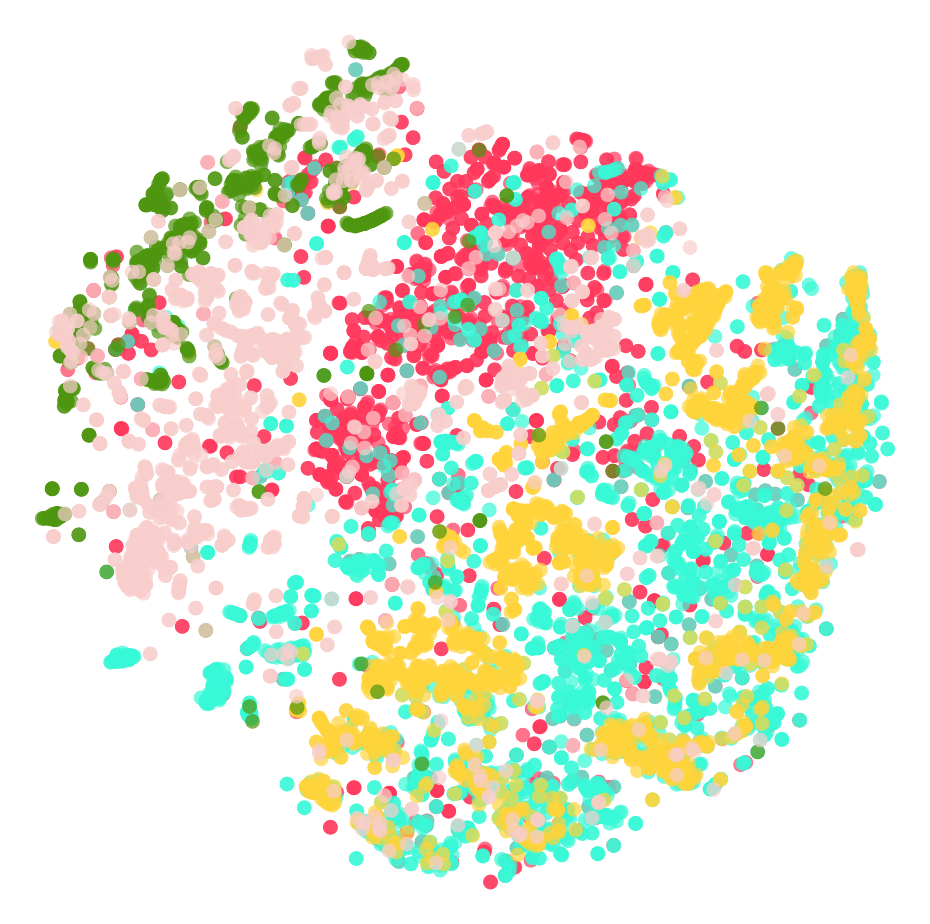}
		\caption{HGDA}
	\end{subfigure}
	\hspace{0.03\textwidth}
	\begin{subfigure}[t]{0.197\textwidth}
		\includegraphics[width=\textwidth]{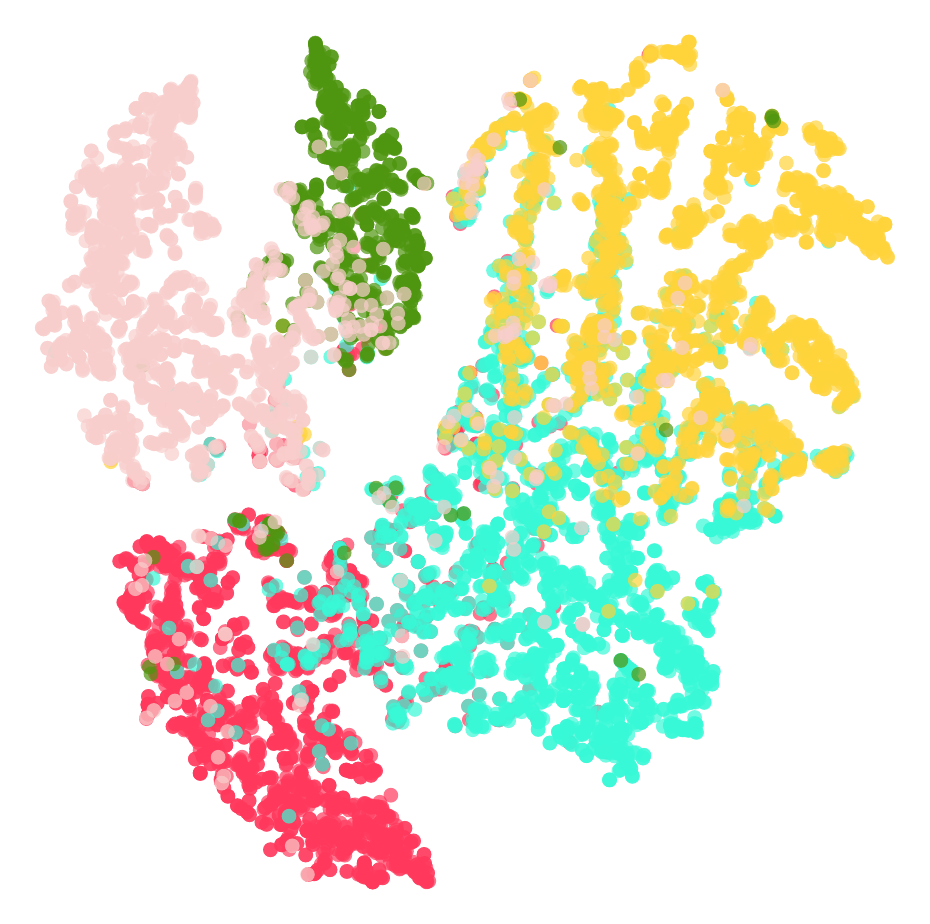}
		\caption{Ours}
	\end{subfigure}
	\caption{Visualization of node embeddings inferred by four models for the A $\to$ C task.}
	\label{vis}
	\vspace{-10pt}
\end{figure*}

To intuitively demonstrate the superiority of ADAlign, we visualize the node embeddings learned by different methods for the A $\to$ C task using t-SNE \cite{maaten2008visualizing}, as illustrated in Figure \ref{vis}.
A contrast can be observed across the visualizations. The embeddings generated by baseline models, such as A2GNN, GAA, and HGDA, show significant overlap and intermingling between classes from the source and target domains. This indicates that these methods struggle to learn representations that are both domain-invariant and class-discriminative, leading to suboptimal adaptation performance.
In contrast, the embeddings produced by our framework demonstrate two desirable characteristics. First, samples from the same class across both domains form compact, well-aligned clusters, highlighting the effectiveness of our spectral alignment mechanism in learning domain-invariant features. Second, the boundaries between different classes are sharp and well-separated, showing that our minimax alignment over characteristic function not only mitigates domain shift but also preserves task-relevant discriminative information.


\section{Conclusion}
In this paper, we introduced the ADAlign framework, a novel approach to handling composite distribution shifts in GDA. By leveraging characteristic function in the complex Fourier domain, ADAlign dynamically identifies and aligns discriminative spectral components through minimax optimization, eliminating the need for manual reweighting or heuristic metrics. Extensive experiments across 16 benchmarks show that ADAlign consistently outperforms state-of-the-art methods, while also improving training stability and reducing computational overhead. These advantages make ADAlign a highly effective and practical solution for real-world graph transfer learning tasks.

\section*{ACKNOWLEDGEMENTS}
This work was supported by the National Key Research and Development Program of China under Grant Nos. 2024YFF0729003, the National Natural Science Foundation of China under Grant Nos. 62176014, 62276015, 62206266, the Fundamental Research Funds for the Central Universities.

\bibliography{iclr2026_conference}

\begin{thebibliography}{63}
\providecommand{\natexlab}[1]{#1}
\providecommand{\url}[1]{\texttt{#1}}
\expandafter\ifx\csname urlstyle\endcsname\relax
  \providecommand{\doi}[1]{doi: #1}\else
  \providecommand{\doi}{doi: \begingroup \urlstyle{rm}\Url}\fi

\bibitem[Ansari et~al.(2020)Ansari, Scarlett, and
  Soh]{ansari2020characteristic}
Abdul~Fatir Ansari, Jonathan Scarlett, and Harold Soh.
\newblock A characteristic function approach to deep implicit generative
  modeling.
\newblock In \emph{CVPR}, 2020.

\bibitem[Arbel et~al.(2019)Arbel, Korba, Salim, and Gretton]{arbel2019maximum}
Michael Arbel, Anna Korba, Adil Salim, and Arthur Gretton.
\newblock Maximum mean discrepancy gradient flow.
\newblock \emph{NeurIPS}, 2019.

\bibitem[Balanda \& MacGillivray(1988)Balanda and
  MacGillivray]{balanda1988kurtosis}
Kevin~P Balanda and H~L MacGillivray.
\newblock Kurtosis: a critical review.
\newblock \emph{The American Statistician}, 1988.

\bibitem[Br{\'e}zis \& Lieb(1983)Br{\'e}zis and Lieb]{brezis1983relation}
Ha{\"\i}m Br{\'e}zis and Elliott Lieb.
\newblock A relation between pointwise convergence of functions and convergence
  of functionals.
\newblock \emph{Proceedings of the American Mathematical Society}, 1983.

\bibitem[Chen et~al.(2021)Chen, Peng, Ma, Li, Du, and Tian]{chen2021amplitude}
Guangyao Chen, Peixi Peng, Li~Ma, Jia Li, Lin Du, and Yonghong Tian.
\newblock Amplitude-phase recombination: Rethinking robustness of convolutional
  neural networks in frequency domain.
\newblock In \emph{CVPR}, 2021.

\bibitem[Chen et~al.(2024)Chen, Wu, Zhang, Zhuang, He, Xie, and
  Xia]{chen2024fairgap}
Wei Chen, Yiqing Wu, Zhao Zhang, Fuzhen Zhuang, Zhongshi He, Ruobing Xie, and
  Feng Xia.
\newblock Fairgap: Fairness-aware recommendation via generating counterfactual
  graph.
\newblock \emph{TOIS}, 2024.

\bibitem[Chen et~al.(2025{\natexlab{a}})Chen, Ye, Wang, Zhang, Zhang, Wang,
  Zhang, and Zhuang]{chen2025smoothness}
Wei Chen, Guo Ye, Yakun Wang, Zhao Zhang, Libang Zhang, Daixin Wang, Zhiqiang
  Zhang, and Fuzhen Zhuang.
\newblock Smoothness really matters: A simple yet effective approach for
  unsupervised graph domain adaptation.
\newblock In \emph{AAAI}, 2025{\natexlab{a}}.

\bibitem[Chen et~al.(2025{\natexlab{b}})Chen, Yuan, Zhang, Xie, Zhuang, Wang,
  and Liu]{chen2025fairdgcl}
Wei Chen, Meng Yuan, Zhao Zhang, Ruobing Xie, Fuzhen Zhuang, Deqing Wang, and
  Rui Liu.
\newblock Fairdgcl: Fairness-aware recommendation with dynamic graph
  contrastive learning.
\newblock \emph{TKDE}, 2025{\natexlab{b}}.

\bibitem[Chen et~al.(2019)Chen, Song, Li, and Wu]{chen2019graph}
Yiming Chen, Shiji Song, Shuang Li, and Cheng Wu.
\newblock A graph embedding framework for maximum mean discrepancy-based domain
  adaptation algorithms.
\newblock \emph{TIP}, 2019.

\bibitem[Dai et~al.(2022)Dai, Wu, Xiao, Shen, and Wang]{dai2022graph}
Quanyu Dai, Xiao-Ming Wu, Jiaren Xiao, Xiao Shen, and Dan Wang.
\newblock Graph transfer learning via adversarial domain adaptation with graph
  convolution.
\newblock \emph{TKDE}, 2022.

\bibitem[Drew et~al.(2000)Drew, Glen, and Leemis]{drew2000computing}
John~H Drew, Andrew~G Glen, and Lawrence~M Leemis.
\newblock Computing the cumulative distribution function of the
  kolmogorov--smirnov statistic.
\newblock \emph{Computational statistics \& data analysis}, 2000.

\bibitem[Fang et~al.(2025{\natexlab{a}})Fang, Li, Kang, Zeng, Dashtbayaz, Pu,
  Wang, and Ling]{fang2025benefits}
Ruiyi Fang, Bingheng Li, Zhao Kang, Qiuhao Zeng, Nima~Hosseini Dashtbayaz,
  Ruizhi Pu, Boyu Wang, and Charles Ling.
\newblock On the benefits of attribute-driven graph domain adaptation.
\newblock \emph{ICLR}, 2025{\natexlab{a}}.

\bibitem[Fang et~al.(2025{\natexlab{b}})Fang, Li, Zhao, Pu, Zeng, Xu, Ling, and
  Wang]{fanghomophily}
Ruiyi Fang, Bingheng Li, Jingyu Zhao, Ruizhi Pu, QIUHAO Zeng, Gezheng Xu,
  Charles Ling, and Boyu Wang.
\newblock Homophily enhanced graph domain adaptation.
\newblock In \emph{ICML}, 2025{\natexlab{b}}.

\bibitem[Feuerverger \& Mureika(1977)Feuerverger and
  Mureika]{feuerverger1977empirical}
Andrey Feuerverger and Roman~A Mureika.
\newblock The empirical characteristic function and its applications.
\newblock \emph{The annals of Statistics}, 1977.

\bibitem[Ghahramani(2024)]{ghahramani2024fundamentals}
Saeed Ghahramani.
\newblock \emph{Fundamentals of probability}.
\newblock CRC Press, 2024.

\bibitem[Guo et~al.(2024)Guo, Zhuang, Zhang, Tong, and
  Dong]{guo2024comprehensive}
Wei Guo, Fuzhen Zhuang, Xiao Zhang, Yiqi Tong, and Jin Dong.
\newblock A comprehensive survey of federated transfer learning: challenges,
  methods and applications.
\newblock \emph{FCS}, 2024.

\bibitem[Huang et~al.(2024)Huang, Xu, Jiang, An, and Yang]{huang2024can}
Renhong Huang, Jiarong Xu, Xin Jiang, Ruichuan An, and Yang Yang.
\newblock Can modifying data address graph domain adaptation?
\newblock In \emph{KDD}, 2024.

\bibitem[Johnson et~al.(2001)Johnson, Sinanovic,
  et~al.]{johnson2001symmetrizing}
Don~H Johnson, Sinan Sinanovic, et~al.
\newblock Symmetrizing the kullback-leibler distance.
\newblock \emph{TIT}, 2001.

\bibitem[Kingma \& Ba.(2015)Kingma and Ba.]{Adam2015}
Diederick~P Kingma and Jimmy Ba.
\newblock Adam: A method for stochastic optimization.
\newblock \emph{ICLR}, 2015.

\bibitem[Kipf \& Welling(2017)Kipf and Welling]{kipf2016semi}
Thomas~N Kipf and Max Welling.
\newblock Semi-supervised classification with graph convolutional networks.
\newblock \emph{ICLR}, 2017.

\bibitem[L{\'e}vy(1959)]{levy1959esquisse}
Paul L{\'e}vy.
\newblock Esquisse d'une th{\'e}orie de la multiplication des variables
  al{\'e}atoires.
\newblock In \emph{Annales scientifiques de l'{\'E}cole normale
  sup{\'e}rieure}, 1959.

\bibitem[Li et~al.(2017)Li, Chang, Cheng, Yang, and P{\'o}czos]{li2017mmd}
Chun-Liang Li, Wei-Cheng Chang, Yu~Cheng, Yiming Yang, and Barnab{\'a}s
  P{\'o}czos.
\newblock Mmd gan: Towards deeper understanding of moment matching network.
\newblock \emph{NeurIPS}, 2017.

\bibitem[Li et~al.(2022{\natexlab{a}})Li, Wang, Zhang, and Zhu]{li2022ood}
Haoyang Li, Xin Wang, Ziwei Zhang, and Wenwu Zhu.
\newblock Ood-gnn: Out-of-distribution generalized graph neural network.
\newblock \emph{TKDE}, 2022{\natexlab{a}}.

\bibitem[Li et~al.(2025)Li, Wang, Zhang, and Zhu]{li2025out}
Haoyang Li, Xin Wang, Ziwei Zhang, and Wenwu Zhu.
\newblock Out-of-distribution generalization on graphs: A survey.
\newblock \emph{TPAMI}, 2025.

\bibitem[Li et~al.(2015)Li, Hu, Tang, and Liu]{li2015unsupervised}
Jundong Li, Xia Hu, Jiliang Tang, and Huan Liu.
\newblock Unsupervised streaming feature selection in social media.
\newblock In \emph{CIKM}, 2015.

\bibitem[Li et~al.(2020)Li, Yu, Xiang, and Mandic]{li2020reciprocal}
Shengxi Li, Zeyang Yu, Min Xiang, and Danilo Mandic.
\newblock Reciprocal adversarial learning via characteristic functions.
\newblock \emph{NeurIPS}, 2020.

\bibitem[Li et~al.(2022{\natexlab{b}})Li, Yu, Xiang, and
  Mandic]{li2022reciprocal}
Shengxi Li, Zeyang Yu, Min Xiang, and Danilo Mandic.
\newblock Reciprocal gan through characteristic functions (rcf-gan).
\newblock \emph{TPAMI}, 2022{\natexlab{b}}.

\bibitem[Li et~al.(2023)Li, Zhang, Li, Xu, Deng, and Li]{li2023neural}
Shengxi Li, Jialu Zhang, Yifei Li, Mai Xu, Xin Deng, and Li~Li.
\newblock Neural characteristic function learning for conditional image
  generation.
\newblock In \emph{CVPR}, 2023.

\bibitem[Liu et~al.(2024{\natexlab{a}})Liu, Fang, Zhang, Gu, Zhou, Wang, and
  Bu]{liu2024rethinking}
Meihan Liu, Zeyu Fang, Zhen Zhang, Ming Gu, Sheng Zhou, Xin Wang, and Jiajun
  Bu.
\newblock Rethinking propagation for unsupervised graph domain adaptation.
\newblock In \emph{AAAI}, 2024{\natexlab{a}}.

\bibitem[Liu et~al.(2023)Liu, Li, Feng, Tran, Zhao, Qiu, and
  Li]{liu2023structural}
Shikun Liu, Tianchun Li, Yongbin Feng, Nhan Tran, Han Zhao, Qiang Qiu, and Pan
  Li.
\newblock Structural re-weighting improves graph domain adaptation.
\newblock In \emph{ICML}, 2023.

\bibitem[Liu et~al.(2024{\natexlab{b}})Liu, Zou, Zhao, and Li]{liupairwise}
Shikun Liu, Deyu Zou, Han Zhao, and Pan Li.
\newblock Pairwise alignment improves graph domain adaptation.
\newblock In \emph{ICML}, 2024{\natexlab{b}}.

\bibitem[Liu et~al.(2021)Liu, Grau, Horrocks, and Kostylev]{liu2021indigo}
Shuwen Liu, Bernardo Grau, Ian Horrocks, and Egor Kostylev.
\newblock Indigo: Gnn-based inductive knowledge graph completion using
  pair-wise encoding.
\newblock \emph{NeurIPS}, 2021.

\bibitem[Lou et~al.(2023)Lou, Li, and Ni]{lou2023pcf}
Hang Lou, Siran Li, and Hao Ni.
\newblock Pcf-gan: generating sequential data via the characteristic function
  of measures on the path space.
\newblock \emph{NeurIPS}, 2023.

\bibitem[Ma et~al.(2021)Ma, Deng, and Mei]{ma2021subgroup}
Jiaqi Ma, Junwei Deng, and Qiaozhu Mei.
\newblock Subgroup generalization and fairness of graph neural networks.
\newblock \emph{NeurIPS}, 2021.

\bibitem[Ma et~al.(2019)Ma, Zhang, and Xu]{ma2019gcan}
Xinhong Ma, Tianzhu Zhang, and Changsheng Xu.
\newblock Gcan: Graph convolutional adversarial network for unsupervised domain
  adaptation.
\newblock In \emph{CVPR}, 2019.

\bibitem[Maaten \& Hinton(2008)Maaten and Hinton]{maaten2008visualizing}
Laurens van~der Maaten and Geoffrey Hinton.
\newblock Visualizing data using t-sne.
\newblock \emph{Journal of Machine Learning Research}, 2008.

\bibitem[Mao et~al.(2023)Mao, Chen, Jin, Han, Ma, Zhao, Shah, and
  Tang]{mao2023demystifying}
Haitao Mao, Zhikai Chen, Wei Jin, Haoyu Han, Yao Ma, Tong Zhao, Neil Shah, and
  Jiliang Tang.
\newblock Demystifying structural disparity in graph neural networks: Can one
  size fit all?
\newblock \emph{NeurIPS}, 2023.

\bibitem[Marcus(1981)]{marcus1981weak}
Michael~B Marcus.
\newblock Weak convergence of the empirical characteristic function.
\newblock \emph{The Annals of Probability}, 1981.

\bibitem[Men{\'e}ndez et~al.(1997)Men{\'e}ndez, Pardo, Pardo, and
  Pardo]{menendez1997jensen}
Mar{\'\i}a~Luisa Men{\'e}ndez, Julio~Angel Pardo, Leandro Pardo, and Mar{\'\i}a
  del~C Pardo.
\newblock The jensen-shannon divergence.
\newblock \emph{Journal of the Franklin Institute}, 1997.

\bibitem[Ozaktas \& Ayt{\"u}r(1995)Ozaktas and
  Ayt{\"u}r]{ozaktas1995fractional}
Haldun~M Ozaktas and Orhan Ayt{\"u}r.
\newblock Fractional fourier domains.
\newblock \emph{Signal Processing}, 1995.

\bibitem[Panaretos \& Zemel(2019)Panaretos and Zemel]{panaretos2019statistical}
Victor~M Panaretos and Yoav Zemel.
\newblock Statistical aspects of wasserstein distances.
\newblock \emph{Annual review of statistics and its application}, 2019.

\bibitem[Pang et~al.(2023)Pang, Wang, Tang, Xiao, and Yin]{pang2023sa}
Jinhui Pang, Zixuan Wang, Jiliang Tang, Mingyan Xiao, and Nan Yin.
\newblock Sa-gda: Spectral augmentation for graph domain adaptation.
\newblock In \emph{ACM MM}, 2023.

\bibitem[P{\'o}lya et~al.(1949)]{polya1949remarks}
George P{\'o}lya et~al.
\newblock Remarks on characteristic functions.
\newblock In \emph{Proc. First Berkeley Conf. on Math. Stat. and Prob}, 1949.

\bibitem[Ribeiro et~al.(2017)Ribeiro, Saverese, and
  Figueiredo]{ribeiro2017struc2vec}
Leonardo~FR Ribeiro, Pedro~HP Saverese, and Daniel~R Figueiredo.
\newblock struc2vec: Learning node representations from structural identity.
\newblock In \emph{KDD}, 2017.

\bibitem[Seglen(1992)]{seglen1992skewness}
Per~O Seglen.
\newblock The skewness of science.
\newblock \emph{Journal of the American Society for Information Science}, 1992.

\bibitem[Shao et~al.(2024)Shao, Li, Gu, Yin, Li, Miao, Zhang, Cui, and
  Chen]{shao2024distributed}
Yingxia Shao, Hongzheng Li, Xizhi Gu, Hongbo Yin, Yawen Li, Xupeng Miao, Wentao
  Zhang, Bin Cui, and Lei Chen.
\newblock Distributed graph neural network training: A survey.
\newblock \emph{ACM Computing Surveys}, 2024.

\bibitem[Ushakov(1999)]{ushakov1999selected}
Nikolai~G Ushakov.
\newblock \emph{Selected topics in characteristic functions}.
\newblock 1999.

\bibitem[Veli{\v{c}}kovi{\'c} et~al.(2018)Veli{\v{c}}kovi{\'c}, Cucurull,
  Casanova, Romero, Lio, and Bengio]{velivckovic2017graph}
Petar Veli{\v{c}}kovi{\'c}, Guillem Cucurull, Arantxa Casanova, Adriana Romero,
  Pietro Lio, and Yoshua Bengio.
\newblock Graph attention networks.
\newblock \emph{ICLR}, 2018.

\bibitem[Waller et~al.(1995)Waller, Turnbull, and Hardin]{waller1995obtaining}
Lance~A Waller, Bruce~W Turnbull, and J~Michael Hardin.
\newblock Obtaining distribution functions by numerical inversion of
  characteristic functions with applications.
\newblock \emph{The American Statistician}, 1995.

\bibitem[Wang et~al.(2025)Wang, Yang, Liu, Sun, Hu, He, and
  Zhang]{wang2025dataset}
Shaobo Wang, Yicun Yang, Zhiyuan Liu, Chenghao Sun, Xuming Hu, Conghui He, and
  Linfeng Zhang.
\newblock Dataset distillation with neural characteristic function: A minmax
  perspective.
\newblock In \emph{CVPR}, 2025.

\bibitem[Wu et~al.(2023)Wu, He, and Ainsworth]{wu2023non}
Jun Wu, Jingrui He, and Elizabeth Ainsworth.
\newblock Non-iid transfer learning on graphs.
\newblock In \emph{AAAI}, 2023.

\bibitem[Wu et~al.(2020)Wu, Pan, Zhou, Chang, and Zhu]{wu2020unsupervised}
Man Wu, Shirui Pan, Chuan Zhou, Xiaojun Chang, and Xingquan Zhu.
\newblock Unsupervised domain adaptive graph convolutional networks.
\newblock In \emph{WWW}, 2020.

\bibitem[Xiao et~al.(2022)Xiao, Wang, Dai, and Guo]{xiao2022graph}
Shunxin Xiao, Shiping Wang, Yuanfei Dai, and Wenzhong Guo.
\newblock Graph neural networks in node classification: survey and evaluation.
\newblock \emph{Machine Vision and Applications}, 2022.

\bibitem[Xie et~al.(2022)Xie, Long, Lv, Wang, and Li]{xie2022joint}
Jiangtao Xie, Fei Long, Jiaming Lv, Qilong Wang, and Peihua Li.
\newblock Joint distribution matters: Deep brownian distance covariance for
  few-shot classification.
\newblock In \emph{CVPR}, 2022.

\bibitem[Xu et~al.(2019)Xu, Hu, Leskovec, and Jegelka]{xu2018powerful}
Keyulu Xu, Weihua Hu, Jure Leskovec, and Stefanie Jegelka.
\newblock How powerful are graph neural networks?
\newblock \emph{ICLR}, 2019.

\bibitem[Yang et~al.(2025)Yang, Chen, Zhuo, Jin, Wang, Cao, Wang, and
  Guo]{yangdisentangled}
Liang Yang, Xin Chen, Jiaming Zhuo, Di~Jin, Chuan Wang, Xiaochun Cao, Zhen
  Wang, and Yuanfang Guo.
\newblock Disentangled graph spectral domain adaptation.
\newblock In \emph{ICML}, 2025.

\bibitem[You et~al.(2023)You, Chen, Wang, and Shen]{you2023graph}
Yuning You, Tianlong Chen, Zhangyang Wang, and Yang Shen.
\newblock Graph domain adaptation via theory-grounded spectral regularization.
\newblock In \emph{ICLR}, 2023.

\bibitem[Yu(2004)]{yu2004empirical}
Jun Yu.
\newblock Empirical characteristic function estimation and its applications.
\newblock \emph{Econometric reviews}, 2004.

\bibitem[Yuan et~al.(2025)Yuan, Xiao, Chen, Zhao, Wang, and
  Zhuang]{yuan2025hyperbolic}
Meng Yuan, Yutian Xiao, Wei Chen, Chou Zhao, Deqing Wang, and Fuzhen Zhuang.
\newblock Hyperbolic diffusion recommender model.
\newblock In \emph{WWW}, 2025.

\bibitem[Zhang et~al.(2021)Zhang, Zhong, Zhu, Zhang, Cao, and
  Zhang]{zhang2021m2guda}
Chengyuan Zhang, Zhi Zhong, Lei Zhu, Shichao Zhang, Da~Cao, and Jianfeng Zhang.
\newblock M2guda: Multi-metrics graph-based unsupervised domain adaptation for
  cross-modal hashing.
\newblock In \emph{ICMR}, 2021.

\bibitem[Zhang et~al.(2024)Zhang, Zhang, Zhuang, Zhao, Wang, and
  Zheng]{zhang2024temporal}
Fuwei Zhang, Zhao Zhang, Fuzhen Zhuang, Yu~Zhao, Deqing Wang, and Hongwei
  Zheng.
\newblock Temporal knowledge graph reasoning with dynamic memory enhancement.
\newblock \emph{TKDE}, 2024.

\bibitem[Zhang et~al.(2019)Zhang, Song, Du, Yang, and Jin]{zhang2019dane}
Yizhou Zhang, Guojie Song, Lun Du, Shuwen Yang, and Yilun Jin.
\newblock Dane: Domain adaptive network embedding.
\newblock \emph{IJCAI}, 2019.

\bibitem[Zhuang et~al.(2020)Zhuang, Qi, Duan, Xi, Zhu, Zhu, Xiong, and
  He]{zhuang2020comprehensive}
Fuzhen Zhuang, Zhiyuan Qi, Keyu Duan, Dongbo Xi, Yongchun Zhu, Hengshu Zhu, Hui
  Xiong, and Qing He.
\newblock A comprehensive survey on transfer learning.
\newblock \emph{Proceedings of the IEEE}, 2020.

\end{thebibliography}
\bibliographystyle{iclr2026_conference}

\newpage
\appendix
{\centering\LARGE\textbf{Appendix}\par}
\vspace{-60pt}
\hypersetup{
	linkcolor=black,    
	anchorcolor=black,  
	citecolor=black     
}
\addcontentsline{toc}{section}{Appendix} 
\part{} 
\parttoc 
\hypersetup{
	linkcolor=red,      
	anchorcolor=red,
	citecolor=brown
}
\section{The Use of Large Language Models (LLMs)}
In this work, we utilized LLMs solely for the purpose of English language refinement. These models were employed to assist with the proofreading and enhancement of written text, ensuring clarity, coherence, and grammatical accuracy. The LLMs were not used for generating content, and all research, analysis, and conclusions presented are the result of our own work and independent thought.

\section{Proof of Proposition 1 }
\label{Proof of Proposition 1}
\begin{definition}
	The nodes are partitioned into two disjoint classes, denoted as $c_0$ and $c_1$. Each node feature vector $\mathbf{x}$ is sampled from a Gaussian distribution $N(\boldsymbol{\mu}_i, \sigma_i)$ for $i \in \{0,1\}$. The source and target graphs each contain nodes from both classes, denoted as $c_i^{(\mathsf{S})}$ and $c_i^{(\mathsf{T})}$, respectively. The class distribution is balanced, i.e., $\text{P}(Y = c_0) = \text{P}(Y = c_1)$.
\end{definition}

\begin{theorem}
	\label{theorem1}
	For any two nodes $s \in \mathcal{V}^{\mathsf{S}}$ and $t \in \mathcal{V}^{\mathsf{T}}$ with aggregated feature representations $Z=  \text{GNN}(\mathbf{x})$, the following inequality \cite{mao2023demystifying,fang2025benefits} holds:
	\begin{equation}
		\left\| \text{P}(y_u = c_0 \mid Z_u) - \text{P}(y_v = c_0 \mid Z_v) \right\| \leq \mathcal{O}\left( \|Z_u - Z_v\| + \|Z_v - \boldsymbol{\mu}_1^{(\mathsf{S})}\| + \|Z_v - \boldsymbol{\mu}_1^{(\mathsf{T})}\| \right).
	\end{equation}
\end{theorem}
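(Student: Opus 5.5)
We will work directly with the two-class Gaussian model of the preceding Definition and write the posterior in closed form. Because the classes are balanced, Bayes' rule gives, for a node $u$ with aggregated representation $Z_u$,
\[
\text{P}(y_u = c_0 \mid Z_u) = \frac{p_0(Z_u)}{p_0(Z_u)+p_1(Z_u)} = \mathrm{sig}\bigl(g(Z_u)\bigr),
\]
where $p_i$ is the class-conditional density of the aggregated features and $g=\log p_0-\log p_1$ is the log-likelihood ratio. We treat aggregation by the GNN as a (Lipschitz) linear smoothing. The aggregated class-conditional distributions are then again approximately Gaussian, with means $\boldsymbol{\mu}_i^{(\mathsf{S})}$ or $\boldsymbol{\mu}_i^{(\mathsf{T})}$ depending on the domain and with bounded variances. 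Hence $g$ is a quadratic, or affine in the equal-variance case, of the form $\|Z-\boldsymbol{\mu}_1\|^2/\sigma_1^2-\|Z-\boldsymbol{\mu}_0\|^2/\sigma_0^2$ plus a constant.

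The core step uses that the sigmoid is $\tfrac14$-Lipschitz, so the left-hand side is at most $\tfrac14|g^{\mathsf{S}}(Z_u)-g^{\mathsf{T}}(Z_v)|$. Here $g^{\mathsf{S}}$ and $g^{\mathsf{T}}$ are the log-likelihood ratios built from the source and target means, respectively. We insert the intermediate term $g^{\mathsf{S}}(Z_v)$ and apply the triangle inequality, obtaining two pieces.

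\emph{First piece, $|g^{\mathsf{S}}(Z_u)-g^{\mathsf{S}}(Z_v)|$.} This is handled by a local Lipschitz bound on $g^{\mathsf{S}}$. Its gradient is affine in $Z$, so on the bounded region where the embeddings live it is $\mathcal{O}(\|Z_u-Z_v\|)$.

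\emph{Second piece, $|g^{\mathsf{S}}(Z_v)-g^{\mathsf{T}}(Z_v)|$.} Since only the means differ, the difference of squared distances $\|Z_v-\boldsymbol{\mu}_1^{(\mathsf{S})}\|^2$ versus $\|Z_v-\boldsymbol{\mu}_1^{(\mathsf{T})}\|^2$ factors as $(a-b)(a+b)$. It is therefore controlled by $\|Z_v-\boldsymbol{\mu}_1^{(\mathsf{S})}\|+\|Z_v-\boldsymbol{\mu}_1^{(\mathsf{T})}\|$ times a bounded factor. The class-$c_0$ terms are handled the same way; they are either absorbed into the constant or reduced to the $c_1$ terms by writing $\boldsymbol{\mu}_0=\boldsymbol{\mu}_1+(\boldsymbol{\mu}_0-\boldsymbol{\mu}_1)$, with a fixed class gap.

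Putting the two pieces together gives exactly the three terms in the $\mathcal{O}(\cdot)$.

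The main obstacle is justifying the boundedness that turns the quadratic differences into linear ones. We need (i) embeddings and means confined to a bounded set, which follows from bounded features and normalized aggregation, and (ii) variances bounded away from zero so that the constants hidden in $\mathcal{O}$ are finite. Our first attempt will be to assume bounded feature norms and equal per-class variances across domains. Under those assumptions $g$ is affine and the argument is cleanest, since the source/target mismatch reduces to mean differences, and those are bounded by $\|Z_v-\boldsymbol{\mu}^{(\mathsf{S})}\|+\|Z_v-\boldsymbol{\mu}^{(\mathsf{T})}\|$ via the triangle inequality. Following \cite{mao2023demystifying,fang2025benefits}, this is the setting in which the statement is intended.
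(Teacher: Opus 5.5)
The paper gives no proof of this inequality. It is quoted from Mao et al.\ and Fang et al.\ and used as a black box in the proof of Proposition~1, so your argument has to stand on its own. Its skeleton is the right one. Balanced classes give $\text{P}(y=c_0\mid Z)=\mathrm{sig}(g(Z))$ with $g=\log p_0-\log p_1$, the sigmoid is $\tfrac14$-Lipschitz, and you pass through $g^{\mathsf{S}}(Z_v)$. The class-$c_1$ part of $g^{\mathsf{S}}(Z_v)-g^{\mathsf{T}}(Z_v)$ is correctly handled by $|a^2-b^2|=|a-b|(a+b)$ with $|a-b|\le\|\boldsymbol{\mu}_1^{(\mathsf{S})}-\boldsymbol{\mu}_1^{(\mathsf{T})}\|$ bounded. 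The gap is the class-$c_0$ part.

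That part cannot be ``absorbed into the constant'': $\mathcal{O}(\cdot)$ hides only a multiplicative constant, and the right-hand side can vanish. Take the model you set up, with the hypotheses of your last paragraph strengthened to bounded features and one isotropic variance $\sigma^2 I$ for all classes and both domains. Set $Z_u=Z_v=\boldsymbol{\mu}_1^{(\mathsf{S})}=\boldsymbol{\mu}_1^{(\mathsf{T})}=\mathbf{0}$, $\boldsymbol{\mu}_0^{(\mathsf{S})}=\mathbf{e}_1$, $\boldsymbol{\mu}_0^{(\mathsf{T})}=2\mathbf{e}_1$. The right-hand side is $0$, while the left-hand side equals $|\mathrm{sig}(-1/(2\sigma^2))-\mathrm{sig}(-2/\sigma^2)|>0$. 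So your closing step fails. The claim that ``the mean differences are bounded by $\|Z_v-\boldsymbol{\mu}^{(\mathsf{S})}\|+\|Z_v-\boldsymbol{\mu}^{(\mathsf{T})}\|$'' controls only the class-$c_1$ difference by quantities that actually appear on the right-hand side. Without a further hypothesis the inequality is false.

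The ``fixed class gap'' you mention in passing is exactly the missing hypothesis. State it up front rather than offering it as one of two options. Assume the gap $\boldsymbol{\Delta}=\boldsymbol{\mu}_0-\boldsymbol{\mu}_1$ is the same in both domains, or more weakly that $\|\boldsymbol{\mu}_0^{(\mathsf{S})}-\boldsymbol{\mu}_0^{(\mathsf{T})}\|\le C\|\boldsymbol{\mu}_1^{(\mathsf{S})}-\boldsymbol{\mu}_1^{(\mathsf{T})}\|$. The weaker form holds, e.g., when both aggregated class means come from shared base means through the same homophily-dependent mixing.

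Identical class variances in the two domains are likewise necessary, not merely convenient. A variance mismatch alone (say $\sigma_{\mathsf{S}}\neq\sigma_{\mathsf{T}}$, shared across classes, with $\boldsymbol{\Delta}\neq\mathbf{0}$) already makes the left-hand side positive at $Z_u=Z_v=\boldsymbol{\mu}_1^{(\mathsf{S})}=\boldsymbol{\mu}_1^{(\mathsf{T})}$.

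With a common gap, $\|Z_v-\boldsymbol{\mu}_0^{(\mathsf{S})}\|^2-\|Z_v-\boldsymbol{\mu}_0^{(\mathsf{T})}\|^2$ equals the corresponding class-$c_1$ difference plus the cross term $2\boldsymbol{\Delta}^{\top}(\boldsymbol{\mu}_1^{(\mathsf{S})}-\boldsymbol{\mu}_1^{(\mathsf{T})})$. That cross term is closed by $\|\boldsymbol{\mu}_1^{(\mathsf{S})}-\boldsymbol{\mu}_1^{(\mathsf{T})}\|\le\|Z_v-\boldsymbol{\mu}_1^{(\mathsf{S})}\|+\|Z_v-\boldsymbol{\mu}_1^{(\mathsf{T})}\|$. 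Under the weaker coupling, the same triangle inequality plus your boundedness assumption does the job, and the rest of your argument goes through.

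One small correction: equal variances across domains cancel the log-determinant constants in $g^{\mathsf{S}}-g^{\mathsf{T}}$, but they do not make $g$ affine. That requires equal variances across the two classes; otherwise you need the bounded-region Lipschitz step you already describe.
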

\textbf{Proposition 1} \(\left(\text{Bound for } \textsf{D}_{\gamma/2}^{\mathsf{S}, \mathsf{T}}(\mathbb{P}; \xi)\right)\).  
For any \(\gamma \geq 0\), under the assumption that the prior distribution \(\mathbb{P}\) over the function family \(\mathcal{H}\) is specified, the domain discrepancy \(\textsf{D}_{\gamma/2}^{\mathsf{S}, \mathsf{T}}(\mathbb{P}; \xi)\) can be bounded as follows:
\begin{equation}
	\textsf{D}_{\gamma/2}^{\mathsf{S}, \mathsf{T}}(\mathbb{P}; \xi) \leq \mathcal{O}\left( \sum_{m=1}^{M} \omega_m \left|\Psi^{\mathsf{S}}(\mathbf{t}_m) - \Psi^{\mathsf{T}}(\mathbf{t}_m)\right|^2 \right),
\end{equation}
where \(\{\mathbf{t}_m\}_{m=1}^{M} \subset \mathbb{R}^d\) are frequency samples, and \(\{\omega_m\}_{m=1}^{M}\) are nonnegative quadrature weights that approximate a target spectral measure.

\begin{proof}
	For notational simplicity, let \(h_i \equiv h_i(\mathcal{X}, \mathcal{G})\) for any \(i \in \mathcal{V}^{\mathsf{S}} \cup \mathcal{V}^{\mathsf{T}}\). Define \(\eta_k(i) = \mathbb{P}(y_i = k \mid Z_i)\) for \(k \in \{0, 1\}\), and let \(\mathcal{L}_\gamma(h_i, y_i) = \mathbbm{1}\left[ h_i[y_i] \leq \gamma + \max_{k \neq y_i} h_i[k] \right]\).
	
	The difference in expected margin losses between the source and target domains is given by:
	\begin{equation}
		\mathcal{L}^{\mathsf{T}}_{\gamma/2}(h) - \mathcal{L}^{\mathsf{S}}_{\gamma}(h) = \mathbb{E}_{y^{\mathsf{T}}} \left[ \frac{1}{N_{\mathsf{T}}} \sum_{j \in \mathcal{V}^{\mathsf{T}}} \mathcal{L}_{\gamma/2}(h_j, y_j) \right] - \mathbb{E}_{y^{\mathsf{S}}} \left[ \frac{1}{N_{\mathsf{S}}} \sum_{i \in \mathcal{V}^{\mathsf{S}}} \mathcal{L}_{\gamma}(h_i, y_i) \right].
		\label{24}
	\end{equation}
	
	Using Definition 1, Eq. (\ref{24}) can be rewritten and bounded as follows:
	\begin{align}
		&\leq \frac{1}{\max(N_{\mathsf{S}}, N_{\mathsf{T}})} \sum_{i \in \mathcal{V}^{\mathsf{S}}} \frac{1}{N_{\mathsf{T}}} \left( \sum_{j \in \mathcal{V}^{\mathsf{T}}} \mathbb{E}_{y_j} \mathcal{L}_{\gamma/2}(h_j, y_j) - \mathbb{E}_{y_i} \mathcal{L}_{\gamma}(h_i, y_i) \right) \\
		&= \frac{1}{\max(N_{\mathsf{S}}, N_{\mathsf{T}})} \sum_{i \in \mathcal{V}^{\mathsf{S}}} \frac{1}{N_{\mathsf{T}}} \sum_{j \in \mathcal{V}^{\mathsf{T}}} \sum_k \left( \eta_k(j) \mathcal{L}_{\gamma/2}(h_j, k) - \mathbb{P}(y_i = k) \mathcal{L}_{\gamma}(h_i, k) \right) \\
		&= \frac{1}{\max(N_{\mathsf{S}}, N_{\mathsf{T}})} \sum_{i \in \mathcal{V}^{\mathsf{S}}} \frac{1}{N_{\mathsf{T}}} \sum_{j \in \mathcal{V}^{\mathsf{T}}} \sum_k \left( \eta_k(j) \mathcal{L}_{\gamma/2}(h_j, k) - \eta_k(i) \mathcal{L}_{\gamma}(h_i, k) \right) \\
		&= \frac{1}{\max(N_{\mathsf{S}}, N_{\mathsf{T}})} \sum_{i \in \mathcal{V}^{\mathsf{S}}} \frac{1}{N_{\mathsf{T}}} \sum_{j \in \mathcal{V}^{\mathsf{T}}} \sum_k \left( \eta_k(j) \left( \mathcal{L}_{\gamma/2}(h_j, k) - \mathcal{L}_{\gamma}(h_i, k) \right) + (\eta_k(j) - \eta_k(i)) \mathcal{L}_{\gamma}(h_i, k) \right) \\
		&\leq \frac{1}{\max(N_{\mathsf{S}}, N_{\mathsf{T}})} \sum_{i \in \mathcal{V}^{\mathsf{S}}} \frac{1}{N_{\mathsf{T}}} \sum_{j \in \mathcal{V}^{\mathsf{T}}} \sum_k \left( \mathcal{L}_{\gamma/2}(h_j, k) - \mathcal{L}_{\gamma}(h_i, k) + \| \eta_k(j) - \eta_k(i) \|_2^2 \right).
	\end{align}
	
	Now, applying Theorem \ref{theorem1} to bound the difference in conditional probabilities, we obtain:
	\begin{align}
				\small
		&\sum_k \left\|\eta_k(j) - \eta_k(i)\right\|_2^2 \\&\leq \mathcal{O}\left( \mathbb{E} \left[ \| Z_u - Z_v \|_2^2 \right] + \sum_{c=0}^{k} \left( \mathrm{Var}_{\mathsf{S}}(Z_u \mid y_u = c) + \mathrm{Var}_{\mathsf{T}}(Z_v \mid y_v = c) + \| \boldsymbol{\mu}_c^{(\mathsf{S})} - \boldsymbol{\mu}_c^{(\mathsf{T})} \|_2^2 \right) \right),
	\end{align}
	where \(\boldsymbol{\mu}_c^{(\mathsf{S})}\) and \(\boldsymbol{\mu}_c^{(\mathsf{T})}\) are the class-conditional means for the source and target domains, respectively. This bound is second-order in nature, as it controls the discrepancy through pairwise distances and class-conditional variances or means. In principle, one could generalize this to a \(K\)-th order metric by incorporating higher-order moments (e.g., skewness \cite{seglen1992skewness}, kurtosis \cite{balanda1988kurtosis}). However, such higher-order terms quickly become cumbersome and may lead to instability in high-dimensional settings.
	To handle this in a more unified and tractable manner, we utilize characteristic function, which encode moments of all orders and can uniquely determine a distribution. The characteristic function for a random vector \(\mathbf{X}\) is defined as:
	\begin{equation}
		\Psi_{\mathbf{X}}(\mathbf{t}) = \mathbb{E}\left[e^{i \mathbf{t}^{\top} \mathbf{X}}\right],
	\end{equation}
	which can be expanded into a power series:
	\begin{equation}
		\Psi_{\mathbf{X}}(\mathbf{t}) = \sum_{k=0}^{\infty} \frac{i^k}{k!} (\mathbf{t}^{\top})^k \mathbb{E}[\mathbf{X}^{\otimes k}].
	\end{equation}
	This expansion makes clear that the characteristic function encapsulates moments of all orders, which naturally subsumes the information contained in second-order bounds while remaining computationally efficient for distributional alignment.
	Motivated by this representation, we lift the second-order bound to a distributional discrepancy on the complex plane via CF:
	\begin{equation}
	\textsf{D}_{\gamma/2}^{\mathsf{S}, \mathsf{T}}(\mathbb{P}; \xi)  \leq \mathcal{O}\left( \sum_{m=1}^{M} \omega_m \left|\Psi^{\mathsf{S}}(\mathbf{t}_m) - \Psi^{\mathsf{T}}(\mathbf{t}_m)\right|^2 \right).
	\end{equation}
	
	This bound suggests that the domain discrepancy can be controlled by a weighted sum of squared differences between the characteristic functions of the source and target domains, evaluated at multiple frequency points. The weights \(\omega_m\) allow for adaptive weighting of different frequency components, reflecting the intuition that some frequencies may be more important for generalization than others.
	
	Our proposed adaptive frequency selection mechanism directly leverages this insight. The weights \(\omega_m\) are learned to emphasize frequencies where significant cross-domain discrepancies exist. This allows the model to focus on discriminative features while suppressing irrelevant or noisy components, leading to more robust and effective domain adaptation.
\end{proof}

\textbf{Remark.} To clarify the relationship between the learned frequency distribution \( p_\mathcal{T}(\mathbf{t}; \bm{\phi}) \) and the discrepancy weights \( \omega_m \), we observe that the adaptive frequency sampler effectively prioritizes the most relevant frequencies based on the transfer scenario. Specifically, the optimization of the alignment loss with respect to the  $p_\mathcal{T}(\mathbf{t}; \bm{\phi}) $ is equivalent to or upper-bounded by maximizing a weighted sum of individual frequency losses, where the weights \( \omega_m \) correspond to the importance of each frequency in driving the overall distributional alignment. This relationship can be formalized through importance sampling, where $p_\mathcal{T}(\mathbf{t}; \bm{\phi}) $ adjusts the sampling probability of each frequency, naturally aligning the most relevant features without requiring explicit weight assignments.

\section{Training Details} 
\label{Training Details}
\subsection{Complexity Analysis}
In this section, we analyze the time complexity of the proposed ADAlign framework. 
For one training iteration, the computational cost mainly comes from GNN message passing and spectral alignment. 
Suppose the source and target graphs together have $|\mathcal{V}|$ nodes and $|\mathcal{E}|$ edges, the embedding dimension is $d$, the number of layers is $L$, and $M$ frequency samples are used in alignment. 
The GNN forward and backward passes take $\mathcal{O}\!\big(L(|\mathcal{E}|\,d + |\mathcal{V}|\,d^2)\big)$, reflecting sparse matrix multiplications and linear transformations at each layer. 
The spectral alignment requires computing $M$ frequency responses for all node embeddings, each involving a $d$-dimensional inner product, resulting in $\mathcal{O}(M|\mathcal{V}|d)$. 
Thus, the overall per-iteration complexity is 
$
\mathcal{O}\!\big(L(|\mathcal{E}|\,d + |\mathcal{V}|\,d^2)\big) + \mathcal{O}(M|\mathcal{V}|d),
$
which is linear in the graph size $|\mathcal{V}|,|\mathcal{E}|$ and at most quadratic in the embedding dimension $d$.

\subsection{Algorithm of ADAlign}
The complete training procedure of our framework is summarized in Algorithm \ref{alg:minimax}.
\begin{algorithm}[t]
	\caption{minimax Optimization for Graph Domain Adaptation}
	\label{alg:minimax}
	\KwIn{Source graph $\mathcal{G}^\mathsf{S}=(\mathcal{X}^\mathsf{S}, \mathcal{A}^\mathsf{S}, \mathcal{Y}^\mathsf{S})$, 
		target graph $\mathcal{G}^\mathsf{T}=(\mathcal{X}^\mathsf{T}, \mathcal{A}^\mathsf{T})$, 
		trade-off parameter $\lambda$, learning rates $\eta_\delta,\eta_\phi$.}
	\KwOut{Trained GNN parameters $\bm{\delta}$, frequency selector parameters $\bm{\phi}$.}
	
	Initialize $\bm{\delta}$, $\bm{\phi}$ randomly\;
	
	\For{each epoch $=1,\dots, T_{\text{train}}$}{
		Sample mini-batch from $\mathcal{G}^\mathsf{S}$ and $\mathcal{G}^\mathsf{T}$\;
		
		\tcc{Step A: Update GNN parameters $\bm{\delta}$ (fix $\bm{\phi}$)}
		Compute source loss: 
		$\mathcal{L}_{\text{source}}(\bm{\delta}) = \text{CE}\big(\text{GNN}_{\bm{\delta}}(\mathcal{X}^\mathsf{S}, \mathcal{A}^\mathsf{S}), \mathcal{Y}^\mathsf{S}\big)$\;
		Compute alignment loss: 
		$\mathcal{L}_{\text{align}}(\bm{\delta}, \bm{\phi}) = \int \sqrt{\bm{\ell}_\kappa(\mathbf{t}; \bm{\delta})} \, p_{\mathcal{T}}(\mathbf{t}; \bm{\phi}) \, d\mathbf{t}$\;
		Total loss: $\mathcal{L} = \mathcal{L}_{\text{source}} + \lambda \, \mathcal{L}_{\text{align}}$\;
		Update $\bm{\delta} \leftarrow \bm{\delta} - \eta_\delta \nabla_{\bm{\delta}} \mathcal{L}$\;
		
		\tcc{Step B: Update frequency sampler $\bm{\phi}$ (fix $\bm{\delta}$)}
		Compute alignment loss with fixed $\bm{\delta}$\;
		Update $\bm{\phi} \leftarrow \bm{\phi} + \eta_\phi \nabla_{\bm{\phi}} \mathcal{L}_{\text{align}}$
	}
	\Return{$\bm{\delta}, \bm{\phi}$}.
\end{algorithm}
\section{Experimental Setup Details}
\label{experiment setup}
\begin{table*}[!]
	\centering
	\renewcommand\arraystretch{0.8}
	\begin{tabular}{l|l|l|l|c|c}
		\hline
		Types & \# Datasets & \# Nodes & \# Edges & \# Feat Dims & \# Labels \\
		\hline 	\hline
		\multirow{3}{*}{ArnetMiner} 
		& ACMv9 (A)  & 9,360 & 31,112 & \multirow{3}{*}{6,775} & \multirow{3}{*}{5} \\
		& Citationv1 (C) & 8,935 & 30,196 &  &  \\
		& DBLPv7 (D) & 5,484 & 16,234 &  &  \\
		\hline
		\multirow{3}{*}{Airport} 
		& USA (U) & 1,190 & 27,198 & \multirow{3}{*}{241} & \multirow{3}{*}{4} \\
		& Brazil (B) & 131  & 2,148 &  &  \\
		& Europe (E) & 399  & 11,990 &  &  \\
		\hline
				\multirow{2}{*}{Twitch} 
		& England (EN) & 7,126 & 35,324 & \multirow{2}{*}{3,170} & \multirow{2}{*}{2} \\
		& Germany (DE) & 9,498  & 153,138 &  &  \\
		\hline
		\multirow{2}{*}{Blog} 
		& Blog1 (B1) & 2,300 & 66,942 & \multirow{2}{*}{8,189} & \multirow{2}{*}{6} \\
		& Blog2 (B2) & 2,896 & 107,672 &  &  \\
		\hline 	\hline
	\end{tabular}
	\caption{Dataset statistics.}
	\label{data_table}
\end{table*}

\subsection{Dataset Description}
We conduct node classification experiments across four distinct migration settings, as outlined in Table \ref{data_table}. In each setting, our model is trained on one graph and its performance is evaluated on the remaining graphs. The datasets used in these experiments are as follows:
\begin{itemize}[leftmargin=*,labelindent=0pt,topsep=0pt,itemsep=1pt]
	\item \noindent ACMv9 (A), Citationv1 (C), DBLPv7 (D): These datasets are citation networks collected from distinct academic sources. In these networks, each node represents a research paper, and the edges denote citation relationships between papers. The goal of the task is to classify each research paper into its corresponding category. The datasets are sourced from different time periods and repositories: ACM (before 2008), DBLP (from 2004 to 2008), and Microsoft Academic Graph (after 2010). We perform experiments under six different transfer scenarios, namely: \text{A $\rightarrow$ C}, \text{A $\rightarrow$ D}, \text{C $\rightarrow$ A}, \text{C $\rightarrow$ D}, \text{D $\rightarrow$ A}, and \text{D $\rightarrow$ C}.
	\item \noindent USA (U), Brazil (B), Europe (E): These datasets are derived from transportation statistics, specifically focusing on aviation data. In these datasets, each node corresponds to an airport, and the edges represent direct flight connections between two airports. The objective of the task is to predict the category of each airport based on its characteristics and network relationships. We explore the following six transfer scenarios: \text{U $\rightarrow$ B}, \text{U $\rightarrow$ E}, \text{B $\rightarrow$ U}, \text{B $\rightarrow$ E}, \text{E $\rightarrow$ U}, and \text{E $\rightarrow$ B}.
	\item \noindent Blog1 (B1), Blog2 (B2): These datasets come from the BlogCatalog dataset, where each node represents a blogger, and the edges represent the friendship relationships between bloggers. The node attributes consist of keywords extracted from the bloggers’ self-descriptions, and the task is to predict the group or category to which each blogger belongs. We set up two specific transfer scenarios: \text{B1 $\rightarrow$ B2} and \text{B2 $\rightarrow$ B1}.
	\item \noindent Germany (DE), England (EN): These datasets are sourced from the Twitch Social Networks, where nodes represent streamers, and edges denote the follower relationships between streamers. Each node is associated with attributes such as categories or tags that describe the streamer’s profile. The task is to predict the category of each streamer based on their attributes and relationships within the network. We investigate two migration scenarios: \text{DE $\rightarrow$ EN} and \text{EN $\rightarrow$ DE}.

\end{itemize}
\subsection{Baselines}
\label{baseline}
We compare our method with the following baseline methods, which can be divided into two categories: (1) Graph Neural Networks (GNNs) on the source graph only: GAT \cite{velivckovic2017graph}, GIN \cite{xu2018powerful}, and GCN \cite{kipf2016semi} are trained end-to-end on the source graph, allowing direct application to the target graph; (2) Graph Domain Adaptation Methods: DANE \cite{zhang2019dane}, UDAGCN \cite{wu2020unsupervised}, AdaGCN \cite{dai2022graph}, StruRW \cite{liu2023structural}, SA-GDA \cite{pang2023sa}, GRADE \cite{wu2023non}, PairAlign \cite{liupairwise}, GraphAlign \cite{huang2024can}, A2GNN \cite{liu2024rethinking}, TDSS \cite{chen2025smoothness}, DGSDA \cite{yangdisentangled}, GAA \cite{fang2025benefits} and HGDA \cite{fanghomophily} are specifically designed to address the GDA problem.

Below, we provide descriptions of the baselines used in the experiments.
\begin{itemize} [leftmargin=*,labelindent=0pt,topsep=0pt,itemsep=1pt]
	\item \textbf{ERM (GAT) , ERM (GIN) , ERM (GCN)} . These methods train using GAT (Graph Attention Networks), GIN (Graph Isomorphism Networks), and GCN (Graph Convolution Networks) under the standard Empirical Risk Minimization (ERM) framework. 
\item \textbf{DANE} employs shared-weight GCNs to generate node representations. To address distribution shifts, it leverages a least-squares generative adversarial network, which promotes alignment between the source and target distributions of node representations. 
	\item \textbf{UDAGCN} is a model-centric method that combines adversarial learning with GNNs for domain adaptation. It aims to reduce the discrepancy between source and target graphs by using an adversarial objective, promoting the learning of domain-invariant features across different domains.
	\item \textbf{AdaGCN}  is a model-centric method designed for GDA. It incorporates adversarial domain adaptation techniques along with graph convolution to align node representations across domains, using the empirical Wasserstein distance as a regularization term to minimize distribution shifts between the source and target domains.
	\item \textbf{STRURW} modifies graph data by adjusting edge weights based on domain adaptation strategies. This adjustment is informed by the data-centric GDA techniques, ensuring that the model effectively adapts to distribution shifts while preserving the structural properties of the graph.
	\item \textbf{SA-GDA} observes that nodes from the same class across different domains share similar spectral characteristics, while different classes remain spectrally separable. Based on this, it aligns source-target feature spaces in the spectral domain at the category level, avoiding confusion caused by global feature alignment. 
	\item \textbf{GRADE}  introduces a graph subtree discrepancy metric to measure distribution shifts. By connecting graph neural networks with the Weisfeiler-Lehman subtree kernel, it evaluates how node embeddings differ across domains, enabling more effective transfer under distributional variations.
	\item \textbf{PairAlign} is a method that recalibrates the influence between neighboring nodes using edge weights to address conditional structure shifts. Additionally, it adjusts the classification loss with label weights to account for label shifts, ensuring that the model can handle both structural and label distribution shifts effectively.
	\item \textbf{GraphAlign} generates a small, transferable graph from the source graph, which is then used to train a GNN under Empirical Risk Minimization (ERM). This approach focuses on leveraging smaller graphs to improve domain adaptation by aligning the distribution of node representations across domains while preserving graph structure.
	\item \textbf{A2GNN} proposes a simple yet effective GNN architecture that increases the number of propagation layers in the target graph. This increase in depth allows the model to capture complex dependencies within target domain, improving its adaptation performance in domain-shifted tasks.
	\item \textbf{TDSS} applies structural smoothing directly to the target graph to mitigate local variations. This method helps enhance model robustness by preserving consistency in node representations, which improves the transfer of knowledge across domains while reducing the impact of structural discrepancies between the source and target graphs.
	\item \textbf{DGSDA} addresses the challenge of domain adaptation for graph-structured data by disentangling the alignment of node attributes and graph topology. This framework uses Bernstein polynomial approximation to align graph spectral filters efficiently, avoiding expensive eigenvalue decomposition. By separating attribute and topology shifts, DGSDA enables the use of flexible GNNs, improving model performance and adaptability across domains. 
	\item \textbf{GAA} focuses on addressing both graph topology and node attribute shifts in Graph Domain Adaptation (GDA). Unlike traditional methods that primarily address topology discrepancies, GAA emphasizes the critical role of node attributes in aligning domain-specific graph features. The method introduces a cross-channel module that uses attention-based learning for attribute alignment, allowing the model to focus on significant attributes.
	\item \textbf{HGDA} highlights the impact of homophily distribution discrepancies between source and target graphs in GDA tasks. By investigating the homophilic and heterophilic patterns in graph nodes, HGDA proposes a novel method that uses mixed filters to capture and mitigate homophily shifts. The method separates homophilic, heterophilic, and attribute signals in graphs, enhancing domain alignment across different graph types. 
\end{itemize} 

\subsection{Implementation Details}
 We utilize Micro-F1 and Macro-F1 as evaluation metrics for our experiments. For each experimental result, we conducted five runs and calculated the average performance across these runs. Each run was executed for a total of 150 epochs. we train our model by utilizing the Adam \cite{Adam2015}  optimizer with learning rate ranging from 0.001 to 0.01. The hyperparameter search ranges are as follows: the dropout rate is in the range $[0.1, 0.5]$, the propagation steps take values from the set $\{0, 1, 10, 15\}$, the frequency number is chosen from $\{32, 64, 128, 256, 512, 1024, 2048, 4096\}$, the amplitude weight coefficient $\kappa$ is in the range $[0, 1]$.The experiments were conducted on a Linux server , running Ubuntu 18.04.6 LTS. For GPU resources, we utilized a single NVIDIA GeForce RTX 3090 graphics card with 24GB of memory. The Python libraries employed for implementing our experiments include Python 3.8, PyTorch 2.4.0, PyTorch Geometric 2.6.1, PyTorch Sparse 0.6.18, and PyTorch Scatter 2.1.2. The complete parameter space is provided in Table \ref{para-space}.
\section{Experimental Results}
\begin{table}[h]
	\centering
	\renewcommand\arraystretch{1.1}
	\resizebox{\linewidth}{!}{%
		\Large
		\begin{tabular}{l|cccccc|cc}
			\hline
			Methods
			& A $\to$ C & A $\to$ D & C $\to$ A & C $\to$ D & D $\to$ A & D $\to$ C & B1 $\to$ B2 & B2 $\to$ B1 \\ \hline \hline
			GAT \textcolor{brown}{(ICLR'18)}       & \meanstd{59.60}{2.82} & \meanstd{55.09}{1.16} & \meanstd{58.06}{1.88} & \meanstd{63.82}{1.85} & \meanstd{52.99}{2.36} & \meanstd{59.96}{2.86} &
			\meanstd{11.18}{3.84} &
			\meanstd{9.43}{3.04} \\
			
			GIN \textcolor{brown}{(ICLR'19)}        & \meanstd{62.34}{1.08} & \meanstd{53.08}{2.06} & \meanstd{61.34}{0.62} & \meanstd{64.83}{0.44} & \meanstd{51.36}{2.36} & \meanstd{63.09}{1.92} &
			\meanstd{11.44}{2.68} &
			\meanstd{9.57}{0.28} \\
			
			GCN \textcolor{brown}{(ICLR'17)}        & \meanstd{66.49}{2.21} & \meanstd{59.53}{0.44} & \meanstd{65.06}{1.38} & \meanstd{65.80}{1.82} & \meanstd{58.95}{0.57} & \meanstd{64.66}{1.01} &
			\meanstd{13.80}{1.15} &
			\meanstd{13.76}{1.33} \\
			
			\hdashline
			DANE \textcolor{brown}{(IJCAI'19)}      & \meanstd{66.89}{2.90} & \meanstd{59.09}{2.60} & \meanstd{64.04}{2.01} & \meanstd{60.34}{3.01} & \meanstd{57.98}{1.47} & \meanstd{62.37}{2.47} &
			\meanstd{25.20}{2.13} &
			\meanstd{28.37}{0.89} \\
			
			UDAGCN \textcolor{brown}{(WWW'20)}     & \meanstd{78.74}{0.55} & \meanstd{72.59}{1.05} & \meanstd{74.37}{0.35} & \meanstd{\underline{75.56}}{0.42} & \meanstd{70.10}{0.60} & \meanstd{76.09}{0.78} &
			\meanstd{28.83}{1.15} &
			\meanstd{28.95}{0.36} \\
			
			AdaGCN \textcolor{brown}{(TKDE'22)}     & \meanstd{64.15}{1.88} & \meanstd{61.97}{1.25} & \meanstd{63.22}{1.25} & \meanstd{66.43}{0.43} & \meanstd{58.99}{1.42} & \meanstd{58.27}{1.49} &
			\meanstd{20.69}{1.40} &
			\meanstd{18.73}{2.91} \\
			
			StruRW \textcolor{brown}{(ICML'23)}     & \meanstd{54.49}{1.41} & \meanstd{52.54}{0.57} & \meanstd{51.77}{2.01} & \meanstd{56.74}{2.01} & \meanstd{44.68}{1.25} & \meanstd{50.47}{1.25} &
			\meanstd{39.62}{0.77} &
			\meanstd{41.36}{0.47} \\
			
			SA-GDA \textcolor{brown}{(MM'23)} & \meanstd{72.10}{2.29} & \meanstd{65.13}{3.98} & \meanstd{71.40}{1.99} & \meanstd{\underline{70.97}}{2.78} & \meanstd{56.83}{1.90} & \meanstd{69.08}{2.86} & \meanstd{40.26}{2.31} & \meanstd{\underline{44.59}}{1.77} \\
			
			GRADE \textcolor{brown}{(AAAI'23)}     & \meanstd{71.66}{0.50} & \meanstd{63.05}{0.41} & \meanstd{68.43}{0.14} & \meanstd{69.76}{0.92} & \meanstd{56.45}{0.60} & \meanstd{66.54}{0.75} &
			\meanstd{44.46}{4.01} &
			\meanstd{42.00}{1.52} \\
			
			PairAlign  \textcolor{brown}{(ICML'24)}  & \meanstd{55.11}{1.02} & \meanstd{53.80}{0.85} & \meanstd{51.57}{1.36} & \meanstd{59.02}{0.82} & \meanstd{46.47}{1.13} & \meanstd{53.66}{0.47} &
			\meanstd{39.57}{0.85} &
			\meanstd{41.98}{0.85} \\

			GraphAlign \textcolor{brown}{(KDD'24)}  & \meanstd{71.09}{1.21} & \meanstd{65.51}{1.21} & \meanstd{61.37}{0.28} & \meanstd{71.18}{1.32} & \meanstd{62.33}{0.45} & \meanstd{65.56}{0.21} &
			\meanstd{43.14}{0.55} & \meanstd{42.29}{3.28} \\
			
			A2GNN  \textcolor{brown}{(AAAI'24)}      & \meanstd{78.06}{0.73} & \meanstd{71.31}{0.34} & \meanstd{\underline{76.41}}{0.54} & \meanstd{73.28}{0.32} & \meanstd{\underline{74.48}}{0.68} & \meanstd{76.01}{0.79} &
			\meanstd{45.42}{4.75} & \meanstd{43.14}{1.77} \\
			
			TDSS \textcolor{brown}{(AAAI'25)}      & \meanstd{\underline{79.10}}{1.64} & \meanstd{71.59}{3.08} & \meanstd{76.09}{1.06} & \meanstd{74.96}{0.75} & \meanstd{73.23}{3.09} & \meanstd{\underline{77.70}}{2.68} &
			\meanstd{\underline{48.31}}{2.00} & \meanstd{44.31}{1.89} \\
			
			DGSDA \textcolor{brown}{(ICML'25)}      & \meanstd{77.03}{0.52} & \meanstd{\underline{72.63}}{0.25} & \meanstd{71.61}{0.80} & \meanstd{75.52}{0.34} & \meanstd{71.82}{0.45} & \meanstd{76.13}{0.23} &
			\meanstd{48.01}{1.71} & \meanstd{42.06}{3.36} \\
			
			GAA \textcolor{brown}{(ICLR'25)}        & \meanstd{75.85}{0.65} & \meanstd{68.07}{1.09} & \meanstd{73.57}{0.55} & \meanstd{71.40}{0.16} & \meanstd{62.66}{0.93} & \meanstd{72.74}{0.76} &
			\meanstd{45.66}{2.12} & \meanstd{40.59}{2.04} \\
			
			HGDA \textcolor{brown}{(ICML'25)}      & \meanstd{74.28}{0.71} & \meanstd{69.03}{4.12} & \meanstd{66.98}{0.97} & \meanstd{70.59}{0.46} & \meanstd{66.47}{0.97} & \meanstd{72.85}{0.82} &
			\meanstd{45.32}{2.04} & \meanstd{43.35}{1.21} \\
			
			\hdashline
			\rowcolor{tan}
			\textbf{ADAlign (Ours)} 
			& \meanstd{\textbf{80.65}}{0.36} 
			& \meanstd{\textbf{76.00}}{1.07} 
			& \meanstd{\textbf{77.62}}{0.25} 
			& \meanstd{\textbf{77.48}}{0.35} 
			& \meanstd{\textbf{75.83}}{0.93} 
			& \meanstd{\textbf{79.95}}{0.45}
			& \meanstd{\textbf{54.89}}{0.59} 
			& \meanstd{\textbf{51.85}}{0.26} \\ \hline 
			\hline
			Methods
			& U $\to$ B & U $\to$ E & B $\to$  U & B $\to$ E & E $\to$ U & E $\to$ B & GE $\to$ EN & EN $\to$ GE \\ \hline \hline
			GAT \textcolor{brown}{(ICLR'18)}       & \meanstd{48.01}{4.03} & \meanstd{35.57}{3.78} & \meanstd{41.25}{1.57} & \meanstd{46.18}{1.23} & \meanstd{42.13}{1.72} & \meanstd{49.02}{4.50} &
			\meanstd{57.22}{0.20} & \meanstd{57.69}{1.18} \\
			
			GIN \textcolor{brown}{(ICLR'19)}      & \meanstd{23.29}{4.55} & \meanstd{22.50}{4.12} & \meanstd{28.06}{1.93} & \meanstd{26.03}{4.30} & \meanstd{28.03}{2.24} & \meanstd{26.21}{4.23} &
			\meanstd{51.96}{2.70} & \meanstd{50.32}{4.60} \\
			
			GCN \textcolor{brown}{(ICLR'17)}       & \meanstd{55.12}{5.22} & \meanstd{42.63}{1.04} & \meanstd{31.21}{2.51} & \meanstd{46.52}{1.77} & \meanstd{37.39}{4.21} & \meanstd{57.27}{3.13} &
			\meanstd{56.71}{0.41} & \meanstd{58.43}{0.47} \\
			
			\hdashline
			DANE \textcolor{brown}{(IJCAI'19)}      & \meanstd{41.23}{1.63} & \meanstd{34.24}{4.76} & \meanstd{35.73}{5.31} & \meanstd{33.57}{3.90} & \meanstd{28.26}{7.44} & \meanstd{39.94}{0.80} &
			\meanstd{48.80}{4.72} & \meanstd{49.01}{2.59} \\
			
			UDAGCN \textcolor{brown}{(WWW'20)}     & \meanstd{56.82}{2.59} & \meanstd{41.85}{0.86} & \meanstd{34.65}{1.26} & \meanstd{47.60}{1.75} & \meanstd{41.83}{1.07} & \meanstd{61.97}{4.15} &
			\meanstd{58.31}{0.90} & \meanstd{59.45}{1.33} \\
			
			AdaGCN \textcolor{brown}{(TKDE'22)}     & \meanstd{66.15}{1.38} & \meanstd{\underline{51.45}}{0.93} & \meanstd{43.89}{1.07} & \meanstd{\underline{54.92}}{0.91} & \meanstd{44.37}{0.64} & \meanstd{\underline{73.34}}{2.46} &
			\meanstd{56.68}{0.19} & \meanstd{59.86}{0.43} \\    
			
			StruRW \textcolor{brown}{(ICML'23)}    & \meanstd{62.25}{2.61} & \meanstd{37.26}{2.15} & \meanstd{40.02}{1.56} & \meanstd{36.84}{1.54} & \meanstd{38.10}{0.78} & \meanstd{52.15}{0.57} &
			\meanstd{52.76}{2.63} & \meanstd{50.51}{2.84} \\
			
				SA-GDA \textcolor{brown}{(MM'23)} & \meanstd{\underline{66.51}}{1.87} & \meanstd{50.39}{0.69} & \meanstd{47.94}{2.86} & \meanstd{53.92}{1.24} & \meanstd{45.34}{0.79} & \meanstd{64.63}{1.16} & \meanstd{54.20}{2.41} & \meanstd{57.24}{0.82} \\

			GRADE \textcolor{brown}{(AAAI'23)}     & \meanstd{56.16}{0.96} & \meanstd{46.86}{0.22} & \meanstd{38.68}{0.41} & \meanstd{53.93}{0.33} & \meanstd{42.47}{0.65} & \meanstd{60.13}{0.36} &
			\meanstd{58.36}{0.06} & \meanstd{59.62}{0.13} \\

			PairAlign  \textcolor{brown}{(ICML'24)}  & \meanstd{65.35}{2.65} & \meanstd{35.61}{0.45} & \meanstd{40.89}{2.33} & \meanstd{37.57}{3.00} & \meanstd{37.71}{0.48} & \meanstd{53.26}{1.27} &
			\meanstd{55.04}{0.14} & \meanstd{60.68}{0.28} \\
			
			GraphAlign  \textcolor{brown}{(KDD'24)}  & \meanstd{61.33}{1.21} & \meanstd{50.09}{2.33} & \meanstd{47.11}{1.17} & \meanstd{53.19}{0.78} & \meanstd{\underline{49.31}}{0.37} & \meanstd{69.97}{1.22} &
			\meanstd{48.14}{0.57} & \meanstd{59.33}{0.44} \\

			A2GNN  \textcolor{brown}{(AAAI'24)}     & \meanstd{61.08}{2.57} & \meanstd{43.51}{3.94} & \meanstd{38.60}{7.73} & \meanstd{42.80}{3.55} & \meanstd{48.39}{1.07} & \meanstd{46.76}{2.47} &
			\meanstd{54.25}{0.88} & \meanstd{51.78}{3.16} \\
			
			TDSS \textcolor{brown}{(AAAI'25)}       & \meanstd{63.36}{1.33} & \meanstd{48.17}{1.50} & \meanstd{36.46}{2.63} & \meanstd{40.38}{1.15} & \meanstd{36.55}{2.14} & \meanstd{45.27}{1.12} &
			\meanstd{55.01}{0.87} & \meanstd{53.19}{4.24} \\
			
			DGSDA \textcolor{brown}{(ICML'25)}      & \meanstd{60.28}{1.55} & \meanstd{48.20}{1.46} & \meanstd{\underline{53.76}}{0.78} & \meanstd{42.58}{1.73} & \meanstd{47.59}{0.76} & \meanstd{59.30}{0.63} &
			\meanstd{\underline{59.76}}{0.12} & \meanstd{\underline{61.58}}{0.24} \\
			
			GAA \textcolor{brown}{(ICLR'25)}        & \meanstd{59.28}{2.67} & \meanstd{50.16}{1.01} & \meanstd{50.18}{1.03} & \meanstd{42.76}{0.64} & \meanstd{47.17}{1.30} & \meanstd{55.34}{2.73} &
			\meanstd{56.88}{0.20} & \meanstd{59.45}{0.92} \\

			HGDA \textcolor{brown}{(ICML'25)}      & \meanstd{63.27}{1.94} & \meanstd{44.48}{1.96} & \meanstd{50.25}{2.51} & \meanstd{43.17}{0.45} & \meanstd{43.44}{3.93} & \meanstd{56.74}{2.06} &
			\meanstd{56.61}{0.44} & \meanstd{54.84}{2.84} \\
			
			\hdashline
			\rowcolor{tan}
			\textbf{ADAlign (Ours)} 
			& \meanstd{\textbf{71.96}}{1.40} 
			& \meanstd{\textbf{52.74}}{0.52} 
			& \meanstd{\textbf{54.47}}{0.23} 
			& \meanstd{\textbf{59.10}}{1.13} 
			& \meanstd{\textbf{51.37}}{0.75} 
			& \meanstd{\textbf{75.74}}{1.09} 
			& \meanstd{\textbf{60.77}}{0.54} 
			& \meanstd{\textbf{62.11}}{0.29}
			
			\\ \hline \hline
	\end{tabular}}
	\caption{Node classification performance (\% $\pm$ $\sigma$, Macro-F1 score) under 16 transfer scenarios. The highest scores are highlighted in \textbf{bold}, while the second-highest scores are \underline{underlined}.}
	\label{appendix_mainres}
\end{table}
\subsection{Main Experimental Results}
\label{appendix_main}
The macro-F1 scores are reported in Table~\ref{appendix_mainres}.

\subsection{Significance test (t-test) of ADAlign}
\label{t}
We conduct paired two-sided t-tests comparing ADAlign with all baselines across A$\to$C and A$\to$D. In Figure~\ref{t1} and Figure~\ref{t2} , all bars lie well above the $-\log_{2}(0.05)$ threshold, indicating statistically significant improvements ($p<0.05$) in every setting. 
\begin{figure}[!h]
	\centering
		\centering
	\setlength{\fboxrule}{0.pt}
	\setlength{\fboxsep}{0.pt}
	\fbox{
		\includegraphics[width=0.8\linewidth]{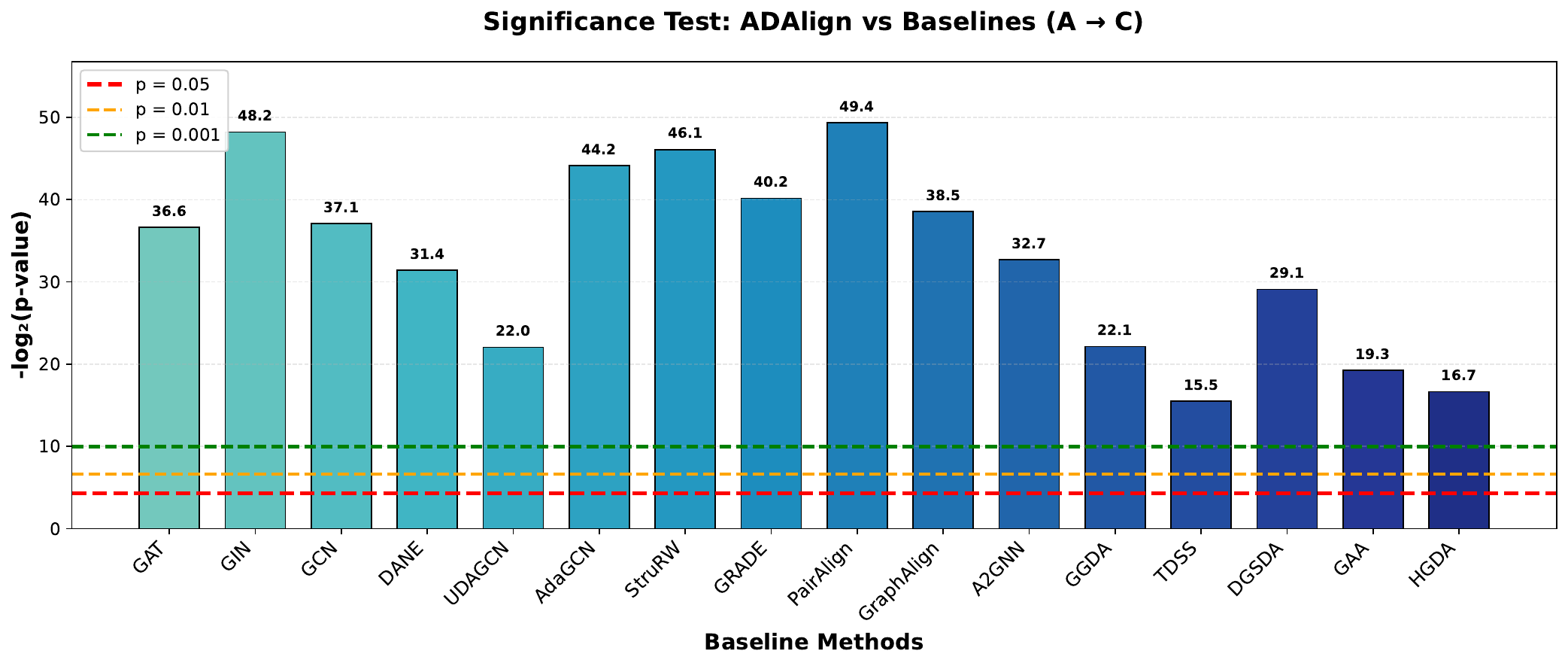}
	}
\caption{t-test on the A$\rightarrow$C task with significance level 0.05.}
	\label{t1}
\end{figure}

\begin{figure}[!h]
	\centering
	\setlength{\fboxrule}{0.pt}
	\setlength{\fboxsep}{0.pt}
	\fbox{
		\includegraphics[width=0.8\linewidth]{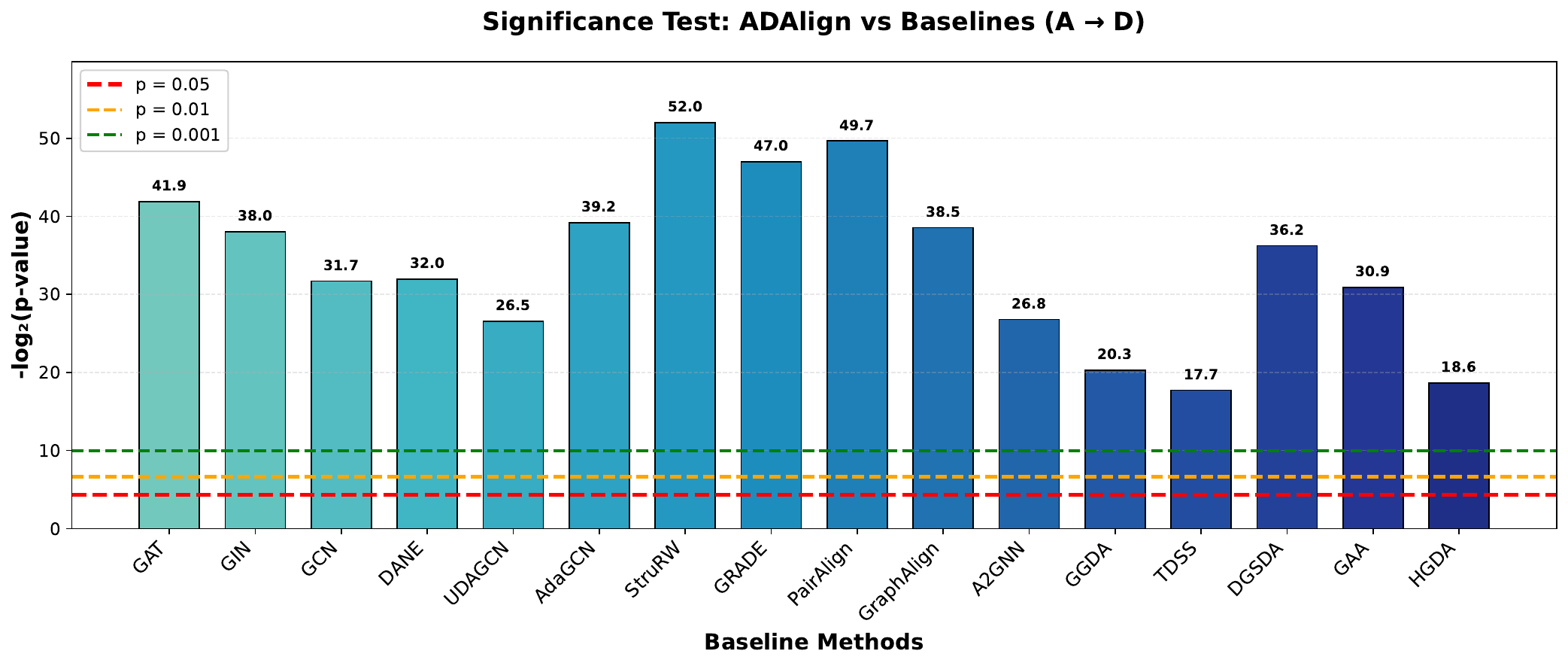}
	}
\caption{t-test on the A$\rightarrow$D task with significance level 0.05.}
	\label{t2}
\end{figure}

\subsection{Effect of GNN Backbones}
To examine the influence of different GNN backbones, we instantiated ADAlign on two widely used architectures (GCN and GAT) and evaluated it against strong baselines on two transfer scenarios. As shown in Table~\ref{backbone1} and Table~\ref{backbone2}, ADAlign consistently outperforms the corresponding source-only models and all competing methods under both backbones, often by a clear margin. These findings demonstrate that ADAlign is not tied to a particular encoder and can be reliably integrated with diverse GNN architectures.

\begin{table}[!h]
	\centering
		\renewcommand\arraystretch{1}
	\resizebox{0.9\linewidth}{!}{%
	\begin{tabular}{lcccc}
		\toprule
		Method & A$\rightarrow$C (Mi-F1) & A$\rightarrow$C (Ma-F1) & A$\rightarrow$D (Mi-F1) & A$\rightarrow$D (Ma-F1) \\
		\midrule 		\midrule
		GCN (ICLR'17)     & 70.82 $\pm$ 1.26 & 66.49 $\pm$ 2.21 & 65.05 $\pm$ 2.15 & 59.53 $\pm$ 0.44 \\
		GAA (ICLR'25)     & 73.50 $\pm$ 0.85 & 70.20 $\pm$ 1.30 & 67.30 $\pm$ 1.50 & 62.80 $\pm$ 0.90 \\
		HGDA (ICML'25)    & 75.40 $\pm$ 0.60 & 72.80 $\pm$ 0.90 & 68.65 $\pm$ 1.10 & 64.50 $\pm$ 1.15 \\
		\textbf{ADAlign (Ours)} & \textbf{76.99 $\pm$ 0.37} & \textbf{75.11 $\pm$ 0.45} & \textbf{70.11 $\pm$ 0.80} & \textbf{66.33 $\pm$ 1.33} \\
		\bottomrule
	\end{tabular}}
	\caption{Results on the GCN backbone.}
	\label{backbone1}
\end{table}

\begin{table}[!h]
	\centering
			\renewcommand\arraystretch{1}
	\resizebox{0.9\linewidth}{!}{%
	\begin{tabular}{lcccc}
			\midrule 		\midrule
		Method & A$\rightarrow$C (Mi-F1) & A$\rightarrow$C (Ma-F1) & A$\rightarrow$D (Mi-F1) & A$\rightarrow$D (Ma-F1) \\
		\midrule
		GAT (ICLR'18)     & 62.77 $\pm$ 2.24 & 59.60 $\pm$ 2.82 & 60.29 $\pm$ 1.33 & 55.09 $\pm$ 1.16 \\
		GAA (ICLR'25)     & 67.63 $\pm$ 1.42 & 64.53 $\pm$ 1.74 & 63.33 $\pm$ 1.03 & 59.26 $\pm$ 2.06 \\
		HGDA (ICML'25)    & 65.20 $\pm$ 1.80 & 62.10 $\pm$ 2.20 & 61.80 $\pm$ 1.20 & 57.30 $\pm$ 1.60 \\
		\textbf{ADAlign (Ours)} & \textbf{70.45 $\pm$ 0.95} & \textbf{69.80 $\pm$ 1.10} & \textbf{67.45 $\pm$ 0.85} & \textbf{65.90 $\pm$ 1.25} \\
		\bottomrule
	\end{tabular}}
	\caption{Results on the GAT backbone.}
	\label{backbone2}
\end{table}

\subsection{Loss Comparison}
\label{loss comparison}
To evaluate the stability of ADAlign, we track the training loss over epochs for A$\to$C and A$\to$D. As shown in Figure~\ref{xx}, the loss decreases smoothly and rapidly during the early epochs, eventually plateauing at a low level with minimal oscillations. This indicates a well-behaved and stable optimization process. Notably, both transfers exhibit nearly identical convergence trajectories, further demonstrating that ADAlign maintains stability across different domain shifts.
\begin{figure}[!h]
	\centering
	\setlength{\fboxrule}{0.pt}
	\setlength{\fboxsep}{0.pt}
	\fbox{
		\includegraphics[width=0.5\linewidth]{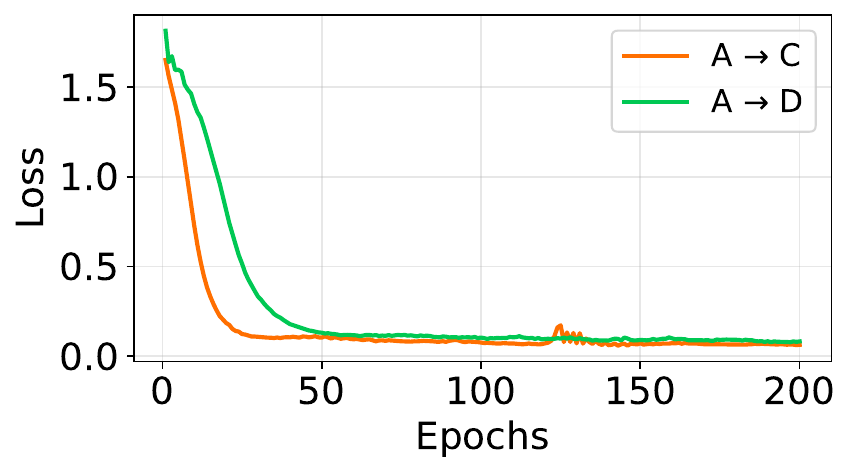}
	}
	\caption{Training loss curves of ADAlign on two tasks.}
			\label{xx}
\end{figure}

\subsection{Ablation Study}
\label{as}
\begin{figure}[ht]
	\centering
	\begin{subfigure}[t]{0.35\linewidth}
		\centering
		\includegraphics[width=\linewidth]{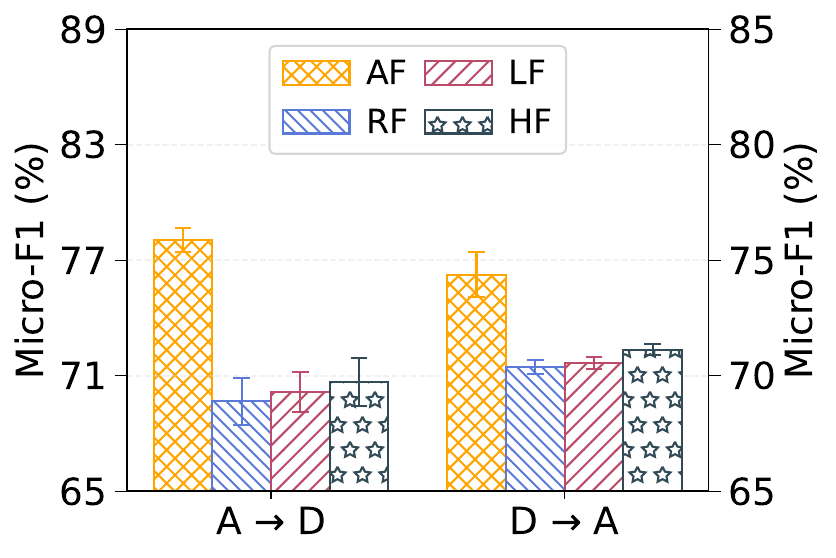}
	\end{subfigure}%
	\begin{subfigure}[t]{0.35\linewidth}
		\centering
		\includegraphics[width=\linewidth]{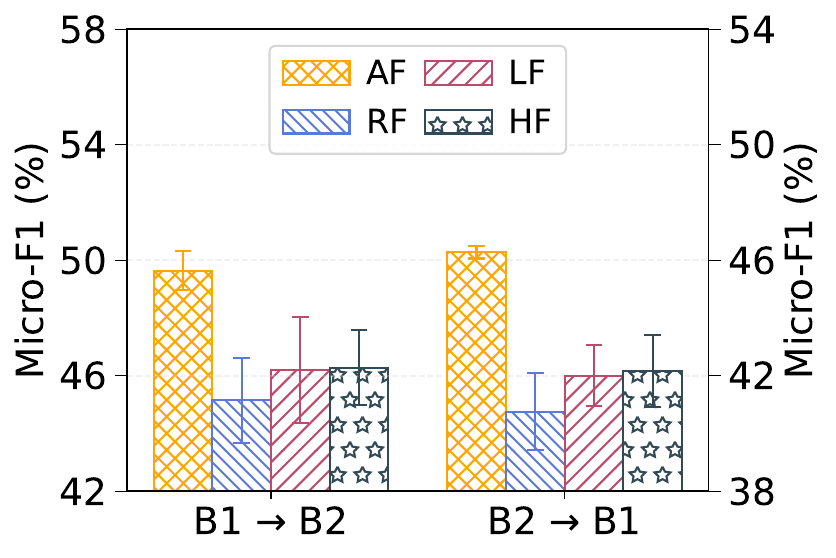}
	\end{subfigure}
	\caption{Classification Mi-F1 comparisons between ADAlign variants on four cross-domain tasks.}
	\label{Ablation}
\end{figure}

\subsection{Parameter Sensitivity Analysis}
\label{vis1}
\begin{figure*}[h!]
	\centering
	\begin{subfigure}[t]{0.245\textwidth}
		\includegraphics[width=\textwidth]{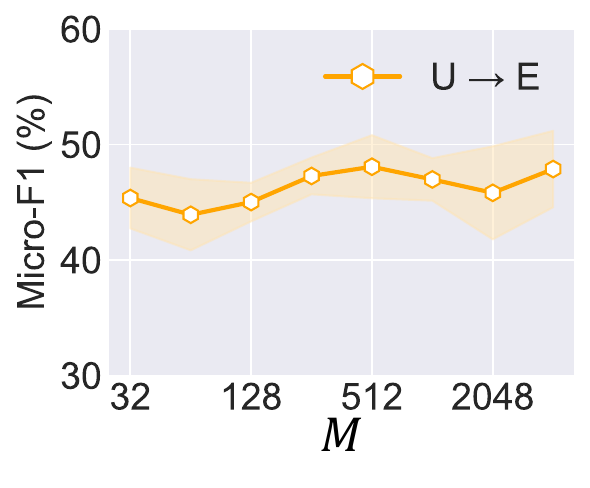}
	\end{subfigure}
	\begin{subfigure}[t]{0.245\textwidth}
		\includegraphics[width=\textwidth]{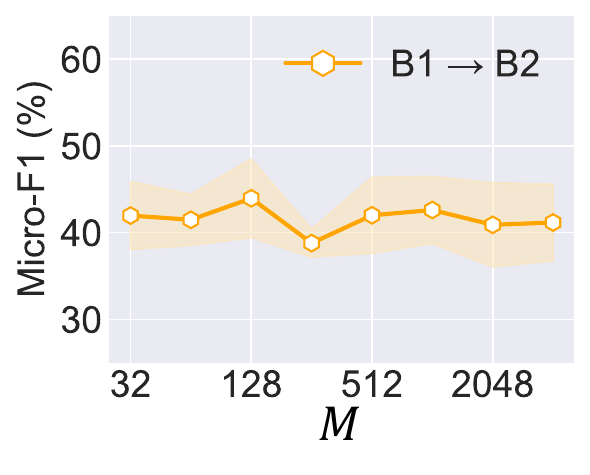}
	\end{subfigure}
	\begin{subfigure}[t]{0.245\textwidth}
		\includegraphics[width=\textwidth]{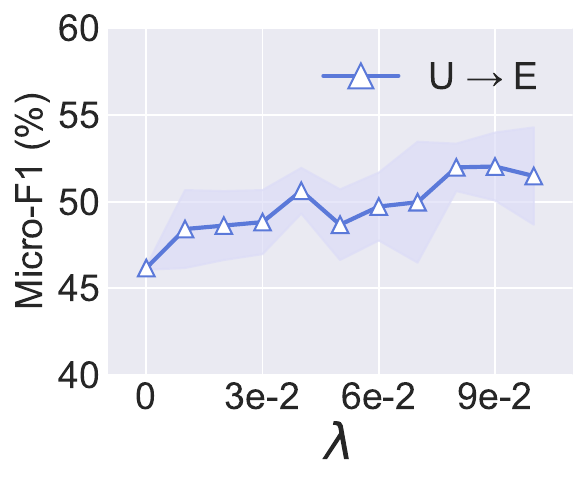}
	\end{subfigure}
	\begin{subfigure}[t]{0.245\textwidth}
		\includegraphics[width=\textwidth]{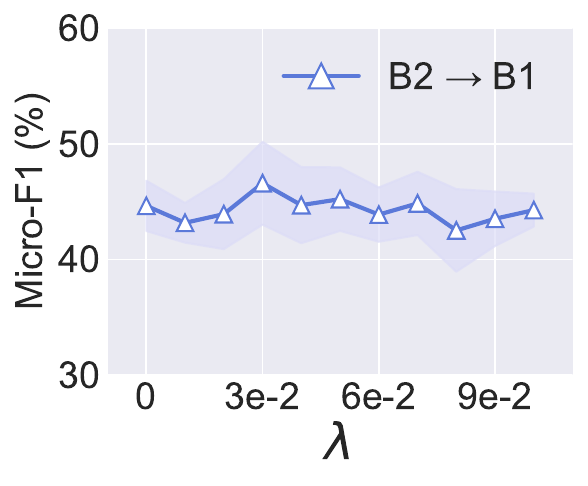}
	\end{subfigure}
	\caption{The performances of our ADAlign w.r.t varying parameters on different transfer tasks}
\end{figure*}

\subsection{Visualization of the learned frequency distributions}
\begin{figure}[!h]
	\centering
	\setlength{\fboxrule}{0.pt}
	\setlength{\fboxsep}{0.pt}
	\fbox{
		\includegraphics[width=0.3\linewidth]{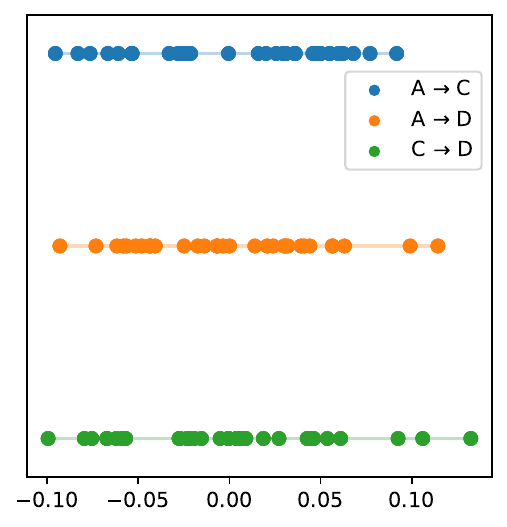}
	}
\caption{Learned frequency sampling distributions for three transfer scenarios.}

	\label{frequency distributions}
\end{figure}
In this section, we visualize the learned frequency distributions across several transfer scenarios. As shown in Figure~\ref{frequency distributions}, ADAlign learns distinct dominant frequency regions for different datasets, confirming that the adaptive sampler does not rely on a fixed or heuristic grid. Instead, it automatically emphasizes spectral components that reveal the most salient discrepancies for each source–target pair. For example, the A→C scenario concentrates more probability mass on higher-frequency components, whereas A→D and C→D favor lower or mid-range frequencies. These patterns support the claim that ADAlign tailors its spectral emphasis to the underlying structural and attribute-level shifts of each task, aligning with the motivation of NSD and its minimax training strategy.

\begin{longtable}{llll}
	\hline
	Dataset & Transfer Scenarios & Hyperparameter & Search Space \\ \hline
	\endfirsthead
	
	\hline
	Dataset & Transfer Scenarios & Hyperparameter & Search Space \\ \hline
	\endhead
	
	\endfoot
	
	\hline
	\multirow{54}{*}{Citation} 
	& \multirow{9}{*}{A $\rightarrow$ C} & learning rate & [0.001, 0.005, 0.01] \\ \cline{3-4} 
	&                          & weight decay  & [0.001, 0.005, 0.01] \\ \cline{3-4} 
	&                          & frequency number      & [ 128, 1024, 2048, 4096] \\ \cline{3-4} 
	&                          & pnums      & [0, 1, 10, 15] \\ \cline{3-4} 
	&                          & lambda & [0, 0.2, 0.4, 0.6, 0.8, 1] \\ \cline{3-4} 
	&                          & dropout ratio  & [0.1, 0.25, 0.5] \\ \cline{3-4} 
	&                          & feature dimension & 128 \\ \cline{3-4} 
	&                          & amplitude weight & $0 \sim 1$ \\ \cline{3-4} 
	&                          & epochs           & 150 \\ 
	\cline{2-4}
	& \multirow{9}{*}{A $\rightarrow$ D} & learning rate & [0.001, 0.005, 0.01] \\ \cline{3-4} 
	&                          & weight decay  & [0.001, 0.005, 0.01] \\ \cline{3-4} 
	&                          & frequency number      & [ 128, 1024, 2048, 4096] \\ \cline{3-4} 
	&                          & pnums      & [0, 1, 10, 15] \\ \cline{3-4} 
	&                          & lambda & [0, 0.2, 0.4, 0.6, 0.8, 1] \\ \cline{3-4} 
	&                          & dropout ratio  & [0.1, 0.25, 0.5] \\ \cline{3-4} 
	&                          & feature dimension & 128 \\ \cline{3-4} 
	&                          & amplitude weight & $0 \sim 1$ \\ \cline{3-4} 
	&                          & epochs           & 150 \\ 
	\cline{2-4}
	& \multirow{9}{*}{C $\rightarrow$ A} & learning rate & [0.001, 0.005, 0.01] \\ \cline{3-4} 
	&                          & weight decay  & [0.001, 0.005, 0.01] \\ \cline{3-4} 
	&                          & frequency number      & [ 128, 1024, 2048, 4096] \\ \cline{3-4} 
	&                          & pnums      & [0, 1, 10, 15] \\ \cline{3-4} 
	&                          & lambda & [0, 0.2, 0.4, 0.6, 0.8, 1] \\ \cline{3-4} 
	&                          & dropout ratio  & [0.1, 0.25, 0.5] \\ \cline{3-4} 
	&                          & feature dimension & 128 \\ \cline{3-4} 
	&                          & amplitude weight & $0 \sim 1$ \\ \cline{3-4} 
	&                          & epochs           & 150 \\ 
	\cline{2-4}
	& \multirow{9}{*}{C $\rightarrow$ D} & learning rate & [0.001, 0.005, 0.01] \\ \cline{3-4} 
	&                          & weight decay  & [0.001, 0.005, 0.01] \\ \cline{3-4} 
	&                          & frequency number      & [ 128, 1024, 2048, 4096] \\ \cline{3-4} 
	&                          & pnums      & [0, 1, 10, 15] \\ \cline{3-4} 
	&                          & lambda & [0, 0.2, 0.4, 0.6, 0.8, 1] \\ \cline{3-4} 
	&                          & dropout ratio  & [0.1, 0.25, 0.5] \\ \cline{3-4} 
	&                          & feature dimension & 128 \\ \cline{3-4} 
	&                          & amplitude weight & $0 \sim 1$ \\ \cline{3-4} 
	&                          & epochs           & 150 \\ 
	\cline{2-4}
	& \multirow{9}{*}{D $\rightarrow$ A} & learning rate & [0.001, 0.005, 0.01] \\ \cline{3-4} 
	&                          & weight decay  & [0.001, 0.005, 0.01] \\ \cline{3-4} 
	&                          & frequency number      & [128, 1024, 2048, 4096] \\ \cline{3-4} 
	&                          & pnums      & [0, 1, 10, 15] \\ \cline{3-4} 
	&                          & lambda & [0, 0.2, 0.4, 0.6, 0.8, 1] \\ \cline{3-4} 
	&                          & dropout ratio  & [0.1, 0.25, 0.5] \\ \cline{3-4} 
	&                          & feature dimension & 128 \\ \cline{3-4} 
	&                          & amplitude weight & $0 \sim 1$ \\ \cline{3-4} 
	&                          & epochs           & 150 \\ 
	\cline{2-4}
	& \multirow{9}{*}{D $\rightarrow$ C} & learning rate & [0.001, 0.005, 0.01] \\ \cline{3-4} 
	&                          & weight decay  & [0.001, 0.005, 0.01] \\ \cline{3-4} 
	&                          & frequency number      & [ 128, 1024, 2048, 4096] \\ \cline{3-4} 
	&                          & pnums      & [0, 1, 10, 15] \\ \cline{3-4} 
	&                          & lambda & [0, 0.2, 0.4, 0.6, 0.8, 1] \\ \cline{3-4} 
	&                          & dropout ratio  & [0.1, 0.25, 0.5] \\ \cline{3-4} 
	&                          & feature dimension & 128 \\ \cline{3-4} 
	&                          & amplitude weight & $0 \sim 1$ \\ \cline{3-4} 
	&                          & epochs           & 150 \\ 
	\hline
	\multirow{54}{*}{Airport} 
	& \multirow{9}{*}{U $\rightarrow$ B} & learning rate & [0.001, 0.005, 0.01] \\ \cline{3-4} 
	&                          & weight decay  & [0.001, 0.005, 0.01] \\ \cline{3-4} 
	&                          & frequency number      & [ 128, 1024, 2048, 4096] \\ \cline{3-4} 
	&                          & pnums      & [0, 1, 5, 10, 15] \\ \cline{3-4} 
	&                          & lambda & [0, 0.2, 0.4, 0.6, 0.8, 1] \\ \cline{3-4} 
	&                          & dropout ratio  & [0.1, 0.25, 0.5] \\ \cline{3-4} 
	&                          & feature dimension & 128 \\ \cline{3-4} 
	&                          & amplitude weight & $0 \sim 1$ \\ \cline{3-4} 
	&                          & epochs           & 150 \\ 
	\cline{2-4}
	& \multirow{9}{*}{U$\rightarrow$ E} & learning rate & [0.001, 0.005, 0.01] \\ \cline{3-4} 
	&                          & weight decay  & [0.001, 0.005, 0.01] \\ \cline{3-4} 
	&                          & frequency number      & [32, 64, 128, 1024, 2048, 4096] \\ \cline{3-4} 
	&                          & pnums      & [0, 1, 5,  10, 15] \\ \cline{3-4} 
	&                          & lambda & [0, 0.2, 0.4, 0.6, 0.8, 1] \\ \cline{3-4} 
	&                          & dropout ratio  & [0.1, 0.2, 0.25, 0.5] \\ \cline{3-4} 
	&                          & feature dimension & 128 \\ \cline{3-4} 
	&                          & amplitude weight & $0 \sim 1$ \\ \cline{3-4} 
	&                          & epochs           & 150 \\ 
	\cline{2-4}
	& \multirow{9}{*}{B $\rightarrow$ U} & learning rate & [0.001, 0.005, 0.01] \\ \cline{3-4} 
	&                          & weight decay  & [0.001, 0.005, 0.01] \\ \cline{3-4} 
	&                          & frequency number      & [32, 64, 128, 1024, 2048, 4096] \\ \cline{3-4} 
	&                          & pnums      & [0, 1, 5, 10, 15] \\ \cline{3-4} 
	&                          & lambda & [0, 0.2, 0.4, 0.6, 0.8, 1] \\ \cline{3-4} 
	&                          & dropout ratio  & [0.1, 0.25, 0.5] \\ \cline{3-4} 
	&                          & feature dimension & 128 \\ \cline{3-4} 
	&                          & amplitude weight & $0 \sim 1$ \\ \cline{3-4} 
	&                          & epochs           & 150 \\ 
	\cline{2-4}
	& \multirow{9}{*}{B $\rightarrow$ E} & learning rate & [0.001, 0.003, 0.005, 0.01] \\ \cline{3-4} 
	&                          & weight decay  & [0.001, 0.005, 0.01] \\ \cline{3-4} 
	&                          & frequency number      & [32, 64, 128, 1024, 2048, 4096] \\ \cline{3-4} 
	&                          & pnums      & [0, 1, 5, 10, 15] \\ \cline{3-4} 
	&                          & lambda & [0, 0.2, 0.4, 0.6, 0.8, 1] \\ \cline{3-4} 
	&                          & dropout ratio  & [0.1, 0.25, 0.5] \\ \cline{3-4} 
	&                          & feature dimension & 128 \\ \cline{3-4} 
	&                          & amplitude weight & $0 \sim 1$ \\ \cline{3-4} 
	&                          & epochs           & 150 \\ 
	\cline{2-4}
	& \multirow{9}{*}{E $\rightarrow$ U} & learning rate & [0.001, 0.005, 0.01] \\ \cline{3-4} 
	&                          & weight decay  & [0.001, 0.005, 0.01] \\ \cline{3-4} 
	&                          & frequency number      & [32, 64, 128, 1024, 2048, 4096] \\ \cline{3-4} 
	&                          & pnums      & [0, 1, 10, 15] \\ \cline{3-4} 
	&                          & lambda & [0, 0.2, 0.4, 0.6, 0.8, 1] \\ \cline{3-4} 
	&                          & dropout ratio  & [0.1, 0.25, 0.5] \\ \cline{3-4} 
	&                          & feature dimension & 128 \\ \cline{3-4} 
	&                          & amplitude weight & $0 \sim 1$ \\ \cline{3-4} 
	&                          & epochs           & 150 \\ 
	\cline{2-4}
	& \multirow{9}{*}{E $\rightarrow$ B} & learning rate & [0.001, 0.003, 0.005, 0.01] \\ \cline{3-4} 
	&                          & weight decay  & [0.001, 0.005, 0.01] \\ \cline{3-4} 
	&                          & frequency number      & [32, 64, 128, 1024, 2048, 4096] \\ \cline{3-4} 
	&                          & pnums      & [0, 1, 10, 15] \\ \cline{3-4} 
	&                          & lambda & [0, 0.2, 0.4, 0.6, 0.8, 1] \\ \cline{3-4} 
	&                          & dropout ratio  & [0.1, 0.25, 0.5] \\ \cline{3-4} 
	&                          & feature dimension & 128 \\ \cline{3-4} 
	&                          & amplitude weight & $0 \sim 1$ \\ \cline{3-4} 
	&                          & epochs           & 150 \\ 
	\hline
	\multirow{18}{*}{Blog} 
	& \multirow{9}{*}{B1 $\rightarrow$ B2} & learning rate & [0.001, 0.005, 0.01] \\ \cline{3-4} 
	&                          & weight decay  & [0.001, 0.005, 0.01] \\ \cline{3-4} 
	&                          & frequency number      & [32, 64, 128, 1024, 2048, 4096] \\ \cline{3-4} 
	&                          & pnums      & [0, 1, 10, 15] \\ \cline{3-4} 
	&                          & lambda & [0, 0.2, 0.4, 0.6, 0.8, 1] \\ \cline{3-4} 
	&                          & dropout ratio  & [0.1, 0.25, 0.5] \\ \cline{3-4} 
	&                          & feature dimension & 128 \\ \cline{3-4} 
	&                          & amplitude weight & $0 \sim 1$ \\ \cline{3-4} 
	&                          & epochs           & 150 \\ 
	\cline{2-4}
	& \multirow{9}{*}{B2 $\rightarrow$ B1} & learning rate & [0.001, 0.005, 0.01] \\ \cline{3-4} 
	&                          & weight decay  & [0.001, 0.005, 0.01] \\ \cline{3-4} 
	&                          & frequency number      & [32, 64, 128, 1024, 2048, 4096] \\ \cline{3-4} 
	&                          & pnums      & [0, 1, 10, 15] \\ \cline{3-4} 
	&                          & lambda & [0, 0.2, 0.4, 0.6, 0.8, 1] \\ \cline{3-4} 
	&                          & dropout ratio  & [0.1, 0.25, 0.5] \\ \cline{3-4} 
	&                          & feature dimension & 128 \\ \cline{3-4} 
	&                          & amplitude weight & $0 \sim 1$ \\ \cline{3-4} 
	&                          & epochs           & 150 \\ 
	\hline
	\multirow{18}{*}{Twitch} 
	& \multirow{9}{*}{DE $\rightarrow$ EN} & learning rate & [0.001, 0.005, 0.01] \\ \cline{3-4} 
	&                          & weight decay  & [0.001, 0.005, 0.01] \\ \cline{3-4} 
	&                          & frequency number      & [32, 64, 128, 1024, 2048, 4096] \\ \cline{3-4} 
	&                          & pnums      & [0, 1, 10, 15] \\ \cline{3-4} 
	&                          & lambda & [0, 0.2, 0.4, 0.6, 0.8, 1] \\ \cline{3-4} 
	&                          & dropout ratio  & [0.1, 0.25, 0.5] \\ \cline{3-4} 
	&                          & feature dimension & 128 \\ \cline{3-4} 
	&                          & amplitude weight & $0 \sim 1$ \\ \cline{3-4} 
	&                          & epochs           & 150 \\ 
	\cline{2-4}
	& \multirow{9}{*}{EN $\rightarrow$ DE} & learning rate & [0.001, 0.005, 0.01] \\ \cline{3-4} 
	&                          & weight decay  & [0.001, 0.005, 0.01] \\ \cline{3-4} 
	&                          & frequency number      & [32, 64, 128, 1024, 2048, 4096] \\ \cline{3-4} 
	&                          & pnums      & [0, 1, 10, 15] \\ \cline{3-4} 
	&                          & lambda & [0, 0.2, 0.4, 0.6, 0.8, 1] \\ \cline{3-4} 
	&                          & dropout ratio  & [0.1, 0.25, 0.5] \\ \cline{3-4} 
	&                          & feature dimension & 128 \\ \cline{3-4} 
	&                          & amplitude weight & $0 \sim 1$ \\ \cline{3-4} 
	&                          & epochs           & 150 \\ 
	\hline
			\caption{Parameter search space list.} \\
					\label{para-space}
\end{longtable}

\end{document}